\def\eqref#1{equation~\ref{#1}}
\def\1{\bm{1}}
\DeclareMathAlphabet{\mathsfit}{\encodingdefault}{\sfdefault}{m}{sl}
\SetMathAlphabet{\mathsfit}{bold}{\encodingdefault}{\sfdefault}{bx}{n}
\newcommand{\R}{\mathbb{R}}
\let\classAND\AND
\let\AND\relax
\let\AND\classAND
\newtheorem{theorem}{Theorem}
\newtheorem{lemma}{Lemma}
\newtheorem{proposition}{Proposition}
\title{DFORD: Directional Feedback based Online Ordinal\\ Regression Learning}
\author{\name Naresh Manwani \email naresh.manwani@iiit.ac.in\\
      \addr Machine Learning Lab, Kohli Center on Intelligent Systems,
      IIIT Hyderabad, India
      \AND
      \name M Elamparithy \email elamparithy.m@research.iiit.ac.in \\
      \addr IIIT Hyderabad, India
      \AND
      \name Tanish Taneja \email tanish.taneja@research.iiit.ac.in\\
      \addr Machine Learning Lab, Kohli Center on Intelligent Systems,
      IIIT Hyderabad, India}
\def \X {\mathcal{X}}
\def \Y {\mathcal{Y}}
\def \xx {\mathbf{x}}
\def \I {\mathbb{I}}
\def \R {\mathbb{R}}
\def \ww {\mathbf{w}}
\def \uu {\mathbf{u}}
\def \thetaa  {\mbox{\boldmath $\theta$}}
\def \I {\mathbb{I}}
\def \zero {\mathbf{0}}
\def \Ex {\mathbb{E}}
\def \yt {\tilde{y}}
\def \H {\mathcal{H}}
\def \taut {\tilde{\tau}}
\newenvironment{proof}{\paragraph{Proof:}}{\hfill$\square$}
\begin{document}

\maketitle

\begin{abstract}
In this paper, we introduce directional feedback in the ordinal regression setting, in which the learner receives feedback on whether the predicted label is on the left or the right side of the actual label. This is a weak supervision setting for ordinal regression compared to the full information setting, where the learner can access the labels. We propose an online algorithm for ordinal regression using directional feedback. The proposed algorithm uses an exploration-exploitation scheme to learn from directional feedback efficiently. Furthermore, we introduce its kernel-based variant to learn non-linear ordinal regression models in an online setting. We use a truncation trick to make the kernel implementation more memory efficient. The proposed algorithm maintains the ordering of the thresholds in the expected sense. Moreover, it achieves the expected regret of $\mathcal{O}(\log T)$. We compare our approach with a full information and a weakly supervised algorithm for ordinal regression on synthetic and real-world datasets. The proposed approach, which learns using directional feedback, performs comparably (sometimes better) to its full information counterpart.
\end{abstract}

\section{Introduction}

Ordinal Regression is a supervised learning problem that involves predicting the labels of data points that belong to ordered categories. For example, a movie rating/review system is based on an ordinal scale, usually from 1 to 5 stars, describing how good a movie is. In ordinal regression, there is an ordering relationship between the classes. Applications of ordinal regression include information retrieval \cite{10.1145/957013.957144}, collaborative filtering \cite{koren2013collaborative}, age estimation \cite{CAO2020325}, psychology \cite{ORPsych,doi:10.1177/2515245918823199}, healthcare \cite{doyle2014predicting}, and many more.

Although there are different techniques to model data with labels on an ordinal scale, there is a more significant issue of label availability and reliability, where many labels result from human feedback and preferences. 
For example, consider the Likert scale \cite{likert1932technique,likertSullivan}, which is frequently used in questionnaires and has categories that indicate the level of agreement or disagreement. A typical Likert scale will have categories like - \textit{strongly disagree, disagree, neither agree nor disagree, agree, strongly agree}. In situations like these, feedback on the labels might not be consistent from person to person. Even though the labels “agree” and “strongly agree” are different, it depends on an individual how they distinguish between these two categories. Ordinal labels, especially relating to human feedback, might not always be precise. 
For such situations, using a partial label can alleviate the ambiguous nature of human feedback, i.e., instead of an exact label, it is suggested that the feedback is less than or greater than a certain ordinal level. We call this type of partial label {\bf directional feedback}. To illustrate the importance of directional feedback, we take a look at the Hospital Anxiety and Depression Scale (HADS) \cite{https://doi.org/10.1111/j.1600-0447.1983.tb09716.x} where the scores (ranging between 0 to 21) obtained from a questionnaire are divided into an ordinal scale: Normal '$(<7)$’; ‘Borderline $(8-10)$’ and ‘Clinical depression (or anxiety) $(11+ )$.
Due to their relevance in real-life situations, data with such partial labels exists in larger quantities than data with the exact label. These types of semi-supervised settings are more prevalent. 

This paper proposes a new learning approach for ordinal regression, {\bf learning from directional feedback}. Note that this setting involves label uncertainty, as we only know whether the label predicted by the learner is below the actual label. To learn from directional feedback, we use an exploration-exploitation scheme to handle the label uncertainty in this paper. The main contributions of this paper are as follows.
\begin{itemize}
\item We propose a new weak supervision setting for ordinal regression called {\bf learning from directional feedback}.
    \item We propose an online algorithm for learning from the directional feedback, which we call {\bf DFORD}. We propose both linear and kernel versions of the DFORD algorithm. DFORD algorithm maintains the ordering of thresholds in the expected sense after every trial. 
    \item We show that the DFORD achieved expected regret of $\mathcal{O}(\ln T)$. 
    \item We provide extensive experimental results to show the effectiveness of the proposed approach and compare it with a full information-based baseline approach, as well as with another weak supervision-based approach.
\end{itemize}

\section{Related Work}

\subsection{Ordinal Regression}
There are two broad categories of approaches for learning ordinal regression. One way is to decompose the labels into several binary pairs and model them using classifiers. The second approach, known as threshold modelling, learns a set of thresholds closest to accurate ordering.
PRank, \cite{Crammer:2001} is an online ordinal regression algorithm that achieves a mistake bound in the order of $O(\sqrt{T})$.
\cite{Herbrich1999SupportVL} and \cite{Chu:2005} use a support vector learning algorithm to model the problem using binary decomposition and multi-thresholding models, respectively.
\cite{4633963,CAO2020325} uses a neural network to learn a function that maps the input vector to a probability vector, which estimates the number of categories the input vector belongs to. \cite{Garg2019RobustDO} proposes a label noise robust approach for deep ordinal regression.
\cite{article} proposes a projection algorithm combined with a regression approach. The final classifier combines pairwise class distances (PCD) and a regressor.
\cite{6548013} uses an ordinal ensemble algorithm that decomposes the problem into more straightforward classification tasks.
A Gaussian processes-based approach for ordinal regression is proposed in \cite{Chu:2005:GPO:1046920.1088707}. 
\cite{unknown} transforms the ordinal regression problem into a structured classification task, which is solved using conditional random fields. 
\cite{LI2018294} proposes an incremental Bayesian approach to address scalability issues in linearly non-separable datasets.

\subsection{Ordinal Regression Using Interval Labels}
Interval labels are a weak supervision setting in the context of ordinal regression \cite{pmlr-v39-antoniuk14}. In this setting, for every example in the training set, an interval label of the form $[y_l,y_r]=\{y_l,y_l+1,\ldots,y_r\}$ (assuming $y_l\leq y_r$) is given instead of the actual label $y$. It is further assumed that the actual label $y$ lies in the interval label set $[y_l,y_r]$. A large margin-based batch algorithm is proposed in \cite{pmlr-v39-antoniuk14}. Online algorithms for ordinal regression using interval labels are proposed in \cite{10.1145/3297001.3297011,8867949}. These algorithms enjoy a regret bound of $\mathcal{O}(\sqrt{T})$ for $T$ rounds.
The directional feedback setting proposed in this paper is entirely different from the interval label setting.

\section{Preliminaries}
\label{sec:prelims}
\paragraph{\bf Ordinal Regression}
Let $\X \subseteq \R^d$ be the instance space. Let $\Y=[K]$ be the ordered label space. Modeling ordinal regression requires a function $f:\mathcal{X}\rightarrow \mathbb{R}$ and $(K-1)$ thresholds $\thetaa=[\theta_1\;\cdots\;\theta_{K-1}]^\top$. To maintain the class order, we must ensure $\theta_1 \leq \;\ldots \;\leq \theta_{K}$. We assume $\theta_K=\infty$. For an example $\xx$, ordinal regression predicts the class label $h:\X\rightarrow \Y$ as follows
\begin{align*}
h(\xx) & = 1+\sum_{k=1} ^{K} \I_{\{f(\xx)>\theta_k\}}= \min_{i\in [K]} \{i: f(\xx) - \theta_i \leq 0\}
\end{align*}
Thus, the classifier splits the real line into $K$ consecutive intervals using thresholds $\theta_1,\ldots,\theta_{K-1}$ and then decides the class label based on which interval corresponds to $f(\xx)$.
\paragraph{\bf Linear Ordinal Regression Model} A linear ordinal regression model assumes $f$ as a linear function of $\xx$. Thus, $f(\xx)=\ww\cdot\xx$,
where $\ww \in \R^d$. Let $\mathbf{u}=(\ww,\thetaa)$ denote the combined parameter vector for linear ordinal regression.
\paragraph{\bf Kernel Ordinal Regression}
Kernel ordinal regression learns a nonlinear function $f:\mathcal{X}\rightarrow \mathbb{R}$ and $(K-1)$ ordered thresholds $\theta_1\leq \ldots\leq\theta_{K-1}$. $f$ can be modelled using the kernel function in the following way. Let $\mathcal{H}$ be a reproducing kernel Hilbert space (RKHS) \cite{10.5555/559923}. Thus, there exist a kernel $k:\mathcal{X}\times \mathcal{X}\rightarrow \mathbb{R}$ and dot product $\langle .,.\rangle_{\mathcal{H}}$ such that
(a) $k(.,.)$ has reproducing property (i.e., $\langle f,k(\mathbf{x},.)\rangle=f(\xx)$ and (b) $\mathcal{H}$ is closure of the span of all $k(\xx,.)$ with $\xx\in {\mathcal{X}}$. Thus, all $f\in \mathcal{H}$ can be represented as $f=\sum_{\xx_i \in \mathcal{X}}\alpha_i k(\xx_,.) $. The norm induced by the inner product $\langle .,.\rangle_{\mathcal{H}}$ is $\Vert f\Vert_{\mathcal{H}}=\langle f,f\rangle_{\mathcal{H}}^{\frac{1}{2}}$.

\paragraph{\bf Loss Function for Ordinal Regression}
The discrepancy between the predicted and actual labels can be measured using the mean absolute error (MAE) loss described below. 
\begin{equation}
\label{eq:MAE_Loss}
L^{A} (f,\thetaa,\xx,y) = \sum_{i=1}^{y-1} \I_{\{f(\xx) < \theta_i\}} + \sum_{i=y} ^{K} \I_{\{f(\xx) \geq \theta_i\}}
\end{equation}
This loss function takes value $0$ whenever $\theta_{y-1}\leq f(\xx)\leq \theta_{y}$. However, this loss function is discontinuous.
A convex surrogate of this loss function is as follows:
\begin{align}
L^{H}(f,\thetaa,\xx,y) &= \sum_{i=1}^{K} [-z_i(f(\xx)-\theta_i)]_+ \label{surrogate_loss}
\end{align}
where $[a]_+=\max(0,a)$ and $H$ stands for the hinge. Note that this loss function implicitly forces the ordering constraints of the thresholds $\theta_i$s \cite{Chu:2005}. {\bf Labels} $z_i,\;i\in[K]$ are defined as follows.
\begin{eqnarray}
\label{eq:dummy_labels}
z_i = \mathbb{I}[i\in \{1,\ldots, y-1\}]-\mathbb{I}[ i\in \{y,\ldots, K\}]
\end{eqnarray}
Loss $L^{H}(f,\thetaa,\xx,y)$ becomes zero only when $z_i(f(\xx)-\theta_i) \geq 0,\;\forall i \in [K]$.
The regularized loss for an example $\xx$ is defined as follows.
\begin{align}
\label{eq:IMC-Reg}
    L^{H}_{reg}(f,\thetaa,\xx,y) &= \frac{\lambda\Vert f^2 + \thetaa^2\Vert}{2} +\sum_{i=1}^{K} [-z_i(f(\xx) - \theta_i)]_+ 
\end{align}
When $f$ is linear, then $\Vert f\Vert^2=\Vert \ww\Vert^2$. One can see that $L^{H}_{reg}(f,\thetaa,\xx,y)$ is a strongly convex function of $(f,\thetaa)$. 

\paragraph{\bf Online Learning Setting}
In this paper, we are interested in online learning, in which examples are presented to the algorithm one at a time. Thus, the algorithm produces a sequence of ordinal regression models $(f^1,\thetaa^1),\cdots,(f^T,\thetaa^T)$ in $T$ rounds. The objective here is to minimize the sum of losses incurred in each round, which is $\sum_{t=1}^TL^{H}_{Reg}(f^t,\thetaa^t,\xx^t,y^t)$. Regret of an online ordinal regression algorithm is defined as the difference between the cumulative loss of the online algorithm and cumulative loss achieved by an optimal batch algorithm, for examples, $(\xx^1,y^1),\cdots,(\xx^T,y^T)$. Thus, regret $\mathcal{R}_T$ is written as:
\begin{align}
    \label{eq:regret-full-info}
\mathcal{R}_T=\sum_{t=1}^TL_{Reg}^{H}(f^t,\thetaa^t,\xx^t,y^t)-\min_{f,\thetaa}\sum_{t=1}^TL_{reg}^{H}(f,\thetaa,\xx^t,y^t)
\end{align}
 \paragraph{\bf Interval Label For Ordinal Regression} In this setting, we don't know the exact labels of the examples. Instead, we know the interval label is $\{y_l,y_{l}+1,\ldots,y_r\}$. It is assumed that the actual label $y\in\{y_l,y_{l}+1,\ldots,y_r\}$.
Here, we use interval-insensitive MAE loss, which is described as follows:  
\begin{align*}
L^{IA}(f,\boldsymbol{\theta},\mathbf{x},y_l,y_r)=\sum_{i=1}^{y_l-1}\mathbb{I}[f(\mathbf{x})<\theta_i]+\sum_{i=y_r}^{K}\mathbb{I}[f(\mathbf{x})\geq \theta_i]
\end{align*}
Here, we do not consider any loss for the labels $\{y_l,y_l+1,\ldots,y_r-1\}$. Note that this loss gives equal importance to each label in the interval. A hinge loss-based surrogate for this loss is described as follows \cite{10.1145/3297001.3297011}.
\begin{align}
L^{IH}(f,\thetaa,\xx,y) &= \sum_{i=1}^{K} [-z_i(f(\xx)-\theta_i)]_+ \label{surrogate_loss-interval}
\end{align}
where $z_i=\mathbb{I}[i<y_l]
        -\mathbb{I}[y_r\leq i\leq K]$.


\section{Directional Feedback and Label Uncertainty}In the standard setting, we assume that we receive the actual label $y$ for each example. However, in many situations, we don't observe the actual label $y$. We only get to know the relative position of predicted label $\tilde{y}$ concerning the actual label $y$. In other words, we only get the signal whether $\tilde{y}<y$. If directional feedback $\tilde{y}<y$, then the actual label lies in the set $\{\tilde{y}+1,\ldots,K\}$. Similarly, if $\tilde{y}\geq y$, then the actual label lies in the set $\{1,\ldots,\hat{y}\}$. Thus, each directional feedback results in a candidate label set. The actual label can be one among the candidate labels. This is a kind of weak supervision setting for ordinal regression. The goal is to learn ordinal regression using directional feedback. 

\subsection{Efficient Exploration of Labels Under Directional Feedback}
Given that we only get directional feedback, we first need to decide the label for which we should ask directional feedback. Let the model outputs the label $\hat{y}=\min_{i\in [K]}\big{\{}i\;:\;f(\xx)-\theta_i<0 \big{\}}$, then asking directional feedback for $\hat{y}$ seems the best choice. However, in this way, we do not explore other labels different from $\hat{y}$. Motivated by the explore-exploit idea in adversarial multi-arm bandits, we assign nonzero probabilities for exploring all labels. While doing so, we ensure that $\hat{y}$ is assigned the highest probability, and it decreases linearly on both sides as we move away from $\hat{y}$. 

At round $t$, we use a mixture of two probability distributions $P_1^t$ and $P_2^t$ as the label distribution. Thus,
\begin{align}
\label{eq:sampling-prob}
P^t=(1-\gamma)P^t_1+\gamma P_2^t,
\end{align}
where $P^t_1(i)=\mathbb{I}[i=\hat{y}^t],i\in[K]$. Here, distribution $P_1^t$ assigns probability $1$ to $\hat{y}^t$ and zero to other labels. Thus, $P_1^t$ is used to exploit the best possible choice in hand. On the other hand, $P_2^t$ assigns nonzero probabilities to all the labels. Thus, $P_2^t$ acts as an exploration distribution. We give weight $(1-\gamma)$ to exploitation and weight $\gamma$ to exploration. For $P_2^t$, we use a distribution which assigns maximum probability to $\hat{y}^t$ and it symmetrically decreases on both sides of $\hat{y}^t$. For simplicity, in this paper, we use a $P_2^t$ which linearly decreases on both sides of $\hat{y}^t$ with the same slope. Such a $P_2^t$ is defined as follows.
\begin{equation}
\label{eq:exploration-single}
P_2^t(i)\;=\;\frac{1+d_{\max}^t-\lvert i-\hat y^t\rvert}{Z^t}\,,\qquad i\in[K],
\end{equation}
where $d_{\max}^t := \max\{\hat y^t,\; K-\hat y^t\}$ and
the $Z^t$ is the normalization factor found as
\begin{align}
Z^t
&= K+K\,d_{\max}^t - \sum_{i=1}^K \lvert i-\hat y^t\rvert =K+ K\,d_{\max}^t
   - \frac{(\hat y^t-1)\hat y^t}{2}
   - \frac{(K-\hat y^t)(K-\hat y^t+1)}{2}. \label{eq:Z-general}
\end{align}
Equivalently, in closed form,
\[
Z^t=
\begin{cases}
(2K+1)\hat y^t-(\hat y^t)^2-\dfrac{K}{2}(K-1), & \text{if } 2\hat y^t\ge K,\\[6pt]
\dfrac{K(K+1)}{2}-\hat y^t(\hat y^t-1), & \text{if } 2\hat y^t< K.
\end{cases}
\]
It can be verified that $P_2^t(i)\ge 0$ for all $i$ and $\sum_{i=1}^K P_2^t(i)=1$.
  For example, Figure \ref{fig:C2} illustrates the discrete probability distribution for $K$=7 for two cases: (a) $\hat{y} = 5$ and (b) $\hat{y} = 2$.
\begin{figure}[h]
\centering
    \includegraphics[width=8cm]{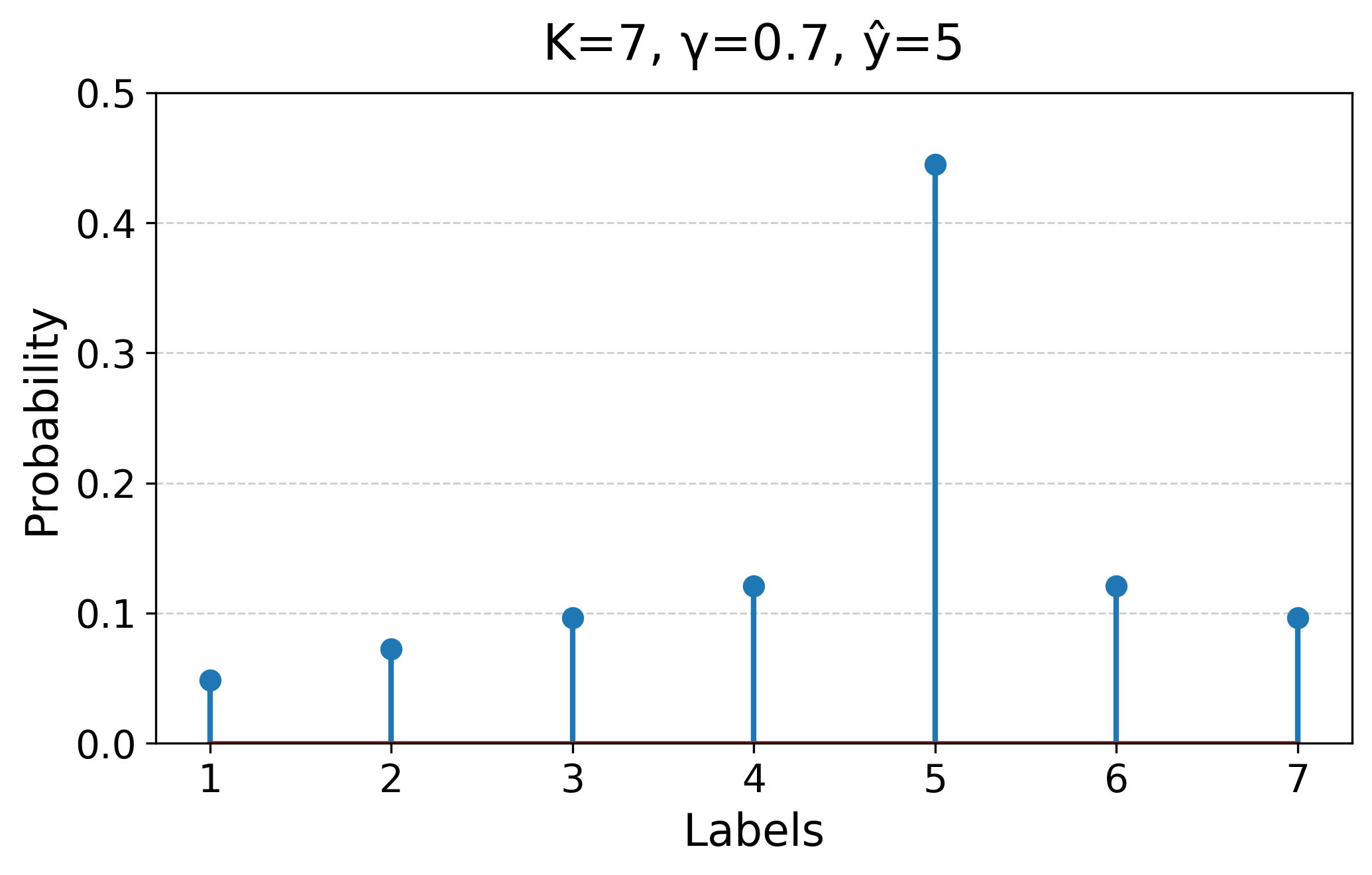}
    \includegraphics[width=8cm]{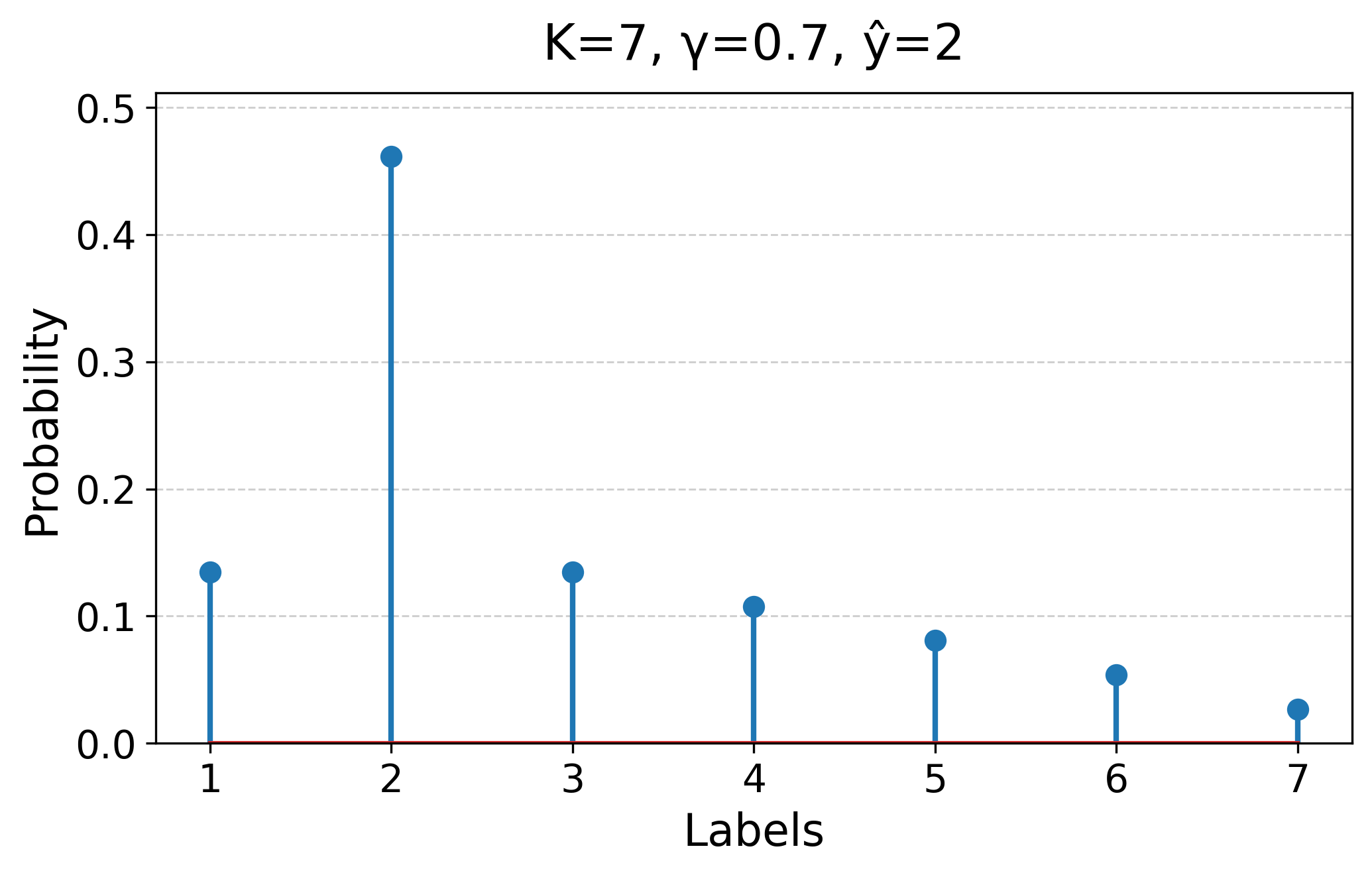}
    \caption{Example of label probability distribution for $K = 7$ and $\gamma=0.7$. (a) Case 1: $\hat{y} = 5$ and (b) Case 2: $\hat{y} = 2$. }
    \label{fig:C2}
\end{figure}

\subsection{Unbiased Estimators of Labels $z_i^t$} At round $t$, let $f^t$ and $\thetaa^t$ be the model parameters. Using these parameters, the model outputs label $\hat{y}^t=  \min_{i\in [K]} \{i: f^t(\xx^t) - \theta_i^t\leq 0\}$ for example $\xx^t$. However, we cannot compute the loss $L_{Reg}^{H}$ and its derivatives as we do not have the actual label $y^t$. Thus, we also do not have $z_i^t,\;i\in[K]$ (defined in eq.(\ref{eq:dummy_labels})).
Thus, we must define unbiased estimators $\tilde{z}_i^t$ of $z_i^t$ for every $i\in [K]$.
Now, the
algorithm samples a label $\tilde{y}^t$ using distribution $P^t(i)$. Algorithm predicts this label $\tilde{y}^t$ and receives the directional feedback $d_t=\mathbb{I}[\tilde{y}^t<y^t]$. Using $d_t$, we define $\tilde{z}_i^t,\;i\in [K]$ as follows. 
\begin{align}
  \label{eq:unbiased-estimator-z-1} \tilde{z}_i^t &=\frac{(2d_t-1)\mathbb{I}[i=\tilde{y}^t]}{P^t(i)}=\begin{cases}
        \frac{1}{P^t(\tilde{y}^t)}(2d_t-1),&i=\yt^t\\
        0,& i\neq \yt^t
    \end{cases}
\end{align}
Taking expectation of $\tilde{z}_i^t$ with respect to $P^t$, we get
    \begin{align*}
   & \mathbb{E}_{P^t}[\tilde{z}_i^t]=\mathbb{E}_{P^t}\left[\frac{1}{P^t(i)}\left(d_t\mathbb{I}[i=\tilde{y}^t]-(1-d_t)\mathbb{I}[i=\tilde{y}^t]\right)\right]\\
    &\quad\quad\quad=\sum_{j=1}^{K}\frac{P^t(j)}{P^t(i)}\mathbb{I}[j<y^t]\mathbb{I}[i= j]-\sum_{j=1}^{K}\frac{P^t(j)}{P^t(i)}\mathbb{I}[j\geq y^t]\mathbb{I}[i= j]\\
    &\quad\quad\quad=\mathbb{I}[i<y^t]-\mathbb{I}[i \geq y^t]=z_i^t.
    \end{align*}
Thus, $\tilde{z}_i^t$ is an unbiased estimator of $z_i^t$. 
\paragraph{\bf Unbiased Estimator of $\tau_i^t=z_i^t \I[z_i^t(\ww^t.\xx^t - \theta_i^t)<0]$:} Gradient of $L_{Reg}^{H}$ requires to know $\tau_i^t$. However, $\tau_i^t$ is a function of $z_i^t$ but can only access its unbiased estimator $\Tilde{z}_i^t$. Thus, we define an estimator $\taut_i^t$ for $\tau_i^t$ using $\tilde{z}_i^t$ as follows.
\begin{align}
\label{eq:unbiased-estimator-tau-1}
    \tilde{\tau}_i^t &= \tilde{z}_i^t\mathbb{I}[\tilde{z}_i^t(f^t(\xx^t) -\theta_i^t)\leq 0]
\end{align} 
\begin{theorem}\label{ub-est}
$\tilde{\tau}_i^t$ is an unbiased estimator of $\tau_i^t$. Thus, for $i=1\ldots K$, we have $\mathbb{E}[\tilde{\tau}_i^t]=\tau_i^t$.
\label{unbiased}
\end{theorem}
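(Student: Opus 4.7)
The plan is to exploit a simple but crucial sign-matching property of the estimator $\tilde z_i^t$: it is nonzero only when $i=\tilde y^t$, and on that event its sign coincides with that of the true label $z_i^t$. Once this is established, the indicator appearing inside $\tilde\tau_i^t$ can be rewritten without reference to the randomness in $\tilde y^t$, after which the theorem reduces to the already-proved unbiasedness of $\tilde z_i^t$.

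First, I would condition on everything except the random label draw at round $t$ (so $\xx^t,y^t,f^t,\thetaa^t$ are held fixed) and do a case analysis on $i=\tilde y^t$ versus $i\neq \tilde y^t$. In the case $i\neq \tilde y^t$, equation (\ref{eq:unbiased-estimator-z-1}) gives $\tilde z_i^t=0$, so $\tilde \tau_i^t=0$ deterministically. In the case $i=\tilde y^t$, I would observe that $d_t=\mathbb{I}[\tilde y^t<y^t]=\mathbb{I}[i<y^t]$, hence
\[
\tilde z_i^t=\frac{2d_t-1}{P^t(i)}=\frac{\mathbb{I}[i<y^t]-\mathbb{I}[i\ge y^t]}{P^t(i)}=\frac{z_i^t}{P^t(i)}.
\]
In particular $\operatorname{sgn}(\tilde z_i^t)=\operatorname{sgn}(z_i^t)$ whenever $\tilde z_i^t\neq 0$, and since $P^t(i)>0$,
\[
\mathbb{I}[\tilde z_i^t(f^t(\xx^t)-\theta_i^t)\le 0]=\mathbb{I}[z_i^t(f^t(\xx^t)-\theta_i^t)\le 0].
\]

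Combining the two cases, I would write
\[
\tilde\tau_i^t=\tilde z_i^t\cdot \mathbb{I}[z_i^t(f^t(\xx^t)-\theta_i^t)\le 0],
\]
where the indicator no longer depends on the random draw $\tilde y^t$ (it depends only on the fixed quantities $z_i^t,f^t,\theta_i^t,\xx^t$). Taking expectation with respect to $P^t$ then pulls the indicator out as a constant, and the result in the text showing $\mathbb E_{P^t}[\tilde z_i^t]=z_i^t$ gives
\[
\mathbb E[\tilde\tau_i^t]=\mathbb{I}[z_i^t(f^t(\xx^t)-\theta_i^t)\le 0]\cdot \mathbb E_{P^t}[\tilde z_i^t]=z_i^t\,\mathbb{I}[z_i^t(f^t(\xx^t)-\theta_i^t)\le 0]=\tau_i^t.
\]

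The only real subtlety, and likely the place a referee would object, is the mild mismatch between the strict inequality used in the definition of $\tau_i^t$ and the non-strict inequality used in $\tilde\tau_i^t$. Since $z_i^t\in\{-1,+1\}$, the two differ only on the measure-zero event $f^t(\xx^t)=\theta_i^t$, at which both $z_i^t\,\mathbb{I}[\cdot<0]$ and $z_i^t\,\mathbb{I}[\cdot\le 0]$ contribute $\pm z_i^t$ or $0$ identically for the same underlying data. I would either assume a tie-breaking convention consistent with Chu \& Keerthi's hinge formulation (so that the surrogate's subgradient treats the boundary the same way in both expressions), or simply remark that the discrepancy is a harmless typo and use a single convention throughout; with that convention the argument above is clean.
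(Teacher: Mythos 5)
Your proof is correct and follows essentially the same route as the paper's: the key step in both is that on the event $i=\tilde y^t$ one has $\tilde z_i^t=z_i^t/P^t(i)$ with $P^t(i)>0$, so the indicator inside $\tilde\tau_i^t$ de-randomizes to $\mathbb{I}[z_i^t(f^t(\xx^t)-\theta_i^t)\le 0]$ and the expectation reduces to the unbiasedness of $\tilde z_i^t$ (the paper just carries out this same cancellation inside one explicit sum over $j$ rather than factoring it through the earlier result). Your remark on the strict-versus-nonstrict inequality is also consistent with the paper, whose own proof ends with the $\le 0$ convention and identifies that expression with $\tau_i^t$, so the $<$ in the paragraph heading is indeed a harmless notational slip.
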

\section{DFORD-Linear: Online Linear Ordinal Regression Using Directional Feedbacks}
Here, we present the DFORD algorithm to learn the linear ordinal regression model. Which means, $f(\xx)=\ww\cdot \xx$. Thus, the parameters to be estimated are $\ww$ and $\thetaa$. We initialize with $\ww^2=\mathbf{0}$ and $\thetaa^2=\mathbf{0}$. Let $\ww^t,\thetaa^t$ be the estimates of the parameters at the beginning of trial $t$. Let $t$, $\xx^t$ be the example observed. We find $\hat{y}^t=  \min_{i\in [K]} \{i: \ww^t.\xx^t - \theta_i^t\leq 0\}$. We define $P^t$ using eq.(\ref{eq:sampling-prob}). We now sample $\yt^t$ using $P^t$ and predict that. We receive directional feedback $d_t=\mathbb{I}[\yt^t<y^t]$. Using $d_t$, we define $\tilde{z}_i^t$. Since we only receive directional feedback, finding a gradient of $L_{reg}^{H}$ is impossible. Instead, we use an unbiased estimator of the gradient as follows- 
\begin{equation}
g(\mathbf{w}^t,\thetaa^t,\xx^t,d^t) = \begin{pmatrix}\lambda \ww^t - \sum_{i=1} ^{K}\tilde{\tau}_i^t \xx^t \quad  \lambda \theta_1^t + \tilde{\tau}_1^t \quad \cdots \quad   \lambda \theta_{K-1}^t + \tilde{\tau}_{K-1}^t\end{pmatrix}
\label{eqn:pegasos1}
\end{equation}
It can be easily seen that $\mathbb{E}_{\tilde{y}\sim P^t}[g(\mathbf{w}^t,\thetaa^t,\xx^t,d^t)]=\nabla_{\uu} L_{reg}^{H}(\ww^t,\thetaa^t,\xx^t,y^t)$. Thus, we use $g(\mathbf{w}^t,\thetaa^t,\xx^t,d^t)$ as a proxy to $\nabla_{\uu} L_{reg}^{H}(\ww^t,\thetaa^t,\xx^t,y^t)$\footnote{Note that for a linear ordinal regression model, $f$ depends on $\ww$. Thus, for linear case, loss for $\xx^t$ using parameters $(\ww^t,\thetaa^t)$ is expressed as $L_{reg}^{H}(\ww^t,\thetaa^t,\xx^t,y^t)$.} in the update equation. 
\begin{algorithm}[t!]
\caption{DFORD-Linear: Directional Feedback Based Online Linear Ordinal Regression}
\label{algo5}
\begin{algorithmic}[1]
\STATE {\bf Input: } Training Dataset $\mathcal{S}$, $\lambda$\;
\STATE {\bf Initialize} Set $t=2$, $\ww^2=\zero$, $\theta_1^{2}=\ldots=\theta_{K-1}^2=0$, $\theta_K^2=\infty$\;
\FOR{$i\leftarrow 2$ to $T$}
\STATE Get example $\xx^t$ 
\STATE Find $\hat{y}^t$ as $\hat{y}^t = \min_{i\in [K]} \{i: \ww^t\cdot\xx^t - \theta_i^t \leq 0\}$
\STATE Define sampling probability distribution $P^t(r),\;r=1\ldots K$ using eq.(\ref{eq:sampling-prob})
\STATE Sample $\tilde{y}^t$ from distribution $P^t$. Predict $\tilde{y}^t$
\STATE Observe the directional feedback $\mathbb{I}[\tilde{y}^t<y^t]$.
\STATE $\tilde{z}_i^t=\frac{1}{P(i)}(2d^t-1)\mathbb{I}[i = \tilde{y}^t],\;i=1\ldots K$
\STATE Initialize $\tilde{\tau}_i^t=0,\;i\in [K]$
\FOR{$i\in [K-1]$}
\IF{$\tilde{z}_i^t(\ww^t.\xx^t -\theta_i^t)\leq 0$}
\STATE $\tilde{\tau}_i^t = \tilde{z}_i^t$
\ENDIF
\ENDFOR
\STATE $\eta_t =  \frac{1}{\lambda t}$
\STATE $\ww^{t+1} = (1-\eta_t \lambda)\ww^t + \eta_t\tilde{\tau}_{\yt^t}^t \xx^t$\;
\STATE $\theta_i^{t+1}=(1-\eta_t\lambda)\theta^t_i- \mathbb{I}[i=\yt^t]\eta_t\taut_{\yt^t}^t,\;i=1\ldots K$
\ENDFOR
\STATE {\bf Output}: $h(\xx)=\min_{i\in [K]}\big{\{}i\;:\;\ww^{T+1}.\xx-\theta_i^{T+1} <0 \big{\}}$
\end{algorithmic}
\end{algorithm}
 We update the parameters as follows.
\begin{align*}
    \ww^{t+1}
    &=(1-\eta_t\lambda)\ww^t + \eta_t\xx^t\taut^t_{\yt^t}\\
\theta_i^{t+1}
    &=(1-\eta_t\lambda)\theta^t_i- \mathbb{I}[i=\yt^t]\eta_t\taut_{\yt^t}^t;\;\;i=1\ldots K-1
\end{align*}
We repeat this process for $T$ rounds. Complete details of the DFORD-Linear approach are given in Algorithm~\ref{algo5}.
\begin{lemma}{\bf Order Preservation of DFORD-Linear:}
\label{lemma:pril1}
Let $\ww^t$ and $\theta_1^t,\ldots,\theta^t_K$ be the parameters at the beginning of round $t$ of the DFORD-Linear Algorithm. For $t\geq 2$, let $\theta_1^t,\ldots,\theta^t_K$ be such that $\mathbb{E}[\theta_{i+1}^t-\theta_i^t]\geq \frac{K\eta_t}{1-\eta_t\lambda},\;\forall i\in [K-1]$, then DFORD-Linear Algorithm ensures that $\mathbb{E}[\theta_{i+1}^{t+1}-\theta_i^{t+1}]\geq 0,\;\forall i\in [K-1],\;t\geq 2$
\end{lemma}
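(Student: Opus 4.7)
The plan is to subtract the DFORD-Linear updates for two adjacent thresholds, take conditional expectation to replace the unbiased estimators $\taut_j^t$ by their targets $\tau_j^t$ via Theorem~\ref{ub-est}, and then check that the hypothesized gap at round $t$ dominates the worst-case displacement from the stochastic step.

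First I would subtract the per-threshold updates in Algorithm~\ref{algo5} to obtain
\begin{align*}
\theta_{i+1}^{t+1}-\theta_i^{t+1}
=(1-\eta_t\lambda)(\theta_{i+1}^t-\theta_i^t)-\eta_t\bigl(\I[i+1=\yt^t]\,\taut_{\yt^t}^t-\I[i=\yt^t]\,\taut_{\yt^t}^t\bigr).
\end{align*}
A key simplification is that the definition of $\tilde z_j^t$ in eq.~(\ref{eq:unbiased-estimator-z-1}) forces $\taut_j^t=0$ whenever $j\ne \yt^t$; therefore $\I[i=\yt^t]\,\taut_{\yt^t}^t=\taut_i^t$ (and likewise for $i+1$), so the bracket collapses to $\taut_{i+1}^t-\taut_i^t$.

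Next, I would condition on the history $\mathcal{F}_t$ generated by all randomness up to the start of round $t$, so that $\ww^t$, $\thetaa^t$ and $\xx^t$ become fixed. Applying Theorem~\ref{ub-est} conditionally gives $\mathbb{E}[\taut_j^t\mid\mathcal{F}_t]=\tau_j^t$, hence
\begin{align*}
\mathbb{E}\!\left[\theta_{i+1}^{t+1}-\theta_i^{t+1}\,\middle|\,\mathcal{F}_t\right]
=(1-\eta_t\lambda)(\theta_{i+1}^t-\theta_i^t)-\eta_t(\tau_{i+1}^t-\tau_i^t).
\end{align*}
Since $\tau_j^t\in\{-1,0,1\}$ pointwise, we have $\tau_{i+1}^t-\tau_i^t\le 2\le K$. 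Taking the outer expectation and inserting the hypothesis $\mathbb{E}[\theta_{i+1}^t-\theta_i^t]\ge \tfrac{K\eta_t}{1-\eta_t\lambda}$, together with $1-\eta_t\lambda>0$ (which holds for $\eta_t=1/(\lambda t)$ at $t\ge 2$), yields
\begin{align*}
\mathbb{E}[\theta_{i+1}^{t+1}-\theta_i^{t+1}]\ge (1-\eta_t\lambda)\cdot\frac{K\eta_t}{1-\eta_t\lambda}-K\eta_t=0,
\end{align*}
which is the claimed inequality.

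The only delicate point is keeping track of the two layers of randomness: $\tau_j^t$ is itself random through $\ww^t,\thetaa^t$, so Theorem~\ref{ub-est} has to be invoked in its conditional form, and the pointwise bound $|\tau_j^t|\le 1$ must be established before being pushed through the outer expectation. Once this bookkeeping is in place, the proof reduces to a single contraction-plus-noise step; the lemma's hypothesis is in fact quite generous, since $\tau_{i+1}^t-\tau_i^t\le 2$ already leaves room for the factor $K$ in the sufficient condition.
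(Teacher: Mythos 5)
Your proof is correct, and it takes a genuinely different (and in my view cleaner) route than the paper's. The paper proves the lemma by an explicit case analysis: it splits on the value of the feedback $d_t$ and on the position of $i$ relative to $\tilde y^t$ (namely $i\le\tilde y^t-2$, $i=\tilde y^t-1$, $i=\tilde y^t$, $i\ge\tilde y^t+1$), and in the two nontrivial subcases it computes $\mathbb{E}_{P^t}\bigl[\tilde\tau^t_{\tilde y^t}\bigr]=\sum_{k=1}^K(2\mathbb{I}[k<y^t]-1)\,\mathbb{I}\bigl[\tfrac{2\mathbb{I}[k<y^t]-1}{P^t(k)}(\ww^t\cdot\xx^t-\theta_k^t)\le 0\bigr]$ directly, bounding its magnitude by $K$; this is exactly where the factor $K$ in the hypothesis $\frac{K\eta_t}{1-\eta_t\lambda}$ originates. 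You instead observe that the per-coordinate update can be written as $\theta_i^{t+1}=(1-\eta_t\lambda)\theta_i^t-\eta_t\tilde\tau_i^t$ because $\tilde\tau_i^t=\mathbb{I}[i=\tilde y^t]\tilde\tau^t_{\tilde y^t}$, invoke Theorem~\ref{ub-est} conditionally to replace $\tilde\tau_i^t$ by $\tau_i^t$, and use the pointwise bound $|\tau_j^t|\le 1$, so that the noise term is at most $2\eta_t\le K\eta_t$. This buys two things: it avoids the case analysis entirely, and it works with a fixed deterministic index $i$ throughout, which matches the quantifier "$\forall i\in[K-1]$" in the statement more faithfully than the paper's expressions of the form $\mathbb{E}_{P^{[t]}}[\theta_{\tilde y^t}^{t+1}-\theta_{\tilde y^t-1}^{t+1}]$, in which the subscript is itself random. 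Your closing remark that the hypothesis is generous is also accurate: your argument only needs a gap of $\frac{2\eta_t}{1-\eta_t\lambda}$, whereas the paper's coarser bound on $\mathbb{E}_{P^t}[\tilde\tau^t_{\tilde y^t}]$ forces the factor $K$. The only implicit assumptions you rely on, $K\ge 2$ and $1-\eta_t\lambda=1-\tfrac{1}{t}>0$ for $t\ge 2$, both hold in the paper's setting, and you flag the latter explicitly.
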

The above lemma shows that the threshold ordering is preserved in the expected sense. This happens because our algorithms make probabilistic updates in each iteration.
\paragraph{\bf Regret Bound for DFORD-Linear}
In the proposed algorithm DFORD-Linear, at any round $t$, the model parameters $\uu^t=(\ww^t,\thetaa^t)$ depend on the random variables $\yt^1,\ldots,yt^{t-1}$. Thus, the regret defined in eq.(\ref{eq:regret-full-info}) also becomes a function of random variables $\yt^1,\ldots,\yt^T$. Thus, for DFORD-Linear, we find expected regret as follows.
\begin{align*}
    \Ex_{P^{[T]}}[\mathcal{R}_T]&=\Ex_{P^{[T]}}\left[\sum_{t=1}^TL_{Reg}^{H}(\uu^t,\xx^t,y^t)\right]-\min_{\uu}\sum_{t=1}^TL_{Reg}^{H}(\uu,\xx^t,y^t)
\end{align*}
where $P^{[T]}=P(\yt^1,\ldots,\yt^T)$ is the joint distribution of $\yt^1,\ldots,\yt^T$.
\begin{theorem}
    Let $\xx^1,\ldots,\xx^T$ be the sequence of examples presented to DFORD-Linear. Let $(\uu^1), (\uu^2), \hdots, (\uu^{T+1})$  be the sequence of parameter vectors generated by the DFORD-Linear Algorithm, where $\uu^t=(\ww^t,\thetaa^t)$. Let $\|\xx^t\| \leq R,\;\forall t\in [T]$. Then, for any $(\ww, \thetaa)$, we have,
    \begin{align*}
        &\Ex_{P^{[T]}}\left[\sum_{t=1}^TL_{Reg}^{H}(\uu^t,\xx^t,y^t)-\sum_{t=1}^TL_{Reg}^{H}(\uu,\xx^t,y^t)\right] \leq \frac{16K^2(R^2 + 1)\ln K\ln{T}}{\lambda\gamma} .
    \end{align*}
\end{theorem}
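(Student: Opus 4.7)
The plan is to specialize the standard regret analysis for online stochastic subgradient descent on $\lambda$-strongly convex losses to our setting, where the gradient randomness comes from sampling $\yt^t\sim P^t$ rather than from data. Two ingredients are already in hand: (i) $L_{reg}^{H}(\uu,\xx^t,y^t)$ is $\lambda$-strongly convex in $\uu=(\ww,\thetaa)$, directly from the definition (\ref{eq:IMC-Reg}); and (ii) by Theorem~\ref{ub-est} combined with the form (\ref{eqn:pegasos1}) of $g^t:=g(\ww^t,\thetaa^t,\xx^t,d^t)$, we have $\Ex[g^t\mid \uu^t]=\nabla_{\uu}L_{reg}^{H}(\uu^t,\xx^t,y^t)$, i.e., $g^t$ is an unbiased subgradient estimator.

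Next, I would expand the recursion $\uu^{t+1}=\uu^t-\eta_t g^t$ in the Pegasos style. Writing $\Vert\uu^{t+1}-\uu\Vert^2=\Vert\uu^t-\uu\Vert^2-2\eta_t\langle g^t,\uu^t-\uu\rangle+\eta_t^2\Vert g^t\Vert^2$, taking conditional expectation over $\yt^t$, and invoking $\lambda$-strong convexity in the form $L_{reg}^{H}(\uu,\xx^t,y^t)\ge L_{reg}^{H}(\uu^t,\xx^t,y^t)+\langle\nabla L_{reg}^{H}(\uu^t,\xx^t,y^t),\,\uu-\uu^t\rangle+(\lambda/2)\Vert\uu-\uu^t\Vert^2$ yields
\begin{align*}
\Ex\big[L_{reg}^{H}(\uu^t,\xx^t,y^t)-L_{reg}^{H}(\uu,\xx^t,y^t)\big]\le \Big(\tfrac{1}{2\eta_t}-\tfrac{\lambda}{2}\Big)\Ex\Vert\uu^t-\uu\Vert^2-\tfrac{1}{2\eta_t}\Ex\Vert\uu^{t+1}-\uu\Vert^2+\tfrac{\eta_t}{2}\Ex\Vert g^t\Vert^2.
\end{align*}
With $\eta_t=1/(\lambda t)$, the coefficient $\tfrac{1}{2\eta_t}-\tfrac{\lambda}{2}=\tfrac{\lambda(t-1)}{2}$ matches $\tfrac{1}{2\eta_{t-1}}$, so summing over $t=1,\ldots,T$ telescopes to a non-positive quantity and reduces the expected regret to $\tfrac{1}{2\lambda}\sum_{t=1}^T \tfrac{1}{t}\Ex\Vert g^t\Vert^2$.

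The technical core is then bounding $\Ex\Vert g^t\Vert^2$. Only the $\yt^t$-indexed coordinate of the loss-subgradient estimator $\tilde h^t$ is nonzero, and $|\taut^t_{\yt^t}|\le 1/P^t(\yt^t)$, so $\Vert\tilde h^t\Vert^2\le (R^2+1)/P^t(\yt^t)^2$. The key arithmetic uses the exploration structure of (\ref{eq:sampling-prob})--(\ref{eq:exploration-single}):
\begin{align*}
\Ex_{\yt\sim P^t}\!\Big[\tfrac{1}{P^t(\yt)^2}\Big]=\sum_{j=1}^K\tfrac{1}{P^t(j)}\le \tfrac{1}{\gamma}\sum_{j=1}^K\tfrac{Z^t}{1+d_{\max}^t-|j-\hat y^t|}\le \tfrac{2Z^t(1+\ln K)}{\gamma}\le \tfrac{4K^2\ln K}{\gamma},
\end{align*}
where the middle inequality pairs labels at equal distance from $\hat y^t$ to obtain a harmonic sum $2H_{1+d_{\max}^t}$, and $Z^t\le K^2$ follows from (\ref{eq:Z-general}). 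Hence $\Ex\Vert\tilde h^t\Vert^2\le 4(R^2+1)K^2\ln K/\gamma$. The regularization piece $\lambda\uu^t$ in $g^t$ contributes an additional $2\lambda^2\Vert\uu^t\Vert^2$, handled by a Pegasos-style iterate bound $\Vert\uu^t\Vert\le O(1/\sqrt{\lambda})$; this at most doubles the constant.

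The main obstacle I anticipate is exactly this last step: since Algorithm~\ref{algo5} does not perform an explicit projection, one must control the $\lambda\uu^t$ contribution either by inserting an implicit projection onto $\{\uu:\Vert\uu\Vert\le 1/\sqrt{\lambda}\}$ (exploiting that $L_{reg}^{H}(\vzero,\xx^t,y^t)\le K$, so any relevant minimiser lies in such a ball) or by an inductive bound on $\Ex\Vert\uu^t\Vert^2$ using the expected order-preservation of Lemma~\ref{lemma:pril1}. Once $\Ex\Vert g^t\Vert^2\le 8K^2(R^2+1)\ln K/\gamma$ is established, combining with $\sum_{t=1}^T 1/t\le 2\ln T$ and the telescoped inequality yields $\Ex_{P^{[T]}}[\mathcal{R}_T]\le 16K^2(R^2+1)\ln K\ln T/(\lambda\gamma)$, matching the claimed bound.
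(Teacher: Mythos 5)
Your proposal follows essentially the same route as the paper: the Pegasos-style strongly-convex SGD regret decomposition, unbiasedness of $g^t$ from Theorem~\ref{ub-est}, telescoping under $\eta_t=1/(\lambda t)$, and the reduction of everything to a bound on $\Ex\|g^t\|^2$ via $\Ex_{\yt\sim P^t}[1/P^t(\yt)^2]=\sum_{j}1/P^t(j)\le O(K^2(1+\ln K)/\gamma)$ (the paper's Proposition~\ref{lemma4} carries out exactly the pairing-into-a-harmonic-sum argument you sketch, separately for $2\hat y^t\ge K$ and $2\hat y^t<K$). The one step you leave open --- controlling the $\lambda^2\|\uu^t\|^2$ contribution to $\|g^t\|^2$ --- is where your two suggested fixes diverge from what the paper does, and one of them would not be legitimate: Algorithm~\ref{algo5} performs no projection, so you cannot "insert an implicit projection" without changing the algorithm whose regret is being bounded, and Lemma~\ref{lemma:pril1} plays no role here. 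The paper instead exploits that $\uu^1=\vzero$ and $\eta_t=1/(\lambda t)$ make the recursion unroll exactly to $\uu^{t+1}=\frac{1}{\lambda t}\sum_{s=1}^{t}\mathbf{v}^s$, where $\mathbf{v}^s$ has a single nonzero $\taut$-coordinate, so that $\Ex[\|\mathbf{v}^s\|]\le K\sqrt{R^2+1}$ and $\Ex[\|\mathbf{v}^s\|^2]\le \frac{2K^2}{\gamma}(1+\ln K)(R^2+1)$ give $\Ex[\|\uu^{t+1}\|]\le K\sqrt{R^2+1}/\lambda$ and $\Ex[\|\uu^{t+1}\|^2]\le \frac{(R^2+1)K^2}{\lambda^2}\bigl(\frac{2(1+\ln K)}{t\gamma}+2\bigr)$ directly (Proposition~\ref{lemma5}); this is the resolved, non-inductive form of your second suggestion and is the missing piece you would need to write out to complete the argument.
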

The above results show that DFORD-Linear achieves $\mathcal{O}(\log T)$ regret in the expected sense. The regret bound depends on $K$ (number of classes) as $K^2(1+\ln K)$.

\section{Kernel DFORD}
Kernel DFORD works like the linear case except for a few changes. In the case of Kernel DFORD, we use the following unbiased estimator of $\nabla_{f,\thetaa}L_{Reg}^{H}(f^t,\thetaa^t,\xx^t,y^t)$.
\begin{align*}
    \mathbf{g}(f^t,\thetaa^t,\xx^t,d^t)
    =\begin{bmatrix}
        \lambda f^t-\taut_{\yt^t}^t k(\xx^t,.)\\
    \lambda \thetaa^t + \taut^t_{\yt^t}\mathbf{e}_{\yt^t}
    \end{bmatrix}
\end{align*}
where $\mathbf{e}_{\yt^t}$ is a $(K-1)$-dimensional column vector whose all elements are zero except for the elements indexed at $\yt^t$, which is 1.
\begin{algorithm}[t!]
\caption{DFORD-Kernel: Directional Feedback Based Online Kernel Ordinal Regression with truncation}
\label{algo6}
\begin{algorithmic}[1]
    \STATE {\bf Input: } Training Dataset  $\mathcal{S}$, $\lambda$, $\gamma$, $\delta$,  Kernel function $k(.,.)$, parameters of the kernel function $k(.,.)$
    \STATE {\bf Initialize:} Set $t = 2, f^2 = 0, \theta_1^2=0, \hdots, \theta_{K-1}^2=0$
    \FOR{$t \leftarrow 2$ to $T$}
   \STATE Get example $\xx^t$
    \STATE Find $\hat{y}^t = \min\; \{i\;|\;f^t(\xx^t)-\theta^{t}_i\leq 0\}$
    \STATE Define distribution $P^t(r), r = 1, \hdots K$ using eq.(\ref{eq:sampling-prob})
\STATE Sample $\tilde{y}^t \sim P^t$. Predict $\yt^t$. Observe the directional feedback $d_t = \mathbb{I}[\tilde{y}^t<y^t]$.
\STATE Define $\tilde{z}_i^t=\frac{(2d_t-1)}{P^(i)}\mathbb{I}[i = \tilde{y}^t],\;i=1\ldots K$
\STATE Initialize $\tilde{\tau}_i^t=0,\;i\in [K]$
\FOR{$i\in [K-1]$}
\IF{$\tilde{z}_i^t(f^t(x^t) -\theta_i^t)\leq 0$}
\STATE $\tilde{\tau}_i^t = \tilde{z}_i^t$
\ENDIF
\ENDFOR
\STATE $f^{t+1}=\frac{1}{\lambda t}\sum_{i=\max(1,t-\delta)}^t\taut^i_{\yt^i}k(\xx^i,.)$
\STATE $\theta_i^{t+1} = (1-\eta_t\lambda)\theta_i^t- \mathbb{I}[i=\yt^t]\eta_t\taut^t_{\yt^t};\;\;i=1\ldots K-1$
\ENDFOR
\end{algorithmic}
\end{algorithm}
Using $\mathbf{g}(f^t,\thetaa^t,\xx^t,d^t)$, 
the update equation for $f$ becomes
\begin{align*}
    f^{t+1}&=f^t -\eta_t [\lambda f^t-\taut_{\yt^t}^tk(\xx^t,.)]=(1-\eta_t\lambda)f^t + \eta_t\taut_{\yt^t}^tk(\xx^t,.)
\end{align*}
\begin{lemma}
    Let $\eta_t = \frac{1}{\lambda t}$, $f^1 = 0$. Then, the update rule for $f^t$ given in Algorithm 3 reduces to $
    f^{t+1}(.) = \frac{1}{\lambda t}\sum_{i = 1}^{t} \taut^i_{\yt^i} k(\xx^i,.)$.
\end{lemma}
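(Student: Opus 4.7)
The plan is a straightforward induction on $t$. After substituting the prescribed step size $\eta_t = \frac{1}{\lambda t}$, the factor $1 - \eta_t \lambda$ becomes $\frac{t-1}{t}$, so the update rule $f^{t+1} = (1-\eta_t\lambda)\,f^t + \eta_t\,\taut^t_{\yt^t}\,k(\xx^t,\cdot)$ reduces to
\[
f^{t+1}(\cdot) \;=\; \frac{t-1}{t}\, f^t(\cdot) \;+\; \frac{1}{\lambda t}\, \taut^t_{\yt^t}\, k(\xx^t,\cdot).
\]
This is the form I would work with throughout the argument.

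For the base case, I take $t=1$ with $f^1 = 0$: the reduced recursion immediately gives $f^2 = \frac{1}{\lambda}\,\taut^1_{\yt^1}\,k(\xx^1,\cdot)$, which matches the claimed closed form $\frac{1}{\lambda\cdot 1}\sum_{i=1}^{1}\taut^i_{\yt^i}\,k(\xx^i,\cdot)$. For the inductive step, assume $f^t(\cdot) = \frac{1}{\lambda(t-1)}\sum_{i=1}^{t-1}\taut^i_{\yt^i}\,k(\xx^i,\cdot)$. Plugging into the reduced recursion, the coefficient in front of the inherited sum collapses as
\[
\frac{t-1}{t}\cdot\frac{1}{\lambda(t-1)} \;=\; \frac{1}{\lambda t},
\]
which is exactly the weight the recursion places on the new kernel term $\taut^t_{\yt^t}\,k(\xx^t,\cdot)$. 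Pulling the common factor $\frac{1}{\lambda t}$ out of both contributions yields $f^{t+1}(\cdot) = \frac{1}{\lambda t}\sum_{i=1}^{t}\taut^i_{\yt^i}\,k(\xx^i,\cdot)$, closing the induction.

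There is essentially no obstacle here: the argument is a one-line telescoping observation. The only thing to verify is that the choice $\eta_t = \frac{1}{\lambda t}$ is precisely the one that makes $\eta_t$ and $(1-\eta_t\lambda)/\lambda(t-1)$ collapse to the same value $\frac{1}{\lambda t}$, so that the accumulated stochastic gradient updates admit the clean averaged representation. Since the update on $\thetaa$ is not part of the claim, I would not need to touch it; the whole content of the lemma sits in this coefficient-matching identity.
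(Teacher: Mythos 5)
Your induction is correct and rests on exactly the same coefficient-collapse identity as the paper's proof, which simply unrolls the recursion explicitly and telescopes the product $\prod_j(1-\eta_{t-j}\lambda)\,\eta_{t-p-1}=\frac{1}{\lambda(t-1)}$ instead of packaging it as an inductive step. No gap; your version is, if anything, the tidier write-up of the same argument.
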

To compute $f^{t+1}(\xx)$ we will need all $\xx^{i}, i=1, \hdots, t$. We also have to perform a linearly increasing number of kernel product computations between $\xx$ and $\xx^i$ in each iteration of $t$. This makes the kernel algorithm memory and compute-intensive. To tackle these issues, we use a technique called \textbf{\textit{truncation}} (\cite{kivinen2004online}) wherein we omit the $\xx^i$s from $i=1$ up to a certain $i = t - \delta$ ($\delta$ is called the truncation parameter) while computing $f^{t+1}$ as follows. 
\begin{align*}
    f^{t+1}(.) &= \frac{1}{\lambda t}\sum_{i = \max(1, t - \delta)}^{t} \taut^i_{\yt^i}k(\xx^i,.)
\end{align*}
This reduces the memory and computations required by a considerable amount while keeping the difference between the true $f^{t+1}$ and truncated $f^{t+1}$ close in value so that truncated $f^{t+1}$ can be used. Parameters $\thetaa$ are updated as follows.
\begin{align*}
\theta_i^{t+1}
&=\theta_i^{t+1} = \theta_i^t- \mathbb{I}[i=\yt^t]\eta_t\taut^t_{\yt^t};\;i=1\ldots K
\end{align*}
Complete details of DFORD-Kernel are given in Algorithm~\ref{algo6}. DFORD-Kernel also maintains the orders of thresholds after each iteration in the expected sense. The proof works in the same way as given in Lemma~4.2.

\begin{theorem}[Regret Bound for DFORD-Kernel]
Let the sequence $\xx^1, \ldots, \xx^T$ be a sequence of examples presented to the DFORD-Kernel approach in $T$ trials. Let $k(.,.)$ be the kernel function used such that $k(\xx^t, \xx^t) \leq R_1^2,\; \forall t \in [T]$. Let $(f^1,\thetaa^1),\ldots,(f^T,\thetaa^T)$ be the kernel classifiers generated by the DFORD-Kernel algorithm in $T$ trials. Then, for any $f \in \H$ and $\thetaa \in \mathbb{R}^{K-1}$, we have
    \label{reg_ker}
    \begin{align*}
&\Ex_{P^{[T]}}\left[\sum_{t=1}^T\left[L_{reg}^{H}((f^t, \thetaa^t),\xx^t, y^t) -L_{reg}^{H}((f, \thetaa),\xx^t, y^t)\right]\right]  \leq \frac{16}{\gamma\lambda}(R_1^2+1)K^2\ln K\ln T.
    \end{align*}
\end{theorem}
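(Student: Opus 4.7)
The plan is to mirror the DFORD-Linear regret analysis, carried out now in the product Hilbert space $\mathcal{H}\times\R^{K-1}$, treating $\uu^t=(f^t,\thetaa^t)$ as a single point with squared norm $\Vert \uu^t\Vert^2=\Vert f^t\Vert_{\mathcal{H}}^2+\Vert \thetaa^t\Vert^2$. The regularized hinge loss $L_{reg}^{H}$ is $\lambda$-strongly convex in $\uu$, and by applying Theorem~\ref{ub-est} componentwise the stochastic gradient $\mathbf{g}^t=\mathbf{g}(f^t,\thetaa^t,\xx^t,d^t)$ is unbiased for $\nabla L_{reg}^{H}(\uu^t)$ conditional on the round-$t$ history. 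Hence the untruncated recursion $\uu^{t+1}=\uu^t-\eta_t\mathbf{g}^t$ with $\eta_t=1/(\lambda t)$ is exactly a stochastic subgradient step on a strongly convex objective, and the overall reasoning parallels the linear case.

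Starting from the elementary identity $\Vert\uu^{t+1}-\uu\Vert^2=\Vert\uu^t-\uu\Vert^2-2\eta_t\langle\mathbf{g}^t,\uu^t-\uu\rangle+\eta_t^2\Vert\mathbf{g}^t\Vert^2$, I would take conditional expectation over $\yt^t\sim P^t$ and plug into the strong-convexity bound $L_{reg}^{H}(\uu^t)-L_{reg}^{H}(\uu)\leq\langle\nabla L_{reg}^{H}(\uu^t),\uu^t-\uu\rangle-\tfrac{\lambda}{2}\Vert\uu^t-\uu\Vert^2$. Summing over $t=1,\ldots,T$ and using the identity $\tfrac{1}{\eta_t}-\tfrac{1}{\eta_{t-1}}=\lambda$ (with $1/\eta_0:=0$), the strong-convexity term cancels the telescoping distance sum, leaving
\begin{align*}
\Ex_{P^{[T]}}\!\left[\sum_{t=1}^{T}\bigl(L_{reg}^{H}(\uu^t)-L_{reg}^{H}(\uu)\bigr)\right]\leq \tfrac{1}{2}\sum_{t=1}^{T}\eta_t\,\Ex\bigl[\Vert\mathbf{g}^t\Vert^2\bigr].
\end{align*}

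The crucial estimate is $\Ex[\Vert\mathbf{g}^t\Vert^2]$. Using $\Vert k(\xx^t,\cdot)\Vert_{\mathcal{H}}^2=k(\xx^t,\xx^t)\leq R_1^2$ and $\Vert\mathbf{e}_{\yt^t}\Vert^2=1$, one gets $\Vert\mathbf{g}^t\Vert^2\leq 2\lambda^2\Vert\uu^t\Vert^2+2\lvert\taut_{\yt^t}^t\rvert^2(R_1^2+1)$. The importance-weight variance obeys $\Ex\bigl[\lvert\taut_{\yt^t}^t\rvert^2\bigm|\text{history}\bigr]\leq\sum_{i=1}^{K}P^t(i)/P^t(i)^2=\sum_{i=1}^{K}1/P^t(i)$, and since $P^t(i)\geq\gamma P_2^t(i)=\gamma(1+d_{\max}^t-|i-\hat y^t|)/Z^t$ with $Z^t\leq 2K^2$, and at most two indices $i$ share each value of $|i-\hat y^t|$, this collapses to a harmonic-type sum $\leq (Z^t/\gamma)\cdot 2\sum_{k=1}^{K}1/k\leq (4K^2/\gamma)(1+\ln K)$. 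This is the source of the $K^2\ln K/\gamma$ factor; combining with $\sum_t\eta_t\leq (1+\ln T)/\lambda$ yields the advertised bound up to constants.

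The main obstacle I anticipate is controlling the $2\lambda^2\Vert\uu^t\Vert^2$ contribution in $\Vert\mathbf{g}^t\Vert^2$: since the algorithm has no explicit projection step, $\Vert\uu^t\Vert$ must be bounded separately. I would adapt the Pegasos-style invariant of Shalev-Shwartz--Singer to the RKHS setting, showing by induction that the contraction factor $1-\eta_t\lambda$ against the importance-weighted kernel increment keeps $\Ex\Vert\uu^t\Vert^2$ small enough that $\sum_t\eta_t\cdot 2\lambda^2\Ex\Vert\uu^t\Vert^2$ remains an $O(R_1^2 K^2\ln T/\lambda)$ lower-order term dominated by the label-variance contribution. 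A secondary caveat is that the stated algorithm employs truncation when storing $f^t$; since the theorem's bound is independent of $\delta$, I read it as applying to the clean untruncated recursion and would treat the truncation error via a separate Kivinen--Smola--Williamson-style argument to show that the asymptotic regret is unaffected.
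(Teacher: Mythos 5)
Your proposal follows essentially the same route as the paper's proof: the standard strongly-convex online-gradient telescoping argument in the product space $\mathcal{H}\times\R^{K-1}$, with the key work being (i) the importance-weight second-moment bound $\Ex[(\taut^t_{\yt^t})^2]\le\sum_i 1/P^t(i)=O(K^2(1+\ln K)/\gamma)$ via the explicit form of $P_2^t$ and a harmonic sum, and (ii) a Pegasos-style bound on $\Ex\Vert(f^t,\thetaa^t)\Vert^2$ from the closed-form recursion $f^{t}=\frac{1}{\lambda(t-1)}\sum_s\taut^s_{\yt^s}k(\xx^s,\cdot)$ to control the $\lambda^2\Vert\uu^t\Vert^2$ part of $\Ex\Vert\mathbf{g}^t\Vert^2$ --- exactly the contents of the paper's Propositions 4--6. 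Your reading that the stated bound pertains to the untruncated recursion is also consistent with the paper, whose telescoping step likewise uses the exact update $\uu^{t+1}=\uu^t-\eta_t\mathbf{g}^t$.
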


\subsection{Gradient Clipping}
We see that the gradient estimator $\mathbf{g}()$ for DFORD depends on $\taut_{\yt^t}^t$ and hence on $z_{\yt^t}=\frac{2d^t-1}{P^t(\yt^t)}$. The denominator term $P^t(\yt^t)$ can be very small, making the gradient explode. This might result in huge steps in each trial, eventually affecting the convergence. To avoid this issue, we use gradient clipping, which ensures that the norm of the gradient is less than or equal to some pre-specified value $\alpha >0$. Thus, we use the following estimator for the gradients.
\begin{align*}
    \mathbf{g}(f^t,\thetaa^t,\xx^t,d^t)\leftarrow \frac{\mathbf{g}(f^t,\thetaa^t,\xx^t,d^t)}{\Vert \mathbf{g}(f^t,\thetaa^t,\xx^t,d^t)\Vert}\alpha
\end{align*}

\section{Experiments}
\subsection{Datasets Used:} We use seven real-world datasets: Abalone, Parkinson's Telemonitoring \cite{UCI-Repo};  EachMovie \cite{eachmovie}, MSLR Web 10K \cite{DBLP:journals/corr/QinL13}, California Housing, Computer Activity, Bank \cite{torgo2005regression}. We also show experiments on a synthetic dataset proposed in \cite{Crammer:2001}. Out of the eight datasets, the target values of 5 datasets were discretized into ordinal categories. 
Target values for the Abalone dataset are discretized to an ordinal scale by converting interval $[1, 7]$ to $1$, $(7,9]$ to $2$, $(9, 12]$ to $3$ and 12, onwards to $4$.
The California Housing, Computer Activity, Parkinson's Telemonitoring, MSLR Web 10K and Bank datasets were normalized, and their target values have been divided into 10 equi-frequent categories as done in \cite{Chu:2005}.
Labels of the Parkinson's Telemonitoring dataset are discretized as multiples of 5.

\subsection{Baselines:}
We compare the proposed approach with two baselines. The pseudocode for both these algorithms can be found in the supplementary file.
\subsubsection*{\bf 1. PRank \cite{Crammer:2001} (Full Information Based Online Ordinal Regression: }
 In the full information setting, the algorithm has access to the actual label $y^t$ of example $\mathbf{x}^t$ at each round $t$. 
At round $t$, the algorithm performs stochastic gradient descent on $L^{H}_{reg}(f,\thetaa,\xx^t,y^t)$ (see, eq.(\ref{eq:IMC-Reg})) to update the parameters. 
The complete details of the approach PRank are given in the Supplementary file. 
\subsubsection*{\bf 2. PRIL \cite{10.1145/3297001.3297011}: Interval Label Based Online Ordinal Regression: }In case of interval labels, as discussed in Section~\ref{sec:prelims}, at trial $t$, we observe example $\mathbf{x}^t$ and interval label $\{y_l^t,y_l^{t}+1,\ldots,y_r^t\}$. For a linear model, PRIL \cite{10.1145/3297001.3297011} performs stochastic gradient descent on $L^{IH}(\mathbf{w},b,\mathbf{x}^t,y_l^t,y_r^t)$ (see, eq.(\ref{surrogate_loss-interval})).

Here, we discuss how PRIL can be used when we have bandit feedback. Consider learning for a linear classifier, which means, $f(\xx)=\ww\cdot\xx$. We initialize with $\ww^0=\mathbf{0}$ and $\thetaa^0=\mathbf{0}$. Let $\ww^t,\thetaa^t$ be the estimates of the parameters at the beginning of trial $t$. Let at trial $t$, $\xx^t$ be the example observed, and $\hat{y}^t = \min_{i\in [K]} \{i: \ww^t\cdot\xx^t - \theta_i^t \leq 0\}$ be the label predicted. Now, we get the directional feedback $d_t=\mathbb{I}[\hat{y}^t<y^t]$. If $d_t=1$, then we get to know that $\hat{y}^t<y^t$ or $y^t\in \{\hat{y}^t+1,\ldots,K\}$. Thus, in case $d_t=1$, we set $y_l^t=\hat{y}^t+1$ and $y_r^t=K$. On the other hand, if $d_t=0$, then we find that $\hat{y}^t\geq y^t$ or $y^t\in \{1,\ldots, \hat{y}^t\}$. Thus, in case $d_t=0$, we set $y_l^t=1$ and $y_r^t=\hat{y}^t$. We define $z_i^t,\;i\in [K]$ as follows:
\begin{align*}
    z_i^{t}&=\begin{cases}
        \mathbb{I}\left[i\in\{1,\ldots,\hat{y}^t\}\right], & \text{if }\hat{y}^t<y^t\\
-\mathbb{I}[i\in \{\hat{y}^t,\ldots,K-1\}], & \text{if }\hat{y}^t\geq y^t
\end{cases}
\end{align*}
At round $t$, the algorithm performs stochastic gradient descent on $L^{IH}_{reg}(f,\thetaa,\xx^t,y_l^t,y_r^t)$ to update the parameters. 
$\ww^{t+1}$ and $\thetaa^{t+1}$ are found as follows.
\begin{align*}
\ww^{t+1} &= \ww^t - \eta_t\nabla_{\ww} L^{IH}_{reg}(\ww^t.\xx^t,\thetaa^t,\hat{y}^t+1,K)\mathbb{I}[\hat{y}^t<y^t]  - \eta_t\nabla_{\ww} L^{IH}_{reg}(\ww^t.\xx^t,\thetaa^t,1,\hat{y}^t)\mathbb{I}[\hat{y}^t\geq y^t]\\
&= (1-\eta_t\lambda)\ww^t + \eta_t\mathbb{I}[\hat{y}^t<y^t]\sum_{i=1}^{\hat{y}^t}\tau_i^t \xx^t - \eta_t\mathbb{I}[\hat{y}^t\geq y^t]\sum_{i=\hat{y}^t}^{K-1}\tau_i^t \xx^t\\ 
\theta^{t+1}_i &= \theta^t_i - \eta_t\frac{\partial  L^{H}_{reg}(\ww^t.\xx^t,\thetaa^t,\xx^t,y^t)}{\partial \theta_i}\\
&=\begin{cases}
    (1-\eta_t\lambda)\theta_i^t - \eta_t \tau_i^t\mathbb{I}[i\in \{1,\ldots,\hat{y}^t\}],\; \text{if }\hat{y}^t<y^t\\
    (1-\eta_t\lambda)\theta_i^t + \eta_t \tau_i^t\mathbb{I}[i\in \{\hat{y}^t,\ldots,K-1\}],\; \text{if }\hat{y}^t\geq y^t
    \end{cases}
\end{align*}
where $\tau_i^t=z_i^t \I[z_i^t(\ww^t.\xx^t - \theta_i^t)\leq 0].$

\subsection{Comparison Metrics: } We compare our algorithm with the baselines using average loss over $T$ iterations and the constraint violations. The loss recorded here is the Mean Absolute Error (MAE) (eq. (\ref{eq:MAE_Loss})). As the proposed approach guarantees the ordering of thresholds in the expected sense, we also monitor the number of constraint violations among thresholds. We record the average number of violations in $T$ trials. We do average over 10 independent runs.

\subsection{Choosing $\lambda$ and $\gamma$ for DFORD:} 
For selecting $\lambda$ and $\gamma$ for DFORD, we perform a grid search for each dataset. Our algorithm depends on the regularization parameter Lambda ($\lambda$) and exploration-exploitation parameter Gamma ($\gamma$). 
\begin{figure}[H]
    \centering
    \includegraphics[width=0.44\columnwidth]{./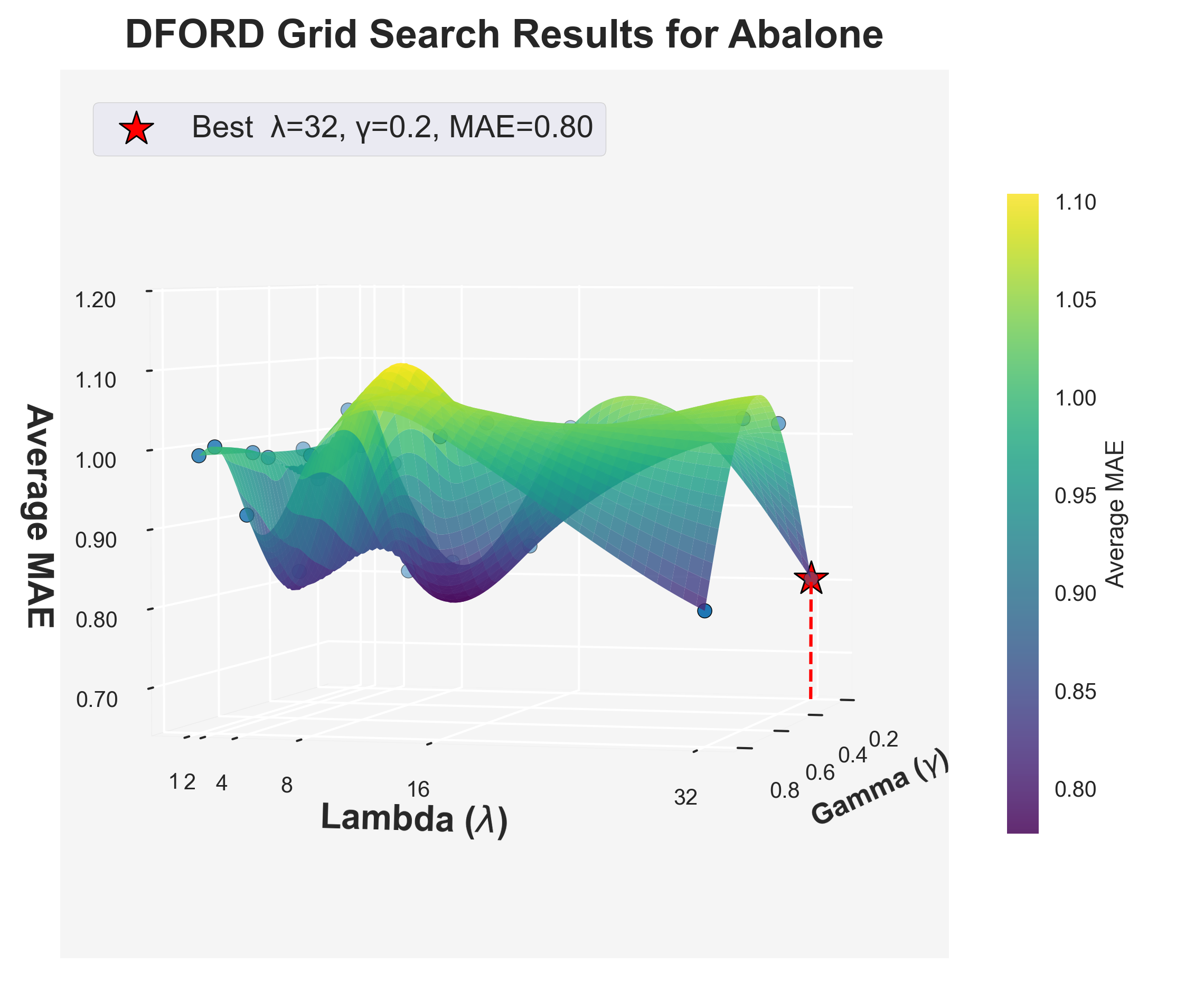}
    \includegraphics[width=0.44\columnwidth]{./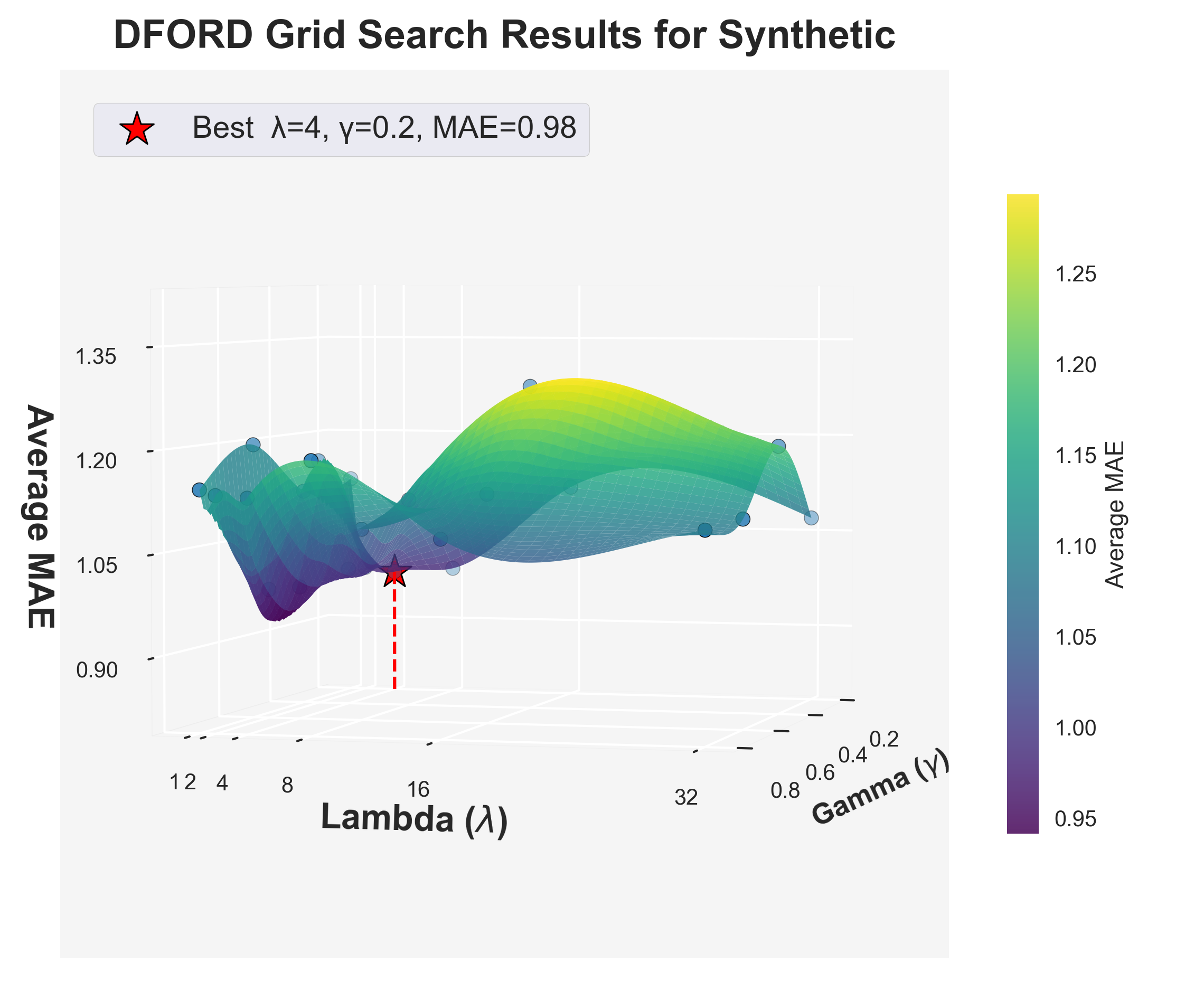}
    \\
    \includegraphics[width=0.44\columnwidth]{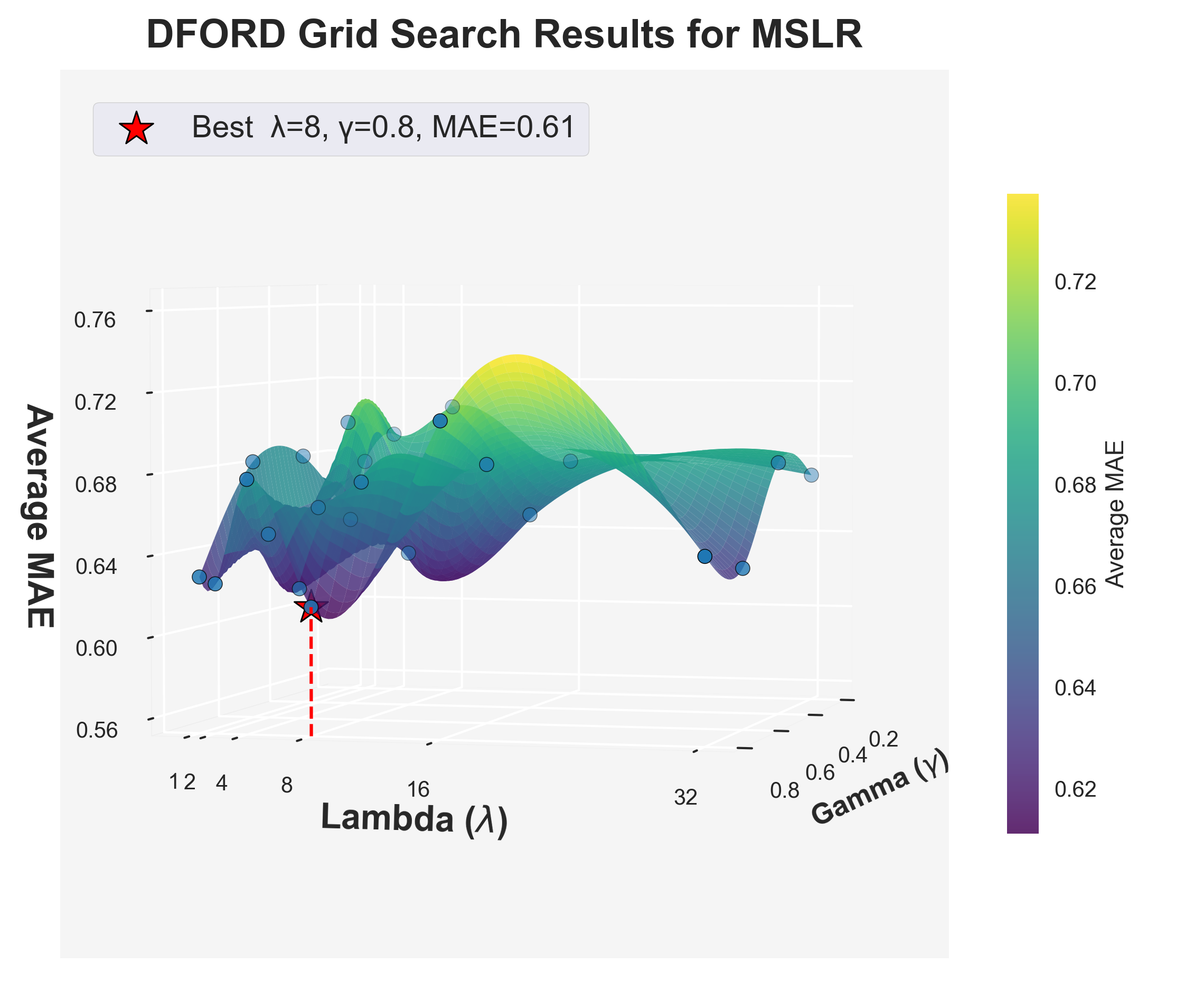}
    \includegraphics[width=0.44\columnwidth]{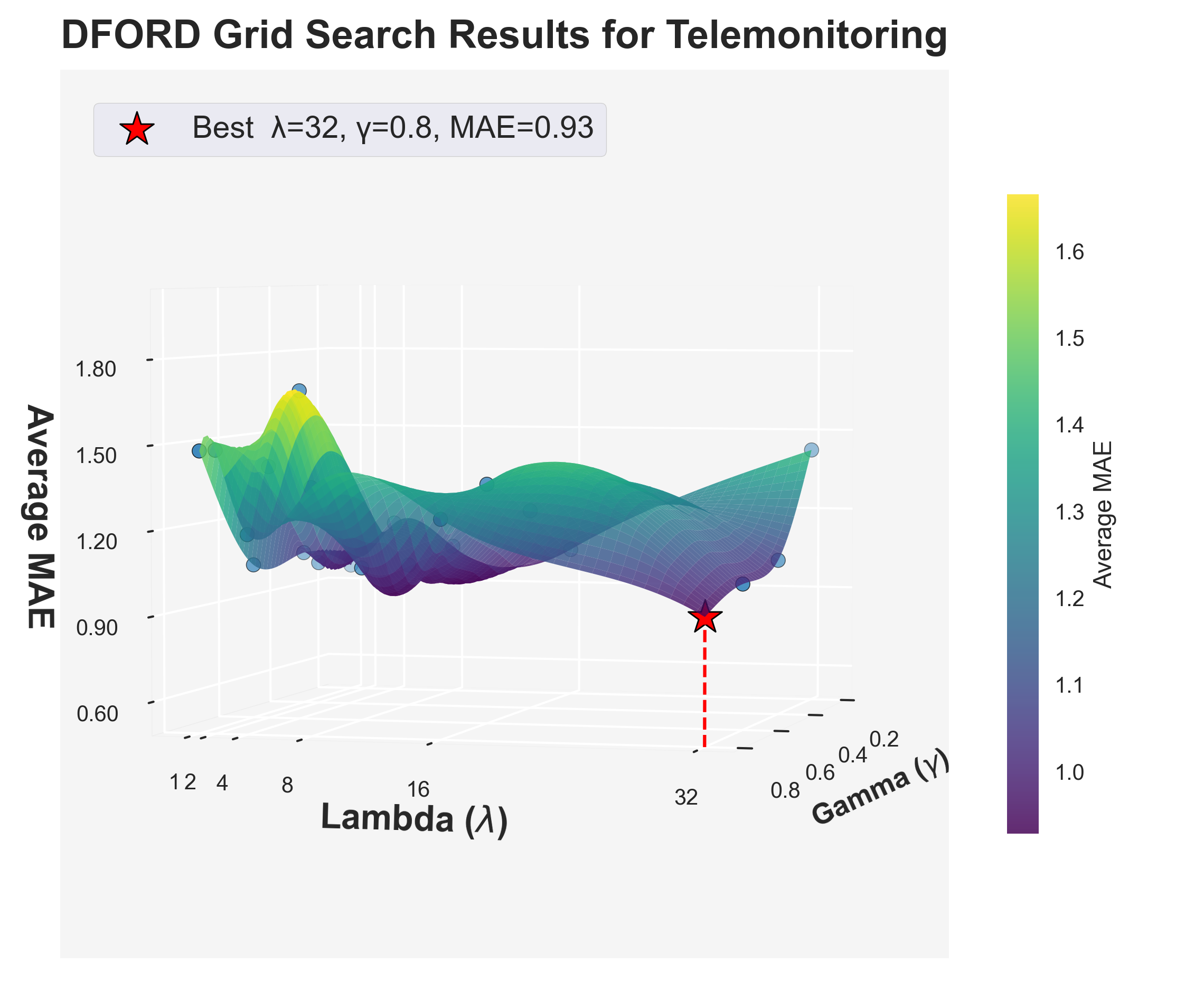} 
    \\
    \includegraphics[width=0.44\columnwidth]{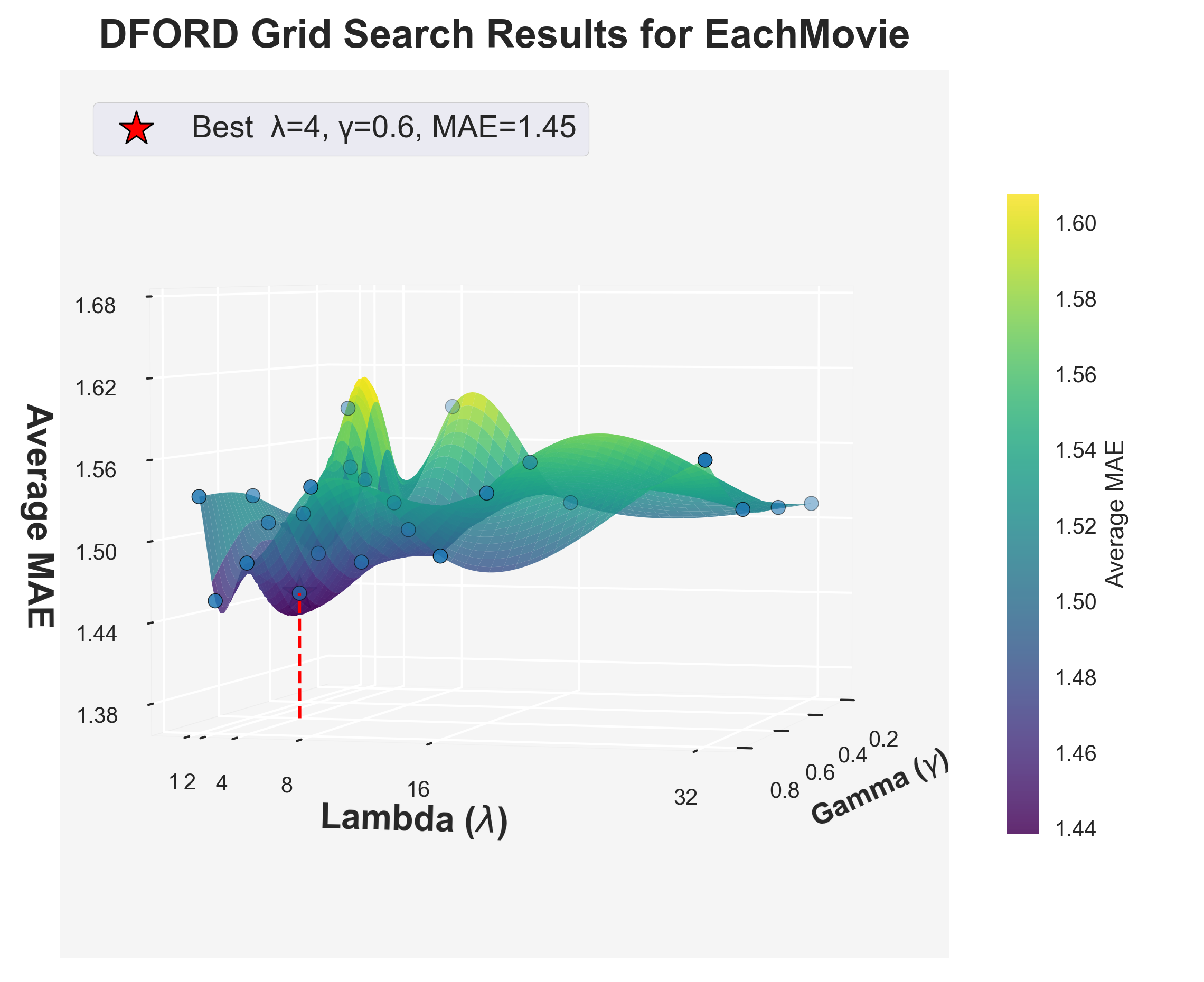}
    \includegraphics[width=0.44\columnwidth]{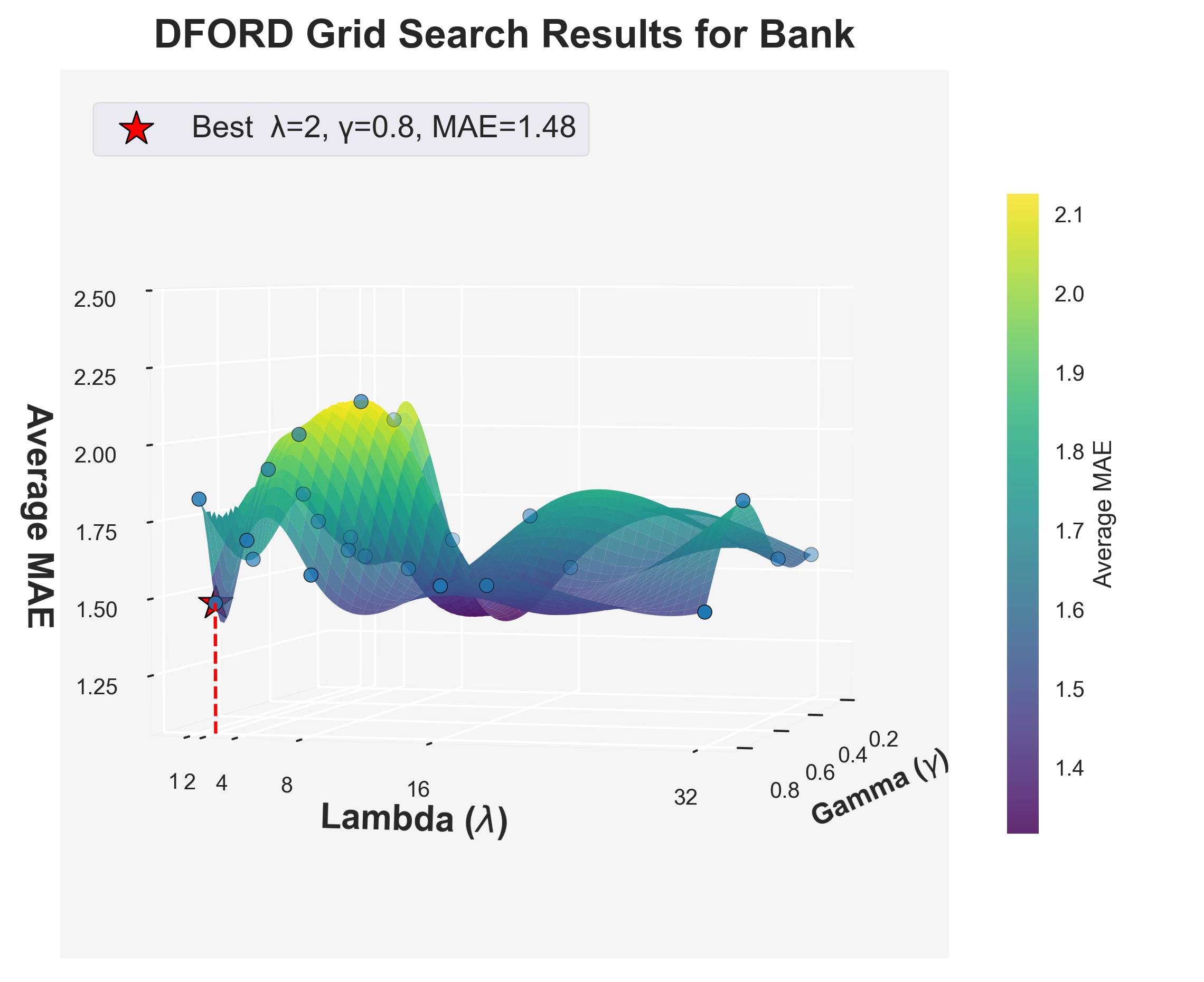}
    \\
    \includegraphics[width=0.44\columnwidth]{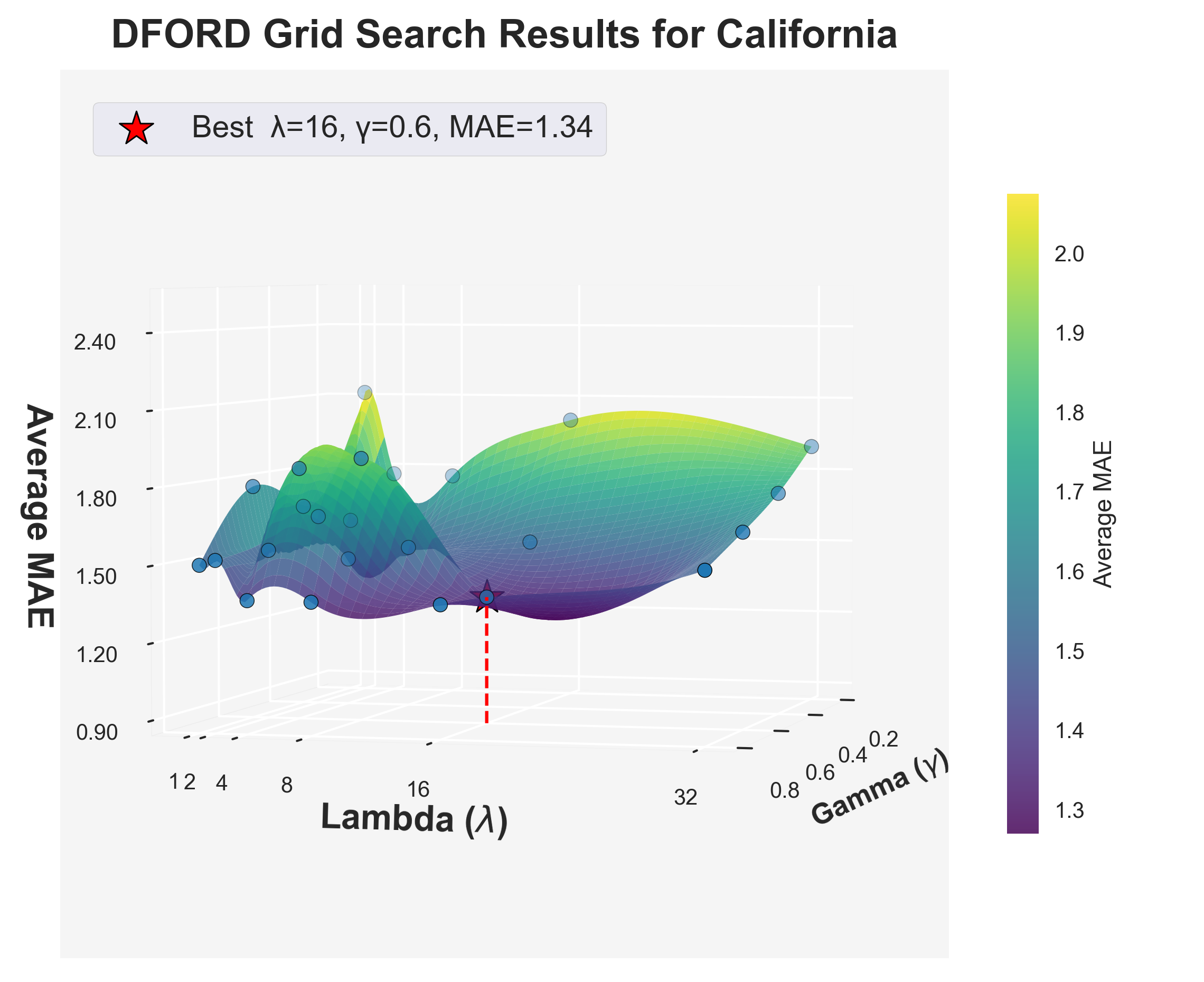}
    \includegraphics[width=0.44\columnwidth]{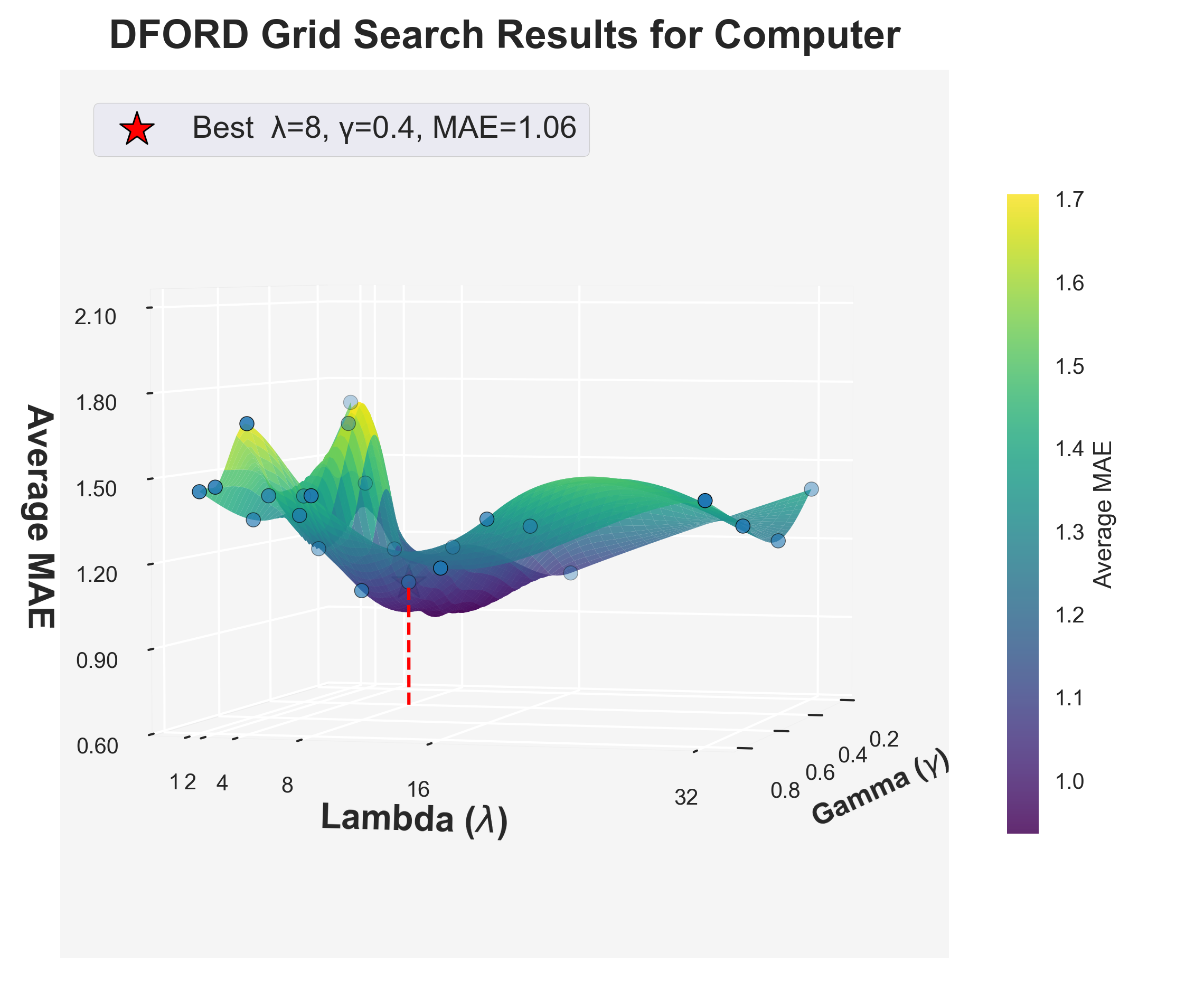}
    \caption{Hyperparameter Search ($\lambda$ and $\gamma$) for all datasets}
    \label{fig:hyperparameter-search}
\end{figure}
Figure \ref{fig:hyperparameter-search} below illustrates the grid search performed over a range of values of $\lambda$ $(1, 2, 4, 8, 16, 32)$  and $\gamma$ $(0.2, 0.4, 0.6, 0.8)$. Corresponding to each point is the average converged MAE. The point marked by a star ($\star$) is chosen to be the optimal parameter. For each pair of $\lambda$ and $\gamma$, we ran the algorithm $5$ times, $300,000$ iterations in each run, to get an averaged value of the converged loss. We choose the pair of $\lambda$ and $\gamma$ values that result in the minimum converged average MAE loss. Table~\ref{tab:dford-hyperparameters} shows optimal choice of $\lambda$ and $\delta$ used by DFORD algorithm for different datasets.
\begin{table}[h]
\centering
\begin{tabular}{|l|l|l|l|l|}
\hline
\textbf{Dataset}   & \textbf{Kernel}                                                 & \textbf{Clipping ($\alpha$)} & \textbf{$\lambda$} & \textbf{$\gamma$} \\ \hline
\textbf{EachMovie}          & Linear & 4.0  & 4  & 0.6 \\ \hline
\textbf{Telemonitoring}     & Linear & 13.0 & 32 & 0.8 \\ \hline
\textbf{MSLR Web 10K}       & Linear & 7.0  & 16 & 0.4 \\ \hline
\textbf{California Housing} & Linear & 10.0 & 16 & 0.8 \\ \hline
\textbf{Computer Activity}  & Linear & 10.0 & 8  & 0.4 \\ \hline
\textbf{Bank}               & Linear & 7.0  & 32 & 0.8 \\ \hline
\textbf{Abalone}   & $\left(1 + \langle \mathbf{x}_1, \mathbf{x}_2 \rangle\right)^3$ & 11.0                         & 32                     & 0.2          \\ \hline
\textbf{Synthetic} & $\left(1 + \langle \mathbf{x}_1, \mathbf{x}_2 \rangle\right)^2$ & 11.0                         & 4                      & 0.2          \\ \hline
\end{tabular}%
\caption{Hyperparameters used by DFORD}
\label{tab:dford-hyperparameters}
\end{table}


\subsection{Choosing Truncation Parameter $\delta$: }Figure \ref{fig:Trunc} shows the error curves (MAE vs Epoch) for different truncation values ($\delta = 100, 500, 750, 1000, 2000$) used for the Abalone ($\lambda = 32, \gamma = 0.2$) and Synthetic dataset ($\lambda = 4, \gamma = 0.2$). We can see that by increasing $\delta$, the average MAE decreases and reaches the minimum value, and by increasing $\delta$ further, the average MAE starts increasing again. We use the $\delta$ value, which gives the minimum average MAE.

\begin{figure}[H]
\centering
    \includegraphics[width=0.49\columnwidth]{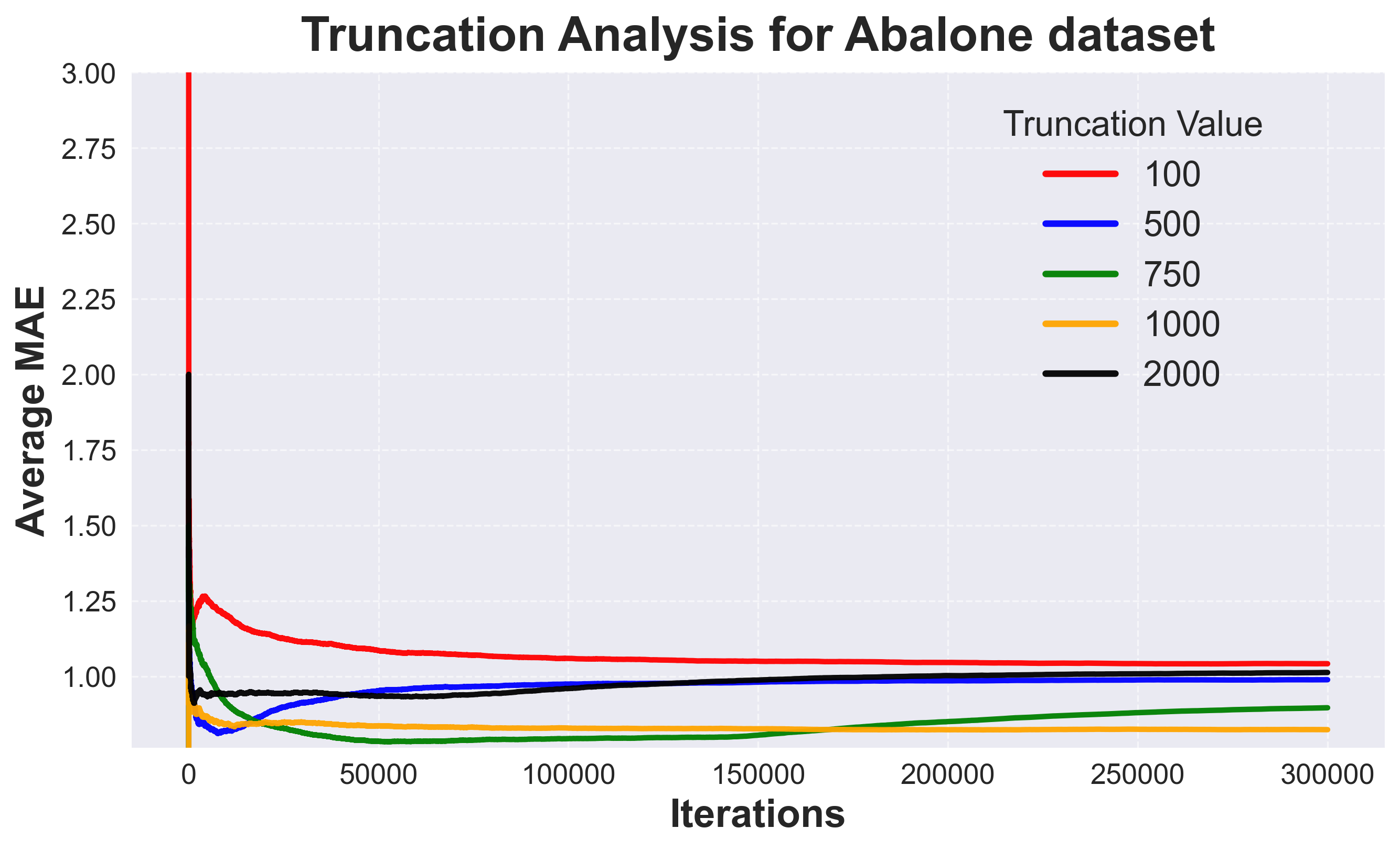}
    \includegraphics[width=0.49\columnwidth]{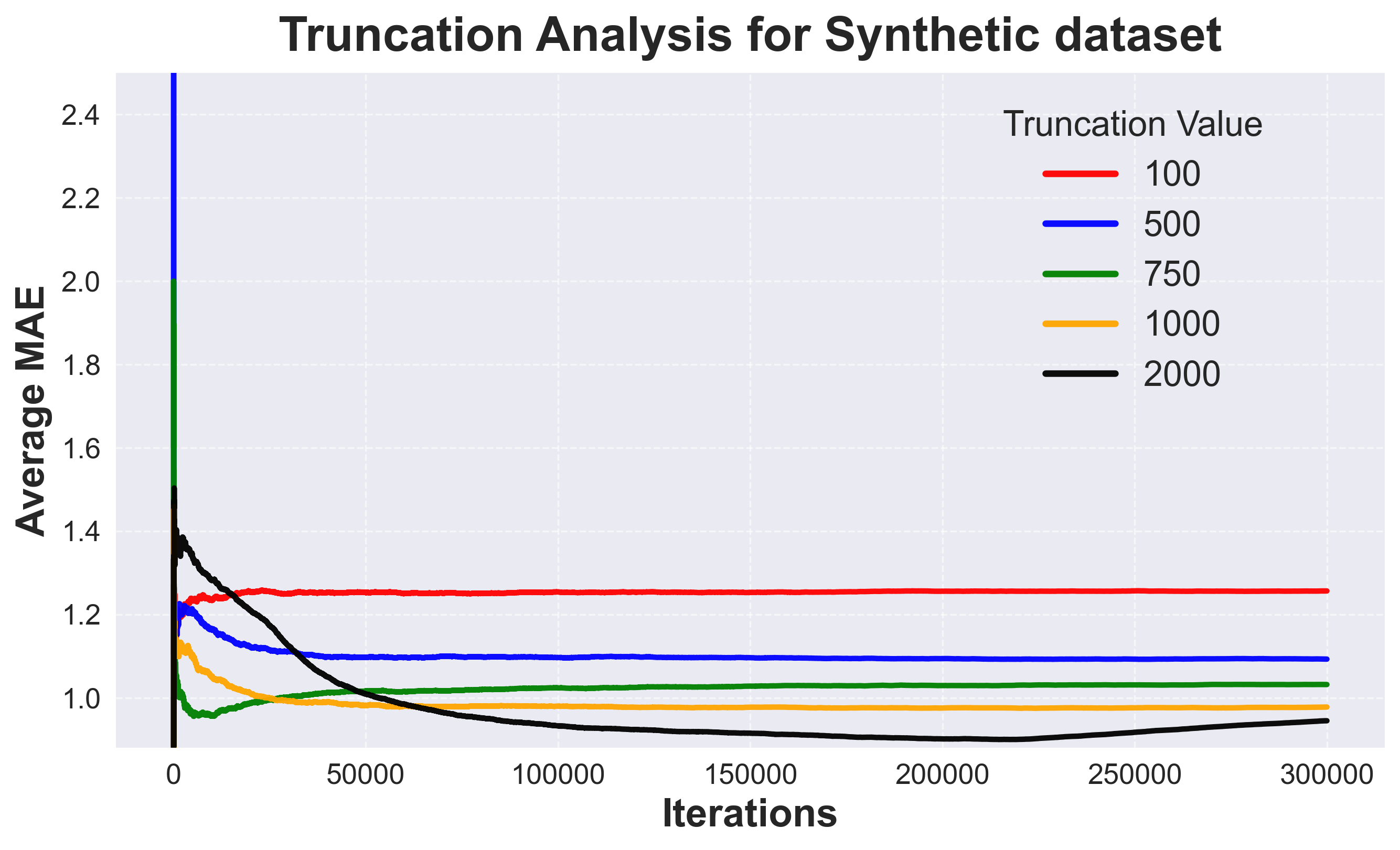}
    \caption{Searching for truncation parameter $(\delta)$ for Abalone and Synthetic dataset}
\label{fig:Trunc}
\end{figure}
\subsection{Experimental Results: }After choosing optimal hyperparameters, we ran the algorithm for each dataset for $10$ runs with $3\times10^6$ iterations in each run. All codes are written in Python, mainly using the NumPy library. All the experiments were done on a MacBook Air with an M1 chip and 8GB RAM.  

Figure \ref{fig:error} shows the error curves for all the datasets. We observe that DFORD outperforms PRIL (which is a partial label based approach) for almost all but 2 datasets. We see that for Abalone, MSLR Web 10K, EachMovie, Bank and Computer Activity datasets, DFORD achieves better or equal lowest converged average MAE compared to PRank as well (a full information approach). For MSLR Web 10K and EachMovie, the average MAE of DFORD is lower by a considerable margin. For Telemonitoring, Bank and California Housing, the Average MAE of DFORD is more than PRank, but by a very small margin. Overall, we can see that DFORD performs comparably to PRank. These observations support the argument that directional feedback can lead to efficient ordinal regression models.

The reason behind DFORD performing better than PRank for some datasets is as follows. In the cases when the intervals corresponding to two consecutive classes are overlapping, PRank puts its best bet on $\hat{y}^t$, which might be wrong, and it does not have the option to choose another label. On the other hand, DFORD allows us to explore all the labels. This exploration might pick the correct label sometimes.


\begin{figure}[t]
\centering
  \includegraphics[width=0.49\columnwidth]{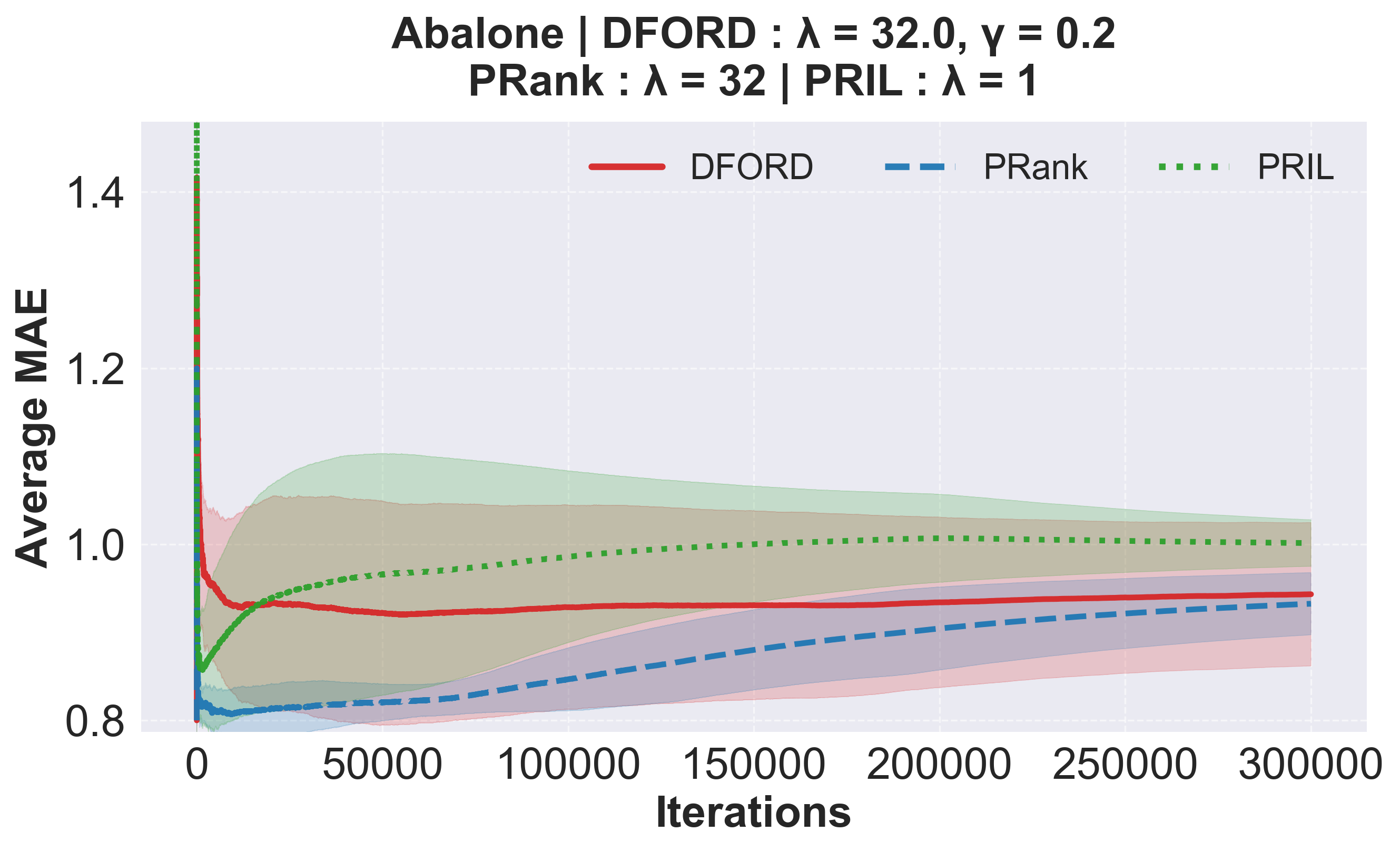}
  \includegraphics[width=0.49\columnwidth]{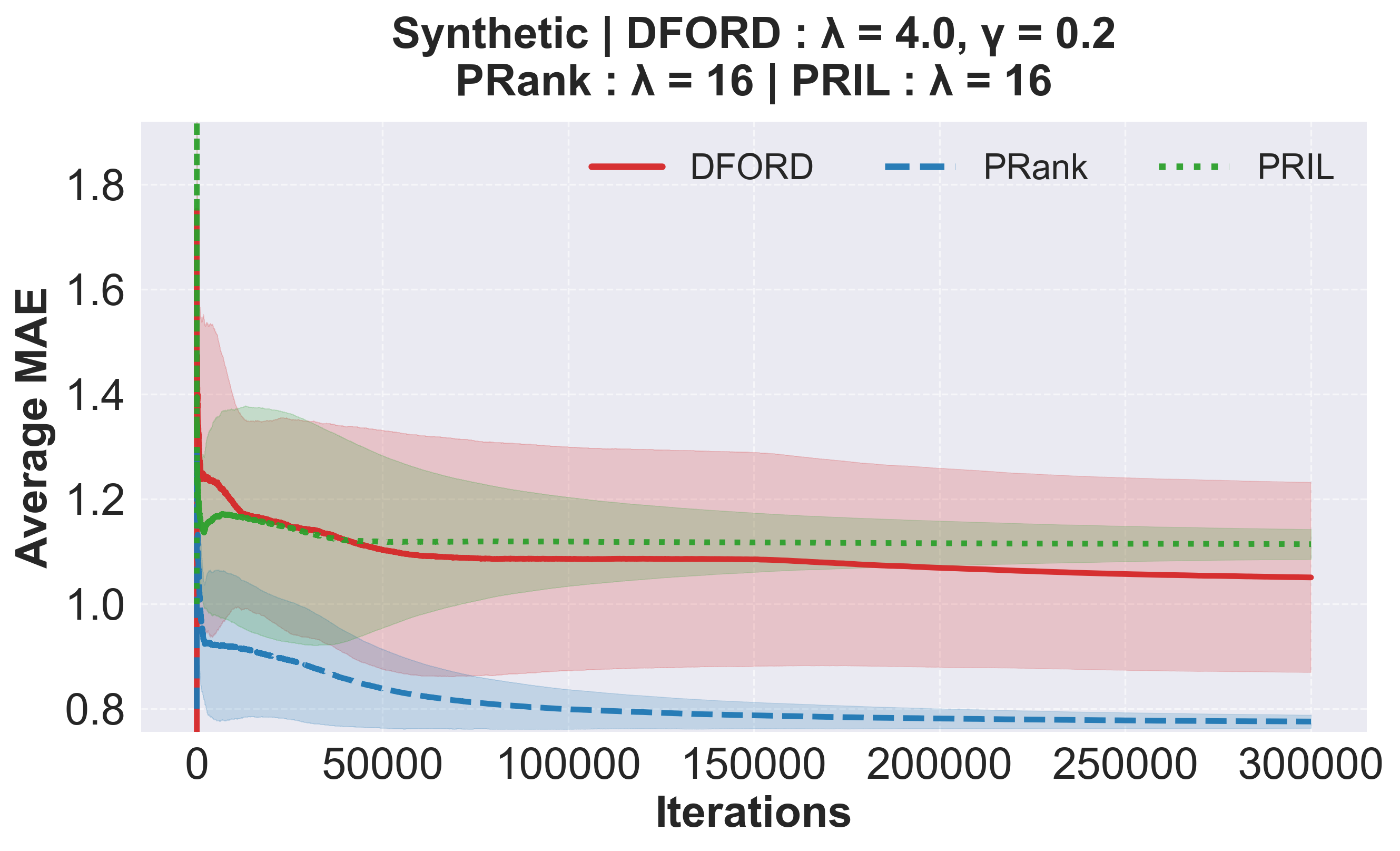}
  \\
  \includegraphics[width=0.49\columnwidth]{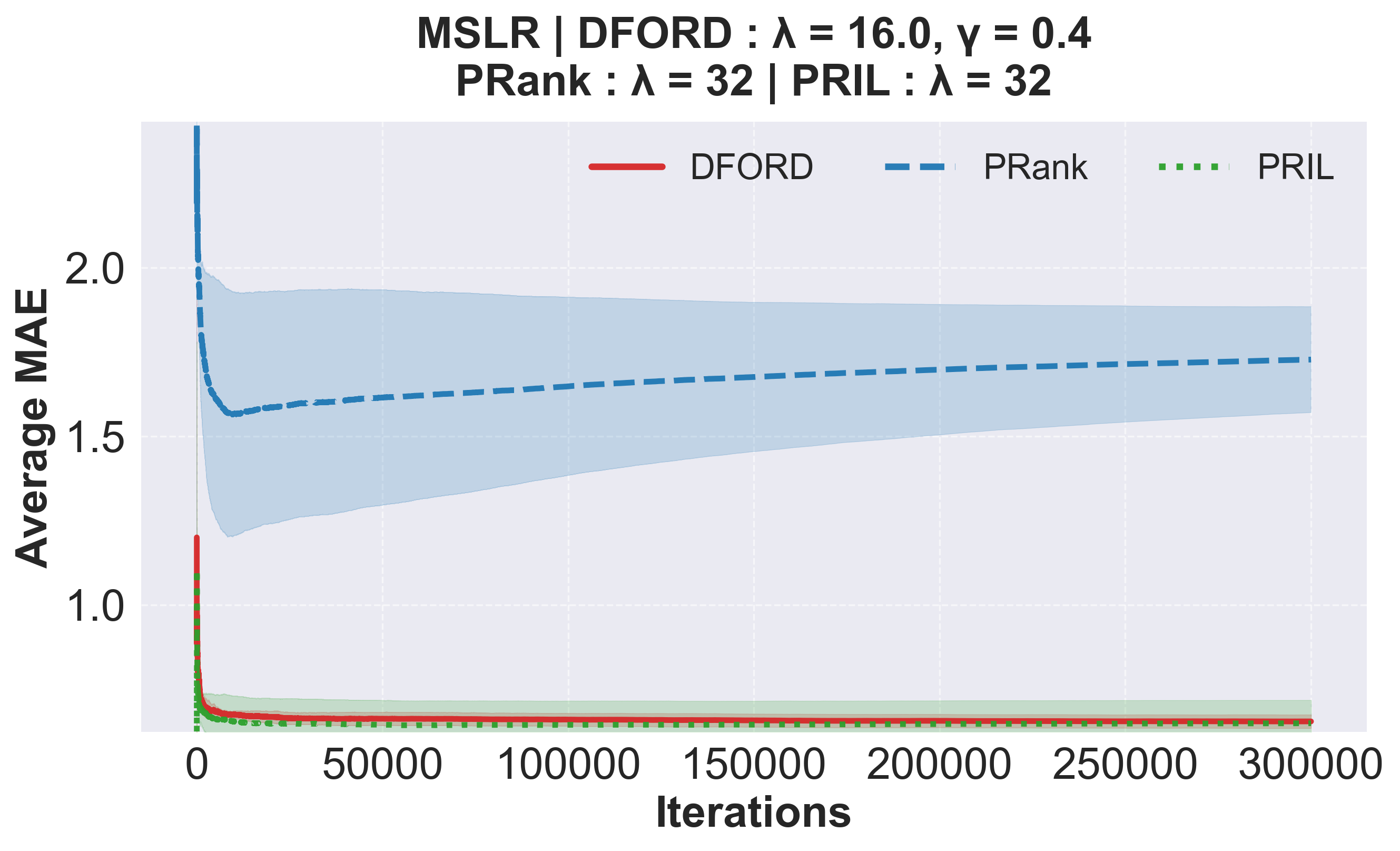}
  \includegraphics[width=0.49\columnwidth]{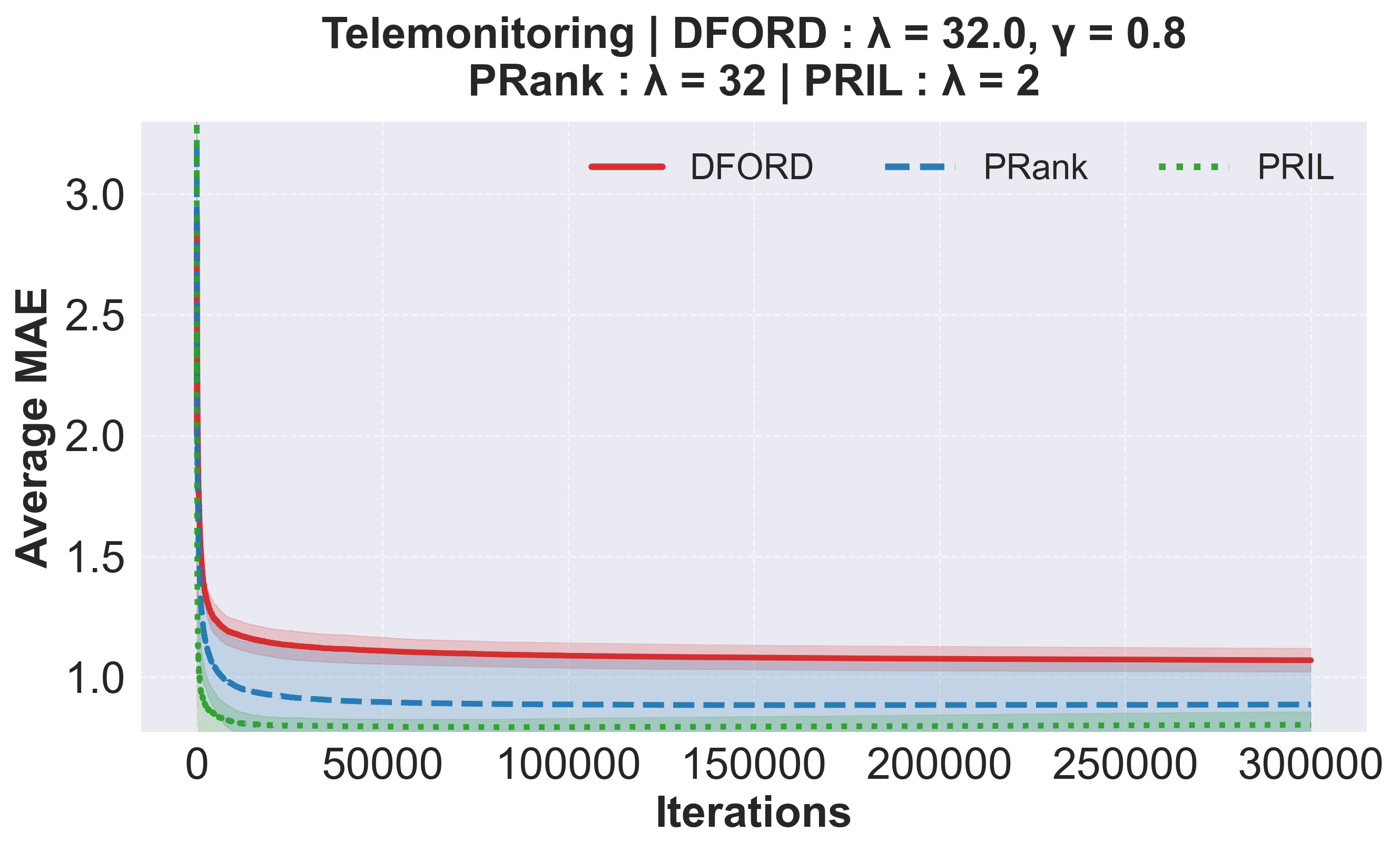}
  \\
  \includegraphics[width=0.49\columnwidth]{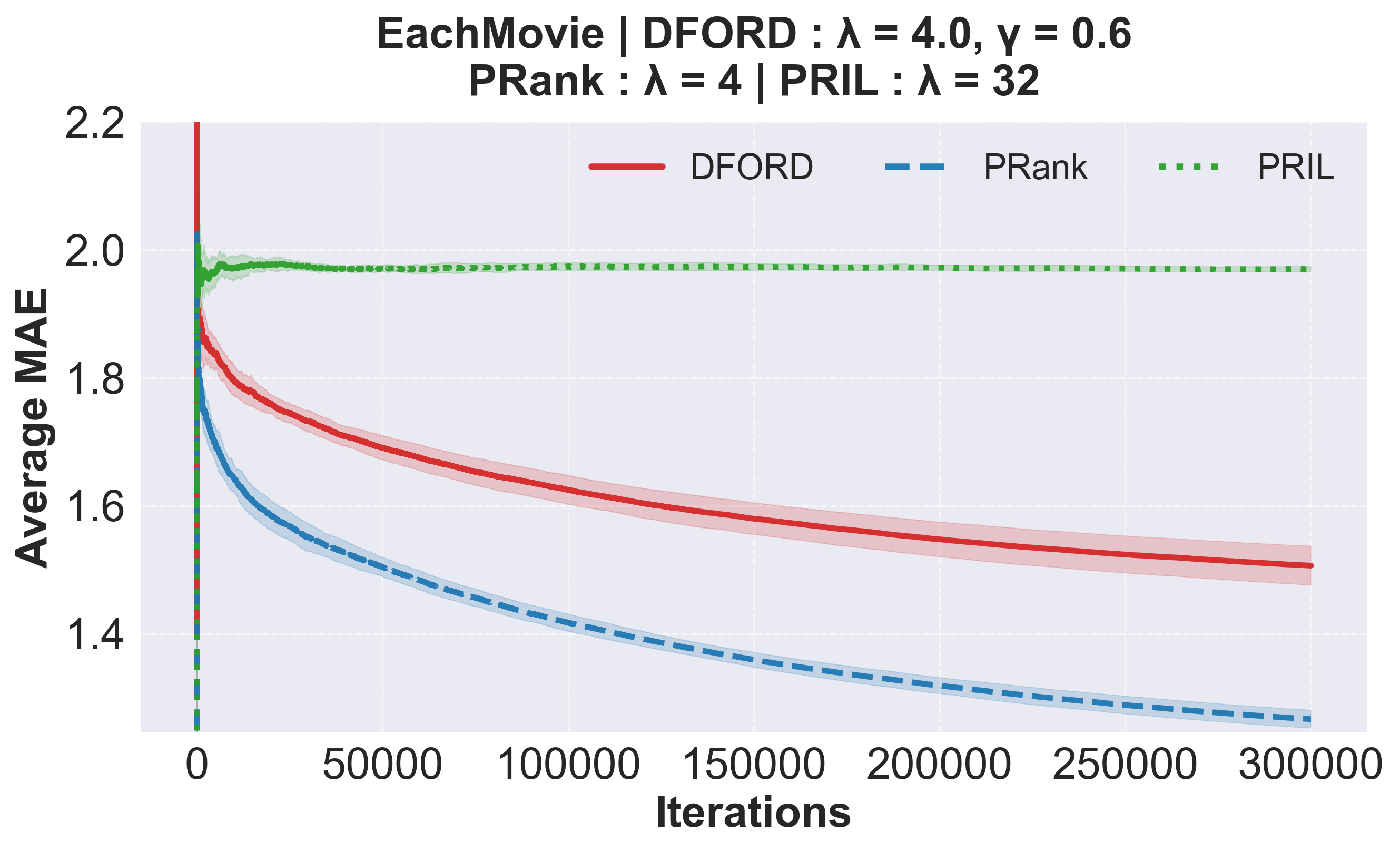}
  \includegraphics[width=0.49\columnwidth]{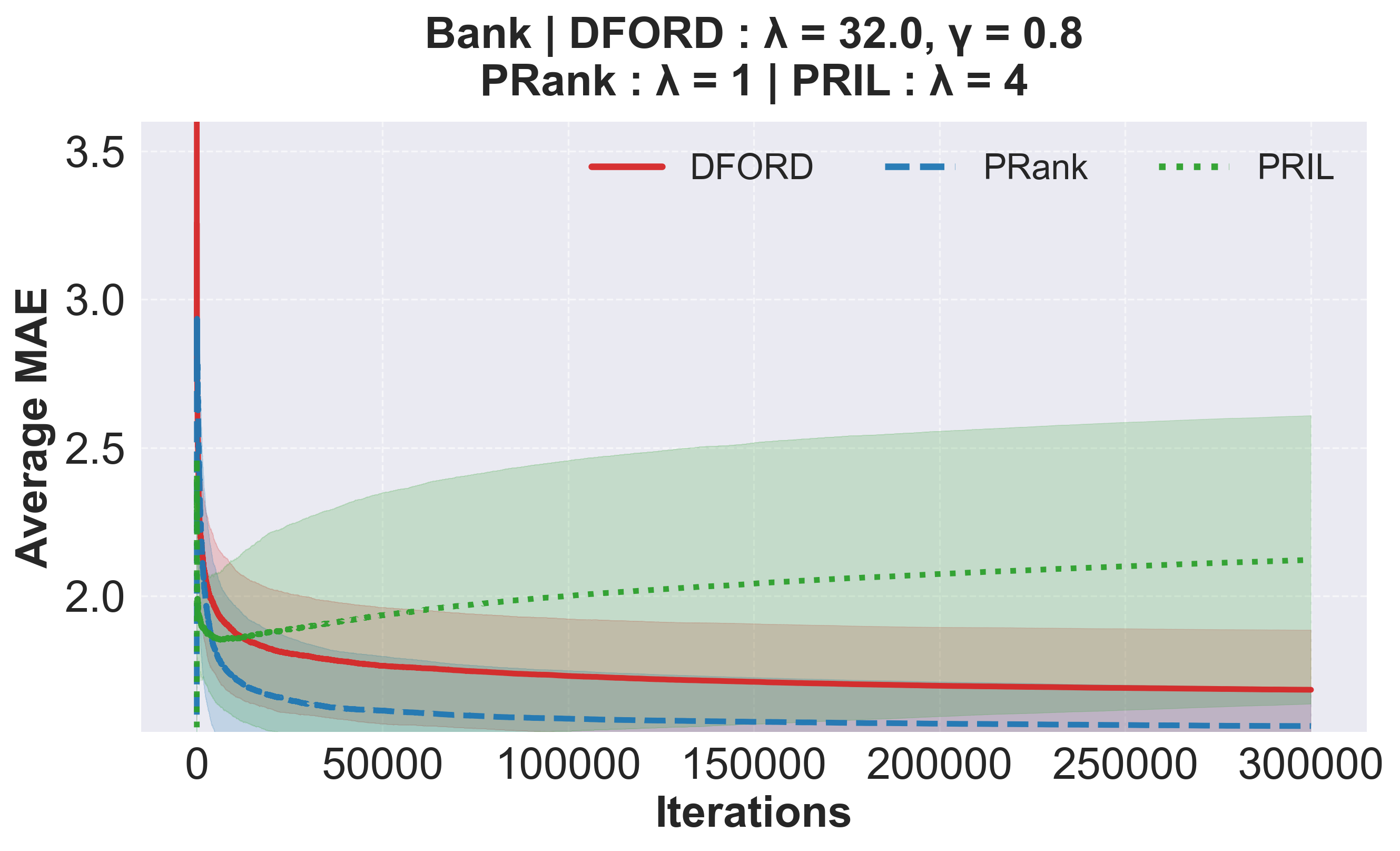}
  \\
  \includegraphics[width=0.49\columnwidth]{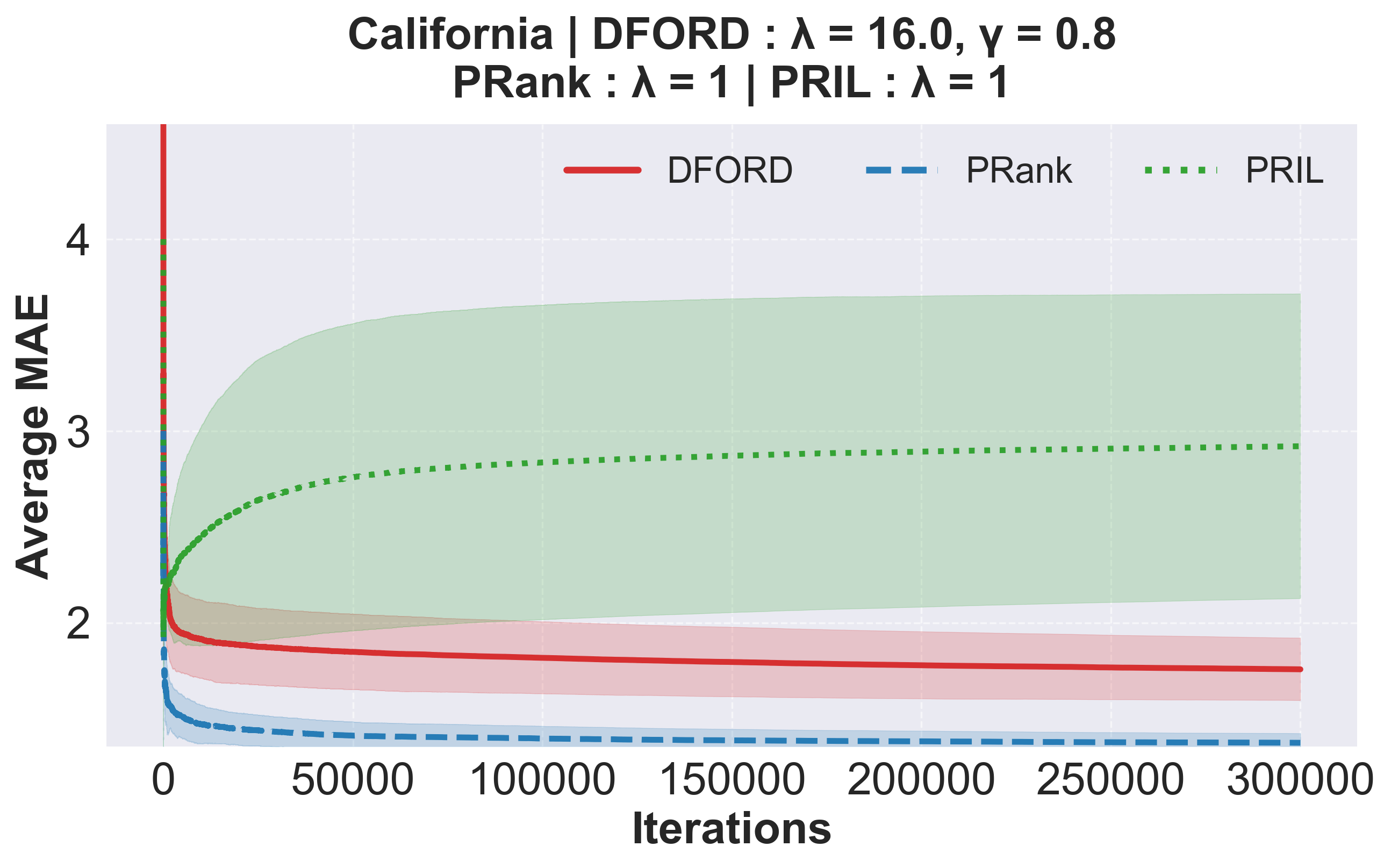}
  \includegraphics[width=0.49\columnwidth]{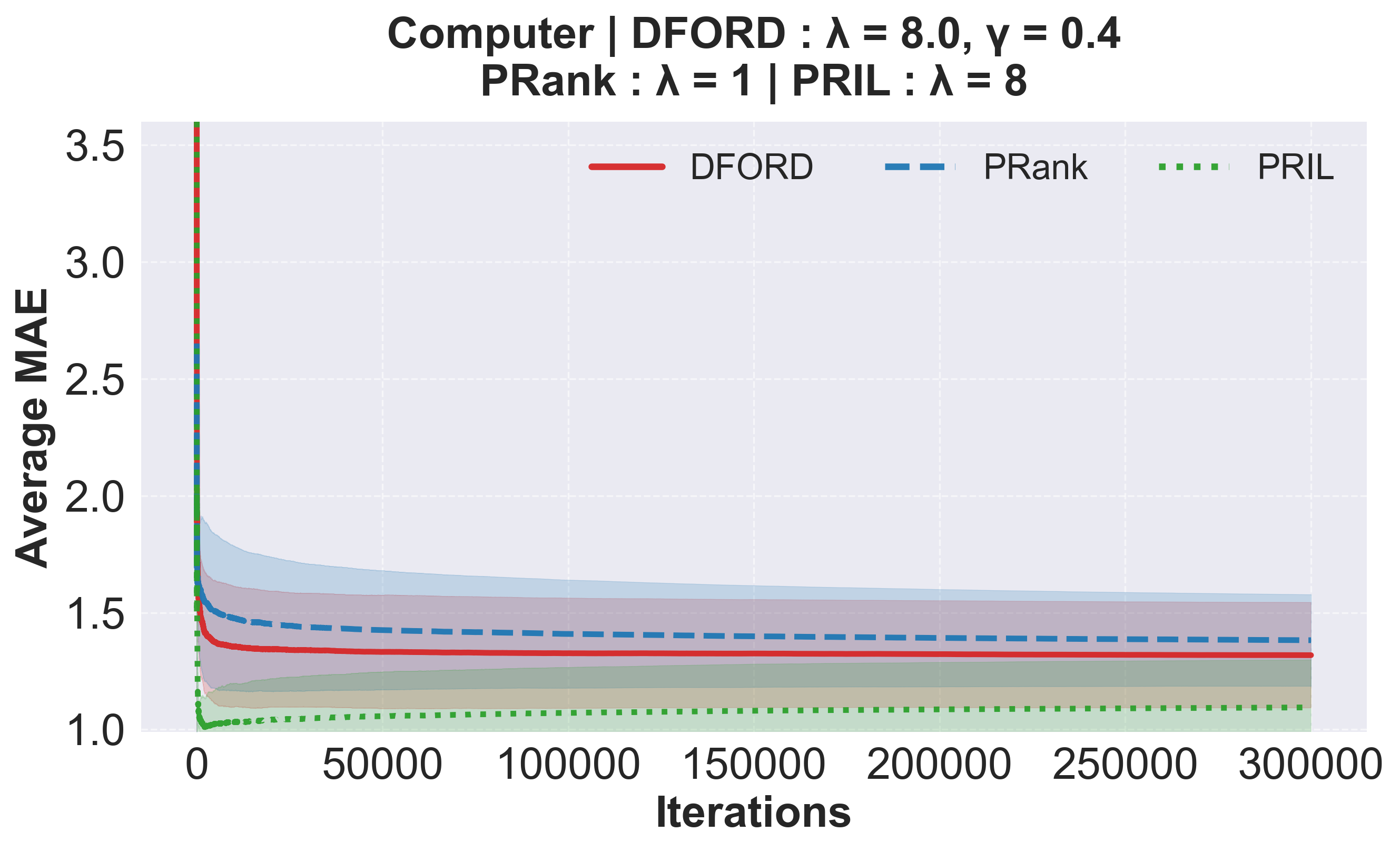}
\caption{Average Mean Absolute Error (MAE) curves for various datasets.}
\label{fig:error}
\end{figure}

\subsection*{On the Effect of Label Sampling on DFORD in Initial Rounds}
We conducted a small experiment to understand how much error can be caused in the initial rounds if we always choose the predicted label $\hat{y}^t$. We run the algorithm by selecting $\gamma=0$ (exploitation mode) for T=10000. We report average MAE results recorded at $t=1000,2000,\ldots, 10000$. Then, we do the same for choosing $\gamma=0.4$ and $\gamma=0.8$ (exploration-exploitation mode). The results are shown in Table \ref{tab:label-sampling}. Using a nonzero $\gamma$ mitigates the high error issues if we always choose $\hat{y}^t$. Thus, the use of equation (\ref{eq:sampling-prob}) for sampling classes leads to better learning even in initial rounds.

\begin{table}[]
\centering
\begin{tabular}{|l|l|l|}
\hline
\textbf{Dataset} & \textbf{$\gamma$} & \textbf{Average MAE}                                                       \\ \hline
\multicolumn{1}{|c|}{}  & 0   & {[}2.014, 1.979, 1.995, 1.992, 1.976, 1.976, 1.977, 1.987, 1.982, 1.994{]} \\ \cline{2-3} 
\textbf{EachMovie}      & 0.4 & {[}1.877, 1.873, 1.852, 1.864, 1.851, 1.842, 1.84, 1.843, 1.839, 1.839{]}  \\ \cline{2-3} 
                 & 0.8         & {[}1.805, 1.843, 1.819, 1.82, 1.819, 1.805, 1.805, 1.802, 1.795, 1.796{]}  \\ \hline
                 & 0           & {[}3.247, 3.437, 3.493, 3.506, 3.525, 3.537, 3.543, 3.552, 3.549, 3.561{]} \\ \cline{2-3} 
\textbf{California}     & 0.4 & {[}1.729, 1.642, 1.624, 1.614, 1.605, 1.604, 1.596, 1.583, 1.585, 1.577{]} \\ \cline{2-3} 
                 & 0.8         & {[}1.713, 1.634, 1.627, 1.616, 1.611, 1.612, 1.604, 1.593, 1.597, 1.59{]}  \\ \hline
                 & 0           & {[}2.151, 2.45, 2.576, 2.649, 2.685, 2.722, 2.747, 2.78, 2.801, 2.808{]}   \\ \cline{2-3} 
\textbf{Telemonitoring} & 0.4 & {[}1.101, 0.959, 0.911, 0.884, 0.881, 0.885, 0.885, 0.881, 0.878, 0.882{]} \\ \cline{2-3} 
                 & 0.8         & {[}1.167, 1.047, 0.992, 0.955, 0.948, 0.95, 0.945, 0.934, 0.928, 0.931{]}  \\ \hline
\end{tabular}%
\caption{Effect of label sampling}
\label{tab:label-sampling}
\end{table}

\section{Conclusion}
In this paper, we motivate the importance of directional feedback, a type of weak supervision setting relevant to ordinal regression tasks. We proposed an algorithm for online ordinal regression with directional feedback. We also provide theoretical guarantees on the expected regret bound of the online algorithm. Our algorithm achieves an expected regret bound of $\mathcal{O}(\log T)$. We also perform numerical experiments on real-world and synthetic datasets. Our algorithm performed comparably to the full information baseline algorithm and even outperformed it for some datasets. Our findings suggest that learning directional feedback can lead to equally efficient ordinal regression models compared to full information-based models.

\subsubsection*{Acknowledgments}
Naresh Manwani gratefully acknowledges support from ANRF under grant CRG/2023/007970.  

\bibliography{main}
\bibliographystyle{tmlr}

\appendix
\section{MISSING PROOFS}
\subsection{Proof of Theorem 1}
Let $d_t=\mathbb{I}[\tilde{y}^t<y^t]$. Taking expectation of $\taut_i^t$ with respect to $P^t$, we get
    \begin{align*}
    &\mathbb{E}_{P^t}[\tilde{\tau}_i^t]=\mathbb{E}_{P^t}\left[\frac{1}{P^t(i)}d_t\mathbb{I}[i=\tilde{y}^t]\mathbb{I}[\tilde{z}_i^t(\ww^t.\xx^t -\theta_i^t)\leq 0]\right]-\mathbb{E}\left[\frac{1}{P^t(i)}(1-d_t)\mathbb{I}[i=\tilde{y}^t]\mathbb{I}[\tilde{z}_i^t(\ww^t.\xx^t -\theta_i^t)\leq 0]\right]\\
    &=\sum_{j=1}^{K}\frac{P^t(j)}{P^t(i)}\mathbb{I}[j<y^t]\mathbb{I}[i= j]\mathbb{I}\left[\frac{\mathbb{I}[j<y^t]\mathbb{I}[i= j]-\mathbb{I}[j\geq y^t]\mathbb{I}[i= j]}{P^t(i)}(\ww^t.\xx^t -\theta_i^t)\leq 0\right]\\
    &\;\;\;-\sum_{j=1}^{K}\frac{P^t(j)}{P^t(i)}\mathbb{I}[j\geq y^t]\mathbb{I}[i= j]\mathbb{I}\left[\frac{\mathbb{I}[j<y^t]\mathbb{I}[i= j]-\mathbb{I}[j\geq y^t]\mathbb{I}[i= j]}{P^t(i)}(\ww^t.\xx^t -\theta_i^t)\leq 0\right]\\
    &=\mathbb{I}[i<y^t]\mathbb{I}\left[\frac{\mathbb{I}[i<y^t]-\mathbb{I}[i\geq y^t]}{P^t(i)}(\ww^t.\xx^t -\theta_i^t)\leq 0\right]-\mathbb{I}[i\geq y^t]\mathbb{I}\left[\frac{\mathbb{I}[i<y^t]-\mathbb{I}[i\geq y^t]}{P^t(i)}(\ww^t.\xx^t -\theta_i^t)\leq 0\right]\\
    &=\left[\mathbb{I}[i<y^t]-\mathbb{I}[i\geq y^t]\right]\mathbb{I}\left[\frac{z_i^t}{P^t(i)}(\ww^t.\xx^t -\theta_i^t)\leq 0\right]\\
    &=z_i^t\mathbb{I}\left[z_i^t(\ww^t.\xx^t -\theta_i^t)\leq 0\right]=\tau_i^t.
    \end{align*}

\subsection{Proof for Lemma 1}
We see that,
\begin{align*}
\theta_{i+1}^{t+1} &- \theta_i^{t+1} 
= (1-\eta_t \lambda)(\theta_{i+1}^t-\theta_i^t) - \eta_t(\tilde{\tau}_{i+1}^t -\tilde{\tau}_{i}^t)
\end{align*}
where $\eta_t=\frac{1}{\lambda t}$. We consider the following different cases to show that DFORD-Linear preserves the ordering of the thresholds in the expected sense. Let us use the notation $P^{[t]}$ for $P^1P^2 \ldots P^t$ which is joint distribution of $\tilde{y}^1,\ldots,\tilde{y}^t$. \begin{enumerate}
\item {\bf  Case 1: $\mathbb{I}[\tilde{y}^t<y^t]=1$}. Here, we analyse different cased separately.
\begin{enumerate}
\item $i\in\{1,\ldots,\tilde{y}^t-2\}$: In this case
$\tilde{\tau}_{i+1}^t =\tilde{\tau}_{i}^t=0$. Thus, \begin{align*}
\theta_{i+1}^{t+1} &- \theta_i^{t+1} 
= (\theta_{i+1}^t-\theta_i^t).
\end{align*}
Taking expectation with respect to $P^{[t]}$ on both sides, we get
\begin{align*}
\Ex_{P^{[t]}}[\theta_{i+1}^{t+1} - \theta_i^{t+1} ]
&= \Ex_{P^{[t]}}[(\theta_{i+1}^t-\theta_i^t)]=\Ex_{P^{[t-1]}}[(\theta_{i+1}^t-\theta_i^t)]
\end{align*}
Since $\mathbb{E}_{P^{[t-1]}}[\theta_{i+1}^t-\theta_i^t]\geq 0$, we have $\mathbb{E}_{P^{[t]}}[\theta_{i+1}^{t+1}-\theta_i^{t+1}]\geq  0$.
\item $i=\tilde{y}^t-1$: In this case, $\tilde{\tau}_{\tilde{y}^t-1}^t=0$. Thus, 
\begin{align}
\label{eq:diff1}
\theta_{\tilde{y}^t}^{t+1} &- \theta_{\tilde{y}^t-1}^{t+1} 
= ( \theta_{\tilde{y}^t}^{t} - \theta_{\tilde{y}^t-1}^{t} ) -\eta_t \tilde{\tau}_{\tilde{y}^t}^t
\end{align}
We see that $\tilde{z}_{\tilde{y}^t}^t=\frac{1}{P_t(\tilde{y}^t)}$ and $\tilde{\tau}_{\tilde{y}^t}^t=\frac{1}{P_t(\tilde{y}^t)}\mathbb{I}[\frac{1}{P_t(\tilde{y}^t)}(\ww^t.\xx^t-\theta_{\tilde{y}^t})\leq 0]$. Thus, $\mathbb{E}_{P^t}[\tilde{\tau}^t_{\tilde{y}^t}]=\sum_{k=1}^KP^t(k)\frac{1}{P^t(k)}\mathbb{I}[\frac{1}{P^t(k)}(\ww^t.\xx^t-\theta_k^t)\leq 0]=\sum_{k=1}^K \mathbb{I}[\frac{1}{P^t(k)}(\ww^t.\xx^t-\theta_k^t)\leq 0]\leq K$. Taking expectations on both sides with respect to $P^{[t]}$ of eq.(\ref{eq:diff1}), we have
\begin{align*}
     \mathbb{E}_{P^{[t]}}[  \theta_{\tilde{y}^t}^{t+1} - \theta_{\tilde{y}^t-1}^{t+1} ]
&= (1-\eta_t \lambda)\mathbb{E}_{P^{[t]}}[ \theta_{\tilde{y}^t}^{t} - \theta_{\tilde{y}^t-1}^{t} ]-\eta_t\mathbb{E}_{P^{[t]}}[\tilde{\tau}^t_{\tilde{y}^t}]\\
&=(1-\eta_t \lambda)\mathbb{E}_{P^{[t-1]}}[\theta_{\tilde{y}^t}^{t} - \theta_{\tilde{y}^t-1}^{t}]-\eta_t \mathbb{E}_{P^t}[\tilde{\tau}^t_{\tilde{y}^t}]\\
&\geq (1-\eta_t \lambda)\mathbb{E}_{P^{[t-1]}}[\theta_{\tilde{y}^t}^{t} - \theta_{\tilde{y}^t-1}^{t}]-K\eta_t
\end{align*} 
Since $\mathbb{E}_{P^{[t-1]}}[\theta_{\tilde{y}^t}^{t} - \theta_{\tilde{y}^t-1}^{t}]\geq\frac{K\eta_t}{1-\lambda\eta_t}$, we have $\mathbb{E}_{P^{[t]}}[\theta_{\tilde{y}^t}^{t+1} - \theta_{\tilde{y}^t-1}^{t+1}]\geq 0$.
\item $i=\tilde{y}^t$: In this case $\tilde{\tau}^{t}_{\tilde{y}^t+1}=0$ and $\mathbb{E}^{P^t}[\tilde{\tau}^t_{\tilde{y}^t}]=\sum_{k=1}^K\mathbb{I}[\frac{1}{P^t(k)}(\ww^t.\xx^t-\theta_k^t)\leq 0]$. Thus,
\begin{eqnarray*}
\theta_{\tilde{y}^t+1}^{t+1} - \theta_{\tilde{y}^t}^{t+1}  &=  (1-\eta_t\lambda)(\theta_{\tilde{y}^t+1}^t - \theta_{\tilde{y}^t}^t)+\eta_t\tilde{\tau}^t_{\tilde{y}^t} 
\end{eqnarray*}
Taking expectation on both sides, we get the following.
\begin{align*}
&\mathbb{E}_{P^{[t]}}[ \theta_{\tilde{y}^t+1}^{t+1} - \theta_{\tilde{y}^t}^{t+1} ] =  (1-\eta_t\lambda)\mathbb{E}_{P^{[t-1]}}[(\theta_{\tilde{y}^t+1}^t - \theta_{\tilde{y}^t}^t)]+\eta_t\mathbb{E}_{P^t}[\tilde{\tau}^t_{\tilde{y}^t} ]\\
&=(1-\eta_t\lambda)\mathbb{E}_{P^{[t-1]}}[(\theta_{\tilde{y}^t+1}^t - \theta_{\tilde{y}^t}^t)]+\eta_t\sum_{k=1}^K\mathbb{I}[\frac{1}{P^t(k)}(\ww^t.\xx^t-\theta_k^t)\leq 0]\\
&\geq K\eta_t+\eta_t\sum_{k=1}^K\mathbb{I}[\frac{1}{P^t(k)}(\ww^t.\xx^t-\theta_k^t)\leq 0]\geq 0
\end{align*}
\item $i\in \{\tilde{y}^t+1,\ldots,K-1\}$: In this case, $\tilde{\tau}^t_{i+1}=\tilde{\tau}^t_i=0$. Thus,
\begin{align*}
    \theta^{t+1}_{i+1}-\theta_i^{t+1}=(1-\eta_t \lambda)(\theta_{i+1}^t-\theta_i^t)
\end{align*}
and taking expectation on both sides, we get
\begin{align*}
    \mathbf{E}_{P^{[t]}}[\theta^{t+1}_{i+1}-\theta_i^{t+1}]=(1-\eta_t \lambda)\mathbf{E}_{P^{[t-1]}}[(\theta_{i+1}^t-\theta_i^t)]\geq K\eta_t>0
\end{align*}
\end{enumerate}
\item {\bf Case 2: $\mathbb{I}[\tilde{y}^t<y^t]=0$}. Which means $\tilde{y}^t\geq y^t$.
\begin{enumerate}
\item $i\in \{1,\ldots,\tilde{y}^t-2\}$: In this case
$\tilde{\tau}_{i+1}^t =\tilde{\tau}_{i}^t=0$. Thus, \begin{align*}
\theta_{i+1}^{t+1} &- \theta_i^{t+1} 
= (1-\eta_t \lambda)(\theta_{i+1}^t-\theta_i^t).
\end{align*}
Since $\mathbb{E}_{P^{[t-1]}}[\theta_{i+1}^t-\theta_i^t]\geq \frac{K\eta_t}{1-\lambda \eta_t}$, we have $\mathbb{E}_{P^{[t]}}[\theta_{i+1}^{t+1}-\theta_i^{t+1}]\geq K\eta_t> 0$.
\item $i=\tilde{y}^t-1$: In this case, $\tilde{\tau}_{\tilde{y}^t-1}^t=0$. Thus, 
\begin{align}
\label{eq:diff2}
\theta_{\tilde{y}^t}^{t+1} &- \theta_{\tilde{y}^t-1}^{t+1} 
= (1-\eta_t \lambda)( \theta_{\tilde{y}^t}^{t} - \theta_{\tilde{y}^t-1}^{t} ) -\eta_t \tilde{\tau}_{\tilde{y}^t}^t
\end{align}
We see that $\tilde{z}_{\tilde{y}^t}^t=-\frac{1}{P_t(\tilde{y}^t)}$ and $\tilde{\tau}_{\tilde{y}^t}^t=-\frac{1}{P_t(\tilde{y}^t)}\mathbb{I}[\frac{1}{P_t(\tilde{y}^t)}(\ww^t.\xx^t-\theta_{\tilde{y}^t})\leq 0]$. Thus, $\mathbb{E}_{P^t}[\tilde{\tau}^t_{\tilde{y}^t}]=\sum_{k=1}^KP^t(k)\frac{-1}{P^t(k)}\mathbb{I}[\frac{-1}{P^t(k)}(\ww^t.\xx^t-\theta_k^t)\leq 0]=-\sum_{k=1}^K \mathbb{I}[\frac{-1}{P^t(k)}(\ww^t.\xx^t-\theta_k^t)\leq 0]$. Taking expectations on both sides of eq.(\ref{eq:diff2}), we get
\begin{align*}
&\mathbb{E}_{P^{[t]}}[  \theta_{\tilde{y}^t}^{t+1} - \theta_{\tilde{y}^t-1}^{t+1} ]
= (1-\eta_t \lambda)\mathbb{E}_{P^{[t-1]}}[ \theta_{\tilde{y}^t}^{t} - \theta_{\tilde{y}^t-1}^{t} ]-\eta_t\mathbb{E}_{P^t}[\tilde{\tau}^t_{\tilde{y}^t}]\\
&\geq (1-\eta_t \lambda)\mathbb{E}_{P^{[t-1]}}[\theta_{\tilde{y}^t}^{t} - \theta_{\tilde{y}^t-1}^{t}]+\eta_t\sum_{k=1}^K \mathbb{I}[\frac{-1}{P^t(k)}(\ww^t.\xx^t-\theta_k^t)\leq 0]\\
&\geq K\eta_t+\eta_t\sum_{k=1}^K \mathbb{I}[\frac{-1}{P^t(k)}(\ww^t.\xx^t-\theta_k^t)\leq 0]>0
\end{align*} 
\item $i=\tilde{y}^t$: In this case $\tilde{\tau}^{t}_{\tilde{y}^t+1}=0$ and $\mathbb{E}_{P^t}[\tilde{\tau}^t_{\tilde{y}^t}]=-\sum_{k=1}^K\mathbb{I}[\frac{-1}{P^t(k)}(\ww^t.\xx^t-\theta_k^t)\leq 0]\geq -K$. Thus,
\begin{eqnarray*}
\theta_{\tilde{y}^t+1}^{t+1} - \theta_{\tilde{y}^t}^{t+1}  &=  (1-\eta_t\lambda)(\theta_{\tilde{y}^t+1}^t - \theta_{\tilde{y}^t}^t)+\eta_t\tilde{\tau}^t_{\tilde{y}^t} 
\end{eqnarray*}
Taking expectation on both sides, we get the following.
\begin{align*}
    \mathbb{E}_{P^{[t]}}[ \theta_{\tilde{y}^t+1}^{t+1} - \theta_{\tilde{y}^t}^{t+1} ] &=  (1-\eta_t\lambda)\mathbb{E}_{P^{[t-1]}}[(\theta_{\tilde{y}^t+1}^t - \theta_{\tilde{y}^t}^t)]+\eta_t\mathbb{E}_{P^t}[\tilde{\tau}^t_{\tilde{y}^t} ]\\
&\geq (1-\eta_t\lambda)\mathbb{E}_{P^{[t-1]}}[(\theta_{\tilde{y}^t+1}^t - \theta_{\tilde{y}^t}^t)]-K\eta_t\geq 0
\end{align*}
\item $i\in \{\tilde{y}^t+1,\ldots,K-1\}$: In this case, $\tilde{\tau}^t_{i+1}=\tilde{\tau}^t_i=0$. Thus,
\begin{align*}
    \theta^{t+1}_{i+1}-\theta_i^{t+1}=(1-\eta_t \lambda)(\theta_{i+1}^t-\theta_i^t)
\end{align*}
and taking expectation on both sides, we get
\begin{align*}
    \mathbb{E}_{P^{[t]}}[\theta^{t+1}_{i+1}-\theta_i^{t+1}]=(1-\eta_t \lambda)\mathbb{E}_{P^{[t-1]}}[(\theta_{i+1}^t-\theta_i^t)]\geq K\eta_t>0
\end{align*}
\end{enumerate}
\end{enumerate}

\subsection{Proof of Theorem 2}
The proof of Theorem 2 requires several other results. We will first present additional propositions required for the proof.
\begin{proposition}
\begin{enumerate}
        \item $ \Ex_{P^t}[(\taut_{\yt^t}^t)^2]  \leq \frac{6K^2}{\gamma}(1+\ln K)$
        \item $\Ex_{P^t}[\vert \taut^t_{\yt^t}\vert]\leq K$
        \end{enumerate}
    \label{lemma4}
\end{proposition}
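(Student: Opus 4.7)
The plan is to reduce both bounds to moments of the importance weight $1/P^t(\yt^t)$ and then exploit the triangular shape of the exploration distribution $P_2^t$. From the definition $\taut_i^t = \zt_i^t\,\I[\zt_i^t(f^t(\xx^t) - \theta_i^t) \le 0]$, multiplying by an indicator only shrinks magnitude, so $|\taut^t_{\yt^t}| \le |\zt^t_{\yt^t}| = 1/P^t(\yt^t)$, and the same inequality holds for the squares. Part (2) then follows from the direct calculation
$$\Ex_{P^t}\!\left[\frac{1}{P^t(\yt^t)}\right] = \sum_{k=1}^K P^t(k)\cdot \frac{1}{P^t(k)} = K.$$
For part (1), the analogous computation gives $\Ex_{P^t}[(\zt^t_{\yt^t})^2] = \sum_{k=1}^K 1/P^t(k)$, so it remains to bound this sum by $6K^2(1+\ln K)/\gamma$.

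Next I would use the mixture lower bound $P^t(k) \ge \gamma P_2^t(k)$ from eq.(\ref{eq:sampling-prob}) to obtain
$$\sum_{k=1}^K \frac{1}{P^t(k)} \le \frac{1}{\gamma}\sum_{k=1}^K \frac{1}{P_2^t(k)} = \frac{Z^t}{\gamma}\sum_{k=1}^K \frac{1}{1+d_{\max}^t - |k - \hat{y}^t|}.$$
The crucial step is bounding this last sum by grouping indices according to the distance $m = |k - \hat{y}^t|$: each value $m \in \{0,1,\ldots,d_{\max}^t\}$ has at most two preimages in $[K]$, so after re-indexing $j = 1 + d_{\max}^t - m$ the sum is dominated by a harmonic series,
$$\sum_{k=1}^K \frac{1}{1+d_{\max}^t - |k-\hat{y}^t|} \le 2\sum_{j=1}^{d_{\max}^t+1}\frac{1}{j} \le 2\bigl(1+\ln(K+1)\bigr).$$
Combining this with the upper bound $Z^t \le K(K+3)/2$ (which is the maximum of the closed form in eq.(\ref{eq:Z-general}), attained at $\hat{y}^t = K$) and absorbing numerical constants yields $\sum_k 1/P^t(k) \le 6K^2(1+\ln K)/\gamma$, as required.

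The main obstacle is this harmonic-sum step: a crude estimate such as $1/P_2^t(k) \le Z^t$ yields only $\sum_k 1/P^t(k) = O(K^3/\gamma)$, which loses a factor of $K/\ln K$. Exploiting the fact that $P_2^t$ is peaked at $\hat{y}^t$ with linearly decaying weights, so that the reciprocals grow like $Z^t/j$ along each side, is the essential idea; this is precisely what makes the triangular exploration distribution from eq.(\ref{eq:exploration-single}) a good choice for the regret analysis.
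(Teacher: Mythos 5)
Your proof is correct and follows essentially the same route as the paper: both reduce the two claims to $\sum_{k}1/P^t(k)$ and $\sum_k P^t(k)\cdot\frac{1}{P^t(k)}=K$, lower-bound $P^t(k)$ by $\gamma P_2^t(k)$, and control $\sum_k 1/P_2^t(k)$ by $Z^t=O(K^2)$ times a harmonic sum of order $1+\ln K$. The only (cosmetic) difference is that you obtain the harmonic sum in one stroke by grouping indices by the distance $\lvert k-\hat y^t\rvert$ (each distance having at most two preimages), whereas the paper splits into the cases $2\hat y^t\ge K$ and $2\hat y^t<K$ and sums each arm of the triangle separately; your version is a bit cleaner but yields the same bound up to the absorbed constants.
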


\begin{proof}
\begin{enumerate}
    \item  Note that $(2d_t-1)^2=1$. Thus, we see that 
        \begin{align*} \Ex_{P^t}\left[\left(\taut_{\yt^t}^t\right)^2\right] 
        &= \Ex_{P^t}\left[\frac{(2d_t-1)^2}{(P^t(\tilde{y}^t))^2}\mathbb{I}[\frac{(2d_t-1)}{P_t(\tilde{y}^t)}(\ww^t.\xx^t-\theta_{\tilde{y}^t}^t)\leq 0]\right]\\
        &= \Ex_{P^t}\left[\frac{1}{(P^t(\tilde{y}^t))^2}\mathbb{I}[\frac{(2d_t-1)}{P_t(\tilde{y}^t)}(\ww^t.\xx^t-\theta_{\tilde{y}^t}^t)\leq 0]\right]
        \\
        &= \sum_{k=1}^K\frac{1}{P^t(k)}\mathbb{I}\left[\frac{2\mathbb{I}[k<y^t]-1}{P^t(k)}(\ww^t.\xx^t-\theta_{k}^t)\leq 0\right]\leq \sum_{k=1}^K\frac{1}{P^t(k)}
    \end{align*}
    We consider two cases here. (a) $2\hat{y}^t\geq K$ and (b) $2\hat{y}^t< K$
    \begin{enumerate}
        \item $2\hat{y}^t\geq K: $In this case:
        \begin{align*}
            \sum_{k=1}^K \frac{1}{P^t(k)}&=\frac{1}{(1-\gamma)+\gamma(\hat{y}^t+1)/Z^t}+\sum_{k=1}^{\hat{y}^t-1}\frac{Z^t}{\gamma(k+1)}+\sum_{k=\hat{y}^t+1}^K\frac{Z^t}{\gamma(2\hat{y}^t-k+1)}\\
            &\leq \frac{Z^t}{\gamma (\hat{y}^t+1)}+\sum_{k=1}^{\hat{y}^t-1}\frac{Z^t}{\gamma(i+1)}+\sum_{j=2\hat{y}^t-K}^{\hat{y}^t-1}\frac{Z^t}{\gamma(j+1)}\\
            &\leq  \sum_{k=1}^{\hat{y}^t}\frac{Z^t}{\gamma(i+1)}+\sum_{j=1}^{\hat{y}^t-1}\frac{Z^t}{\gamma(j+1)}\leq \sum_{j=1}^{K}\frac{2Z^t}{\gamma j}
        \end{align*}
        where $Z^t=(2K+1)\hat{y}^t-(\hat{y}^t)^2-\frac{K}{2}(K-1)$. $Z^t$ takes maximum value when $\hat{y}^t=\frac{2K+1}{2}$ which is $Z^t_{\max}=(K+\frac{1}{2})^2-\frac{K}{2}(K-1)\leq (K+\frac{1}{2})^2-(\frac{K}{2}-\frac{1}{2})^2=\frac{3K}{2}(\frac{K}{2}+1)<K(K+2)$. Also, $\sum_{k=1}^K\frac{1}{k}=\sum_{k=1}^K\int_{k-1}^k \frac{1}{k}dx <1+\sum_{k=2}^K\int_{k-1}^k\frac{1}{x}dx = 1+\int_1^K \frac{1}{x}dx = 1+\ln K$. Thus,
        \begin{align*}
            \sum_{k=1}^K\frac{1}{P^t(k)}\leq \frac{1}{\gamma} K(K+2)(1+\ln K)<\frac{2K^2}{\gamma}(1+\ln K).
        \end{align*}
        \item $2\hat{y}^t< K$: In this case, we see that
        \begin{align*}
            \sum_{k=1}^K\frac{1}{P^t(k)}&=\frac{1}{(1-\gamma)+\frac{\gamma(K-\hat{y}^t+1)}{Z^t}}+\sum_{k=1}^{\hat{y}^t-1}\frac{Z^t}{\gamma (K-2\hat{y}^t+k+1)}+\sum_{k=\hat{y}^t+1}^K\frac{Z^t}{\gamma (K+1-k)}\\
            &\leq \frac{Z^t}{\gamma}\sum_{j=K-2\hat{y}^t+2}^{K-\hat{y}^t+1}\frac{1}{j}+\frac{Z^t}{\gamma}\sum_{j=1}^{K-\hat{y}^t}\frac{1}{j}\leq \frac{2Z^t}{\gamma}\sum_{j=1}^{K}\frac{1}{j}\leq\frac{Z^t}{\gamma}(1+\ln K)
        \end{align*}
        where $Z^t= \frac{K(K+1)}{2}-\hat{y}^t(\hat{y}^t-1)$. But, we see that $Z^t\leq \frac{K(K+1)}{2}-\frac{1}{4}<\frac{K(K+1)}{2}<K^2$. Thus,
        \begin{align*}
            \sum_{k=1}^K\frac{1}{P^t(k)}\leq \frac{K^2}{\gamma}(1+\ln K).
        \end{align*}
    \end{enumerate}
    Combining the two cases, we conclude that
     \begin{align*}
            \sum_{k=1}^K\frac{1}{P^t(k)}\leq \frac{2K^2}{\gamma}(1+\ln K).
        \end{align*}
    \item Note that $\vert (2d_t-1)\vert=1$. Thus, we see that
    \begin{align*}
        \Ex_{P^t}[\vert \taut^t_{\yt^t}\vert]&=\Ex_{P^t}\left[\frac{1}{P^t(\tilde{y}^t)}\mathbb{I}[\frac{2d_t-1}{P^t(\tilde{y}^t)}(\ww^t.\xx^t-\theta_{\tilde{y}^t}^t)\leq 0]\right]\\
        &=\sum_{k=1}^KP^t(k)\frac{1}{P^t(k)}\mathbb{I}[\frac{2\mathbb{I}[k < y^t]-1}{P^t(k)}(\ww^t.\xx^t-\theta_k^t)\leq 0]\\
        &=\sum_{k=1}^K \mathbb{I}[\frac{2\mathbb{I}[k<y^t]-1}{P^t(k)}(\ww^t.\xx^t-\theta_k^t)\leq 0]\leq K.
    \end{align*}
    \end{enumerate}
\end{proof}

\begin{proposition}
\begin{enumerate}
\item $ \Ex_{P^{[t]}}[\| \uu^{t+1} \|] \leq \frac{ K\sqrt{R^2 + 1} }{\lambda},\;\forall t \in [T-1]$.
    \item $\Ex_{P^{[t]}}[\|\mathbf{u}^{t+1} \|^2]\leq \frac{(R^2+1)K^2}{\lambda^2}\left[\frac{6}{t\gamma}(1+\ln K)+2\right],\;\forall t \in [T-1]$
\end{enumerate}
        \label{lemma5}
\end{proposition}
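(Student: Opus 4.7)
I would prove both parts by first unrolling the $\uu$-recursion of Algorithm~\ref{algo5} into a closed form, then controlling the resulting stochastic sum term by term.

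Let $\bm{\zeta}^s := \bigl(\taut_{\yt^s}^s\,\xx^s,\; -\taut_{\yt^s}^s\,\ee_{\yt^s}\bigr)$ (with the convention that the second block vanishes if $\yt^s = K$, since $\taut_K^s\equiv 0$), so that the update reads $\uu^{s+1} = (1-\eta_s\lambda)\uu^s + \eta_s\bm{\zeta}^s$. With $\eta_s = 1/(\lambda s)$ we have $1-\eta_s\lambda = (s-1)/s$, so multiplying by $s$ gives the telescoping identity $s\uu^{s+1}-(s-1)\uu^s = \bm{\zeta}^s/\lambda$. Since $\uu^2=\zero$, iterating yields
\[
\uu^{t+1} \;=\; \frac{1}{\lambda t}\sum_{s=2}^t \bm{\zeta}^s,\qquad \|\bm{\zeta}^s\|\;\le\;|\taut_{\yt^s}^s|\sqrt{R^2+1}.
\]

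For Part~1, the triangle inequality applied to the closed form yields $\|\uu^{t+1}\|\le \tfrac{\sqrt{R^2+1}}{\lambda t}\sum_{s=2}^t|\taut_{\yt^s}^s|$. Taking expectations and invoking Proposition~\ref{lemma4}(2) termwise (i.e.\ $\Ex_{P^s}[|\taut_{\yt^s}^s|\mid P^{[s-1]}]\le K$, so $\Ex_{P^{[s]}}[|\taut_{\yt^s}^s|]\le K$) gives $\Ex_{P^{[t]}}[\|\uu^{t+1}\|]\le \tfrac{(t-1)K\sqrt{R^2+1}}{\lambda t}\le \tfrac{K\sqrt{R^2+1}}{\lambda}$.

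For Part~2, I expand
\[
\|\uu^{t+1}\|^2 \;=\; \frac{1}{\lambda^2 t^2}\biggl[\sum_{s=2}^t\|\bm{\zeta}^s\|^2 + 2\!\!\sum_{2\le s<r\le t}\!\langle\bm{\zeta}^s,\bm{\zeta}^r\rangle\biggr].
\]
The diagonal terms are controlled by Proposition~\ref{lemma4}(1): $\Ex[\|\bm{\zeta}^s\|^2]\le (R^2+1)\Ex[(\taut_{\yt^s}^s)^2]\le \tfrac{6K^2(R^2+1)(1+\ln K)}{\gamma}$, which after the $1/(\lambda^2 t^2)$ prefactor produces the $\tfrac{6K^2(R^2+1)(1+\ln K)}{\lambda^2\gamma t}$ piece of the bound. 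For each cross term with $s<r$, the tower property gives $\Ex[\langle\bm{\zeta}^s,\bm{\zeta}^r\rangle]=\Ex[\langle\bm{\zeta}^s,\bar{\bm{\zeta}}^r\rangle]$, where $\bar{\bm{\zeta}}^r:=\Ex[\bm{\zeta}^r\mid P^{[r-1]}]=\sum_{i=1}^{K-1}\tau_i^r(\xx^r,-\ee_i)$. Using $|\tau_i^r|\le 1$, a direct computation bounds $\|\bar{\bm{\zeta}}^r\|^2 \le \bigl(\sum_i|\tau_i^r|\bigr)^2\|\xx^r\|^2+\sum_i(\tau_i^r)^2 \le K^2 R^2 + K\le K^2(R^2+1)$, i.e.\ $\|\bar{\bm{\zeta}}^r\|\le K\sqrt{R^2+1}$ \emph{deterministically}. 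A further Cauchy-Schwarz combined with $\Ex[\|\bm{\zeta}^s\|]\le K\sqrt{R^2+1}$ (Proposition~\ref{lemma4}(2)) gives $|\Ex[\langle\bm{\zeta}^s,\bm{\zeta}^r\rangle]|\le K^2(R^2+1)$; summing the $O(t^2)$ pairs and dividing by $\lambda^2 t^2$ yields the constant $\le 2K^2(R^2+1)/\lambda^2$ contribution, and adding to the diagonal piece gives the claimed bound.

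The main obstacle is the cross-correlation sum, because the importance-weighted estimator $\bm{\zeta}^r$ is \emph{not} a martingale difference — its conditional mean $\bar{\bm{\zeta}}^r$ is the full deterministic (residual) gradient — so the $\binom{t-1}{2}$ off-diagonal inner products cannot simply be dropped. The crucial observation is that, although the second moment of $\bm{\zeta}^r$ blows up like $1/\gamma$ (which drives the $1/(\gamma t)$ term), the conditional mean $\bar{\bm{\zeta}}^r$ is bounded \emph{independently} of $\gamma$, so the cross-term contribution collapses into the constant additive piece of the bound.
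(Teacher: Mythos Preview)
Your proof is correct and follows essentially the same route as the paper: unroll the recursion to $\uu^{t+1}=\frac{1}{\lambda t}\sum_s \bm{\zeta}^s$, control the diagonal terms via Proposition~\ref{lemma4}(1), and control the cross terms via the first-moment bound of Proposition~\ref{lemma4}(2). The paper's handling of the cross terms is slightly more direct than yours: it applies Cauchy--Schwarz pointwise, $\langle\bm{\zeta}^s,\bm{\zeta}^r\rangle\le\|\bm{\zeta}^s\|\,\|\bm{\zeta}^r\|\le(R^2+1)\,|\taut_{\yt^s}^s|\,|\taut_{\yt^r}^r|$, and then invokes the tower property together with the conditional bound $\Ex\bigl[\,|\taut_{\yt^r}^r|\;\big|\;P^{[r-1]}\bigr]\le K$ directly --- so your detour through the conditional mean $\bar{\bm{\zeta}}^r$ (and the accompanying ``crucial observation'') is not actually needed.
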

\begin{proof}
   DFORD-Linear updates the parameters as follows.
    \begin{align*}
        \uu^{t+1} &= \uu^{t} - \eta_t \mathbf{g}(\mathbf{u}^t,\mathbf{x}^t,d^t) = \uu^{t} - \eta_t \lambda \uu^t - \eta_t[-( \sum_{i=1}^{K-1} \taut_i^t) \xx^t, \taut_1^t, \hdots, \taut_{K-1}^t]   
    \end{align*}
Let $\mathbf{v}^t=[(\sum_{i=1}^{K-1} \taut_i^t) \xx^t, -\taut_1^t, \hdots, -\taut_{K-1}^t]$. So, $\uu^{t+1} =  (1 - \eta_t \lambda) \uu^t + \eta_t \mathbf{v}^t$. Since $\uu^1 = \mathbf{0}$ and $\eta_t = \frac{1}{\lambda t},\; \forall t \in [T]$, we can easily show that
    \begin{equation*}
        \uu^{t+1} = \frac{1}{\lambda t} \sum_{s = 1}^{t}\mathbf{v}^s.
    \end{equation*}
\begin{enumerate}
    \item Taking norm on both sides, we get
    \begin{align*}
        \|\mathbf{u}^{t+1} \| &= \frac{1}{\lambda t} \| \sum_{s = 1}^{t}\mathbf{v}^s \| 
        \leq \frac{\sum_{s = 1}^{t}\| \mathbf{v}^s \|}{\lambda t}  
        \end{align*}
    By definition of $\taut_{i}^t$, we see that only $\taut_{\yt^t}^t$ will be non zero. Thus, $\mathbf{v}^s  =  [-( \taut_{\yt^s}^s) \xx^s, 0, \hdots, \taut_{\yt^s}^s, \hdots, 0] $.
  Hence, $\| \mathbf{v}^s \| = \| [-( \taut_{\yt^s}^s) \xx^s, 0, \hdots, \taut_{\yt^s}^s, \hdots, 0] \|= \vert \taut_{\yt^s}^s\vert .\sqrt{\|\xx^s\|^2 + 1}$. Taking expectation of $\|\mathbf{v}^s\|$, we get $\Ex_{P^{[t]}}[\|\mathbf{v}^s\|]=\sqrt{\|\xx^t\|^2+1}.\;\Ex_{P^t}[\vert \taut_{\yt^s}^s\vert]\leq K\sqrt{R^2+1}$. Thus, $\Ex_{P^{[t]}}[\|\mathbf{u}^{t+1}\|] \leq \frac{1}{\lambda}K\sqrt{R^2+1}$.
  \item Taking squared norm on both sides, we get
    \begin{align*}
        \|\mathbf{u}^{t+1} \|^2 &= \frac{1}{\lambda^2 t^2} \left(\sum_{s = 1}^{t}\|\mathbf{v}^s \|^2 + 2\sum_{i\neq j}\langle\mathbf{v}^i,\mathbf{v}^j\rangle \right)
        \leq \frac{1}{\lambda^2 t^2} \left(\sum_{s = 1}^{t}\|\mathbf{v}^s \|^2 + 2\sum_{i\neq j}\|\mathbf{v}^i\|.\|\mathbf{v}^j\| \right)\\
        &=\frac{1}{\lambda^2 t^2} \left(\sum_{s = 1}^{t}(\taut_{\yt^s}^s)^2 (\|\mathbf{x}^s\|^2+1)+ 2\sum_{i\neq j}\vert \taut_{\yt^i}^i\vert.\vert \taut_{\yt^j}^j \vert \sqrt{\|\mathbf{x}^i\|^2+1} \sqrt{\|\mathbf{x}^j\|^2+1} \right)\\
        &\leq\frac{1}{\lambda^2 t^2} \left((R^2+1)\sum_{s = 1}^{t}(\taut_{\yt^s}^s)^2 + 2(R^2+1)\sum_{i\neq j}\vert \taut_{\yt^i}^i\vert.\vert \taut_{\yt^j}^j\vert   \right)
        \end{align*}
Taking expectation on both sides with respect to $P^{[t]}$, we get 
\begin{align*}
    \Ex_{P^{[t]}}[\|\mathbf{u}^{t+1} \|^2]&\leq \frac{(R^2+1)}{\lambda^2 t^2}\Ex_{P^{[t]}}\left[\sum_{s = 1}^{t}(\taut_{\yt^s}^s)^2 + 2\sum_{i\neq j}\vert \taut_{\yt^i}^i\vert.\vert  \taut_{\yt^j}^j\vert  \right]\\
    &\leq \frac{(R^2+1)}{\lambda^2 t^2}\left[t\frac{2K^2}{\gamma}(1+\ln K)+2t(t-1)K^2\right]\leq \frac{(R^2+1)K^2}{\lambda^2}\left[\frac{2}{t\gamma}(1+\ln K)+2\right]
\end{align*}
  \end{enumerate}
\end{proof}

\begin{proposition}
    Let $\|\xx^t\| \leq R, \;\forall t \in [T]$. Then $\Ex_{P^{[t]}}[\|\mathbf{g}(\uu^{t}, \xx^t,d^t)\|^2]\leq 8(R^2 + 1)K^2\frac{(1+\ln K)}{\gamma},\;\forall t\in [T]$.
    \label{lemma6}
\end{proposition}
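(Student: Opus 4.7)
The plan is to split the gradient estimator $\mathbf{g}(\uu^t,\xx^t,d^t)$ into its regularization part and its data-driven part, bound each in expectation, and then invoke the previous two propositions. Since only $\taut^t_{\yt^t}$ among the $\taut_i^t$ is nonzero by construction, I can write
\[
\mathbf{g}(\uu^t,\xx^t,d^t) \;=\; \lambda\,\uu^t \;-\; \mathbf{v}^t,
\qquad
\mathbf{v}^t := \bigl(\taut^t_{\yt^t}\xx^t,\; -\taut^t_{\yt^t}\,\mathbf{e}_{\yt^t}\bigr),
\]
where $\mathbf{e}_{\yt^t}$ denotes the standard basis vector in $\R^{K-1}$ (understood as zero if $\yt^t=K$). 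A direct computation gives $\|\mathbf{v}^t\|^2 = (\taut^t_{\yt^t})^2(\|\xx^t\|^2 + \mathbb{I}[\yt^t\leq K-1]) \leq (R^2+1)(\taut^t_{\yt^t})^2$, and the elementary inequality $\|a-b\|^2\leq 2\|a\|^2+2\|b\|^2$ yields the pointwise bound
\[
\|\mathbf{g}(\uu^t,\xx^t,d^t)\|^2 \;\leq\; 2\lambda^2\|\uu^t\|^2 \;+\; 2(R^2+1)(\taut^t_{\yt^t})^2.
\]

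After taking expectations with respect to $P^{[t]}$, I would invoke Proposition~\ref{lemma4} part~(1) to control $\Ex[(\taut^t_{\yt^t})^2]$ and Proposition~\ref{lemma5} part~(2) to control $\Ex[\|\uu^t\|^2]$. The factor $\lambda^2$ in front of the latter cancels the $1/\lambda^2$ appearing in that bound, so both contributions collapse into multiples of $(R^2+1)K^2(1+\ln K)/\gamma$. The additive constant ``$+2$'' and the $1/(t-1)$ prefactor arising in Proposition~\ref{lemma5}(2) can be absorbed by using $t\geq 2$ and $\gamma\in(0,1]$, which give $1/((t-1)\gamma)\leq 1/\gamma$ and $2 \leq 2(1+\ln K)/\gamma$; the edge case $t=1$ is immediate since $\uu^1=\zero$ annihilates the first term.

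The main obstacle is the numerical bookkeeping needed to land exactly on the stated coefficient $8$. This calls for using the sharper intermediate estimate $\Ex[(\taut^t_{\yt^t})^2]\leq 2K^2(1+\ln K)/\gamma$ produced inside the proof of Proposition~\ref{lemma4} (rather than the looser $6K^2(1+\ln K)/\gamma$ appearing in its statement), together with the analogous tightening inside Proposition~\ref{lemma5}(2). With those values in hand, the terms $2(R^2+1)\Ex[(\taut^t_{\yt^t})^2]$ and $2\lambda^2\Ex[\|\uu^t\|^2]$ each contribute $4(R^2+1)K^2(1+\ln K)/\gamma$, summing to the required bound $8(R^2+1)K^2(1+\ln K)/\gamma$.
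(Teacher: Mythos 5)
Your decomposition is sound and genuinely different from the paper's route. The paper expands $\|\mathbf{g}(\uu^t,\xx^t,d^t)\|^2$ in full, keeps the cross term $2\langle \lambda\uu^t, \mathbf{v}^t\rangle$, and controls it with Cauchy--Schwarz together with the \emph{first}-moment bounds $\Ex[\|\uu^t\|]\leq K\sqrt{R^2+1}/\lambda$ and $\vert\Ex_{P^t}[\taut^t_{\yt^t}]\vert\leq K$; you instead kill the cross term outright with $\|a-b\|^2\leq 2\|a\|^2+2\|b\|^2$, which is cleaner and needs only the second-moment bounds. Your identification of $\mathbf{v}^t$ and the pointwise bound $\|\mathbf{v}^t\|^2\leq (R^2+1)(\taut^t_{\yt^t})^2$ are correct, as is the observation that one must use the constant $2$ actually derived inside the proofs of Propositions~\ref{lemma4} and~\ref{lemma5} rather than the $6$ appearing in their statements.

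The gap is in the last step of bookkeeping. With the absorptions you state, namely $1/((t-1)\gamma)\leq 1/\gamma$ and $2\leq 2(1+\ln K)/\gamma$, the bracket in Proposition~\ref{lemma5}(2) becomes $\tfrac{2(1+\ln K)}{\gamma}+\tfrac{2(1+\ln K)}{\gamma}=\tfrac{4(1+\ln K)}{\gamma}$, so the term $2\lambda^2\Ex[\|\uu^t\|^2]$ contributes $8(R^2+1)K^2(1+\ln K)/\gamma$, not the $4(R^2+1)K^2(1+\ln K)/\gamma$ you claim; your total is then $12(R^2+1)K^2(1+\ln K)/\gamma$. (Indeed the claim $\Ex[\|\uu^t\|^2]\leq 2(R^2+1)K^2(1+\ln K)/(\lambda^2\gamma)$ cannot follow from the bracket $\tfrac{2(1+\ln K)}{(t-1)\gamma}+2$: at $t=2$ that bracket already exceeds $\tfrac{2(1+\ln K)}{\gamma}$.) The repair is to not discard the factor $\tfrac{t-2}{t-1}$ in the derivation of Proposition~\ref{lemma5}(2): one actually has
\begin{align*}
\Ex_{P^{[t-1]}}[\|\uu^{t}\|^2]\;\leq\;\frac{(R^2+1)K^2}{\lambda^2}\left[\frac{2(1+\ln K)}{(t-1)\gamma}+\frac{2(t-2)}{t-1}\right]\;\leq\;\frac{2(R^2+1)K^2(1+\ln K)}{\lambda^2\gamma},
\end{align*}
where the last inequality is equivalent to $(t-2)\bigl(\tfrac{1+\ln K}{\gamma}-1\bigr)\geq 0$ and holds for all $t\geq 2$ since $\gamma\leq 1$. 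With this sharper form your two terms do each contribute $4(R^2+1)K^2(1+\ln K)/\gamma$ and the total is the stated $8(R^2+1)K^2(1+\ln K)/\gamma$. Alternatively, the $O(K^2(1+\ln K)/(\gamma))$ rate you obtain with constant $12$ is already enough for the downstream regret theorem up to constants, but it does not prove the proposition exactly as stated.
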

\begin{proof}
    \begin{align*} &\Ex_{P^{[t]}}\left[\left\|\mathbf{g}(\uu^{t};\xx^t,d^t)\right\|^2\right] =
     \Ex_{P^{[t]}}\left[\left\| \lambda \ww^t -\xx^t\sum_{i=1}^{K-1}\taut_i^t \right\|^2\right] +\Ex_{P^t}\left[\sum_{i=1}^{K-1}(\lambda \theta_i^t + \taut_i^t)^2\right] \\        
     &   = \Ex_{P^{[t]}}\left[\lambda^2 \| \ww^t\|^2 + \left(\sum_{i=1}^{K-1}\taut_i^t\right)^2\|\xx^t\|^2 + \lambda^2\sum_{i=1}^{K-1} (\theta_i^t)^2 + \left(\sum_{i=1}^{K-1}\taut_i^t\right)^2\right]   -2\lambda\Ex_{P^t}\left[\langle  \ww^t,\xx^t \rangle\sum_{i=1}^{K-1}\taut_i^t -\sum_{i=1}^{K-1} \theta_i^t \taut_i^t\right] 
        \\
       & = \Ex_{P^{[t]}}\left[\lambda^2 \| \ww^t\|^2 + \lambda^2\sum_{i=1}^{K-1} (\theta_i^t)^2\right] + (\|\xx^t\|^2+1)\Ex_{P^t}\left[(\sum_{i=1}^{K-1}\taut_i^t)^2 \right] + \Ex_{P^{[t]}}[-2\langle \lambda \ww^t,(\sum_{i=1}^{K-1}\taut_i^t)\xx^t \rangle + 2\sum_{i=1}^{K-1}\lambda \theta_i^t \tau_i^t]
        \\
        &=\Ex_{P^{[t-1]}}[(\lambda\| \uu^t)\|)^2] + (\|\xx^t\|^2 + 1)\Ex_{P^t}\left[(\sum_{i=1}^{K-1}\taut_i^t)^2\right] + 2\Ex_{P^{[t]}}[\langle \lambda \uu^t, [-(\sum_{i=1}^{K-1}\taut_i^t)\xx^t, \taut_1^t, \hdots, \taut_{K-1}^t]\rangle]\\ 
       & =  \lambda^2\Ex_{P^{[t-1]}}[\| \uu^t \|^2] + (\|\xx^t\|^2 + 1)\Ex_{P^t}\left[(\sum_{i=1}^{K-1}\taut_i^t)^2\right] + 2\Ex_{P^{[t-1]}}\langle \lambda \uu^t, [-\Ex_{P^t}[\taut^t_{\yt^t}]\xx^t, 0, \hdots,\Ex_{P^t}[\taut^t_{\yt^t}],\hdots, 0]\rangle 
   \end{align*}
   On applying Proposition~\ref{lemma5} to the first term, Proposition~\ref{lemma4} to the second term, and Cauchy Schwartz inequality to the third term, we get the following.
    \begin{align*}
       &\Ex_{P^{[t]}}[\|\mathbf{g}(\uu^{t};\xx^t,d^t)\|^2] 
       \leq K^2(R^2 + 1)\left[\frac{2}{t\gamma}(1+\ln K)+2\right]+ \frac{2K^2}{\gamma}(1+\ln K)(R^2 + 1) \\
       &   \;\;\;    +2\lambda \frac{K \sqrt{R^2 + 1}}{\lambda} \left\| \left[-\xx^t\Ex_{P^t}[\taut_{\yt^t}^t], 0, \hdots,\Ex_{P^t}[\taut_{\yt^t}^t],0,\hdots,0\right]\right\|
       \\
       &= (R^2 + 1)K^2\left(\frac{2(1+\ln K)}{t\gamma} + 2+\frac{2}{\gamma}(\ln K +1)\right) + 2K\sqrt{R^2 + 1}\sqrt{\|\xx^t\|^2+ 1}\;\vert\Ex_{P^t}[\taut^t_{\yt^t}]\vert\\
       &\leq (R^2 + 1)K^2\left(\frac{4(1+\ln K)}{\gamma} + 2\right) + 2K (R^2 + 1)\;\vert\Ex_{P^t}[\taut^t_{\yt^t}]\vert
       \end{align*}
       When $d^t=1$, we see that $\mathbb{E}_{P^t}[\tilde{\tau}^t_{\tilde{y}^t}]=\sum_{k=1}^KP^t(k)\frac{1}{P^t(k)}\mathbb{I}[\frac{1}{P^t(k)}(\ww^t.\xx^t-\theta_k^t)\leq 0]=\sum_{k=1}^K \mathbb{I}[\frac{1}{P^t(k)}(\ww^t.\xx^t-\theta_k^t)\leq 0]\leq K$. Similarly, when $d^t=0$, we see that $\mathbb{E}_{P^t}[\tilde{\tau}^t_{\tilde{y}^t}]=\sum_{k=1}^KP^t(k)\frac{-1}{P^t(k)}\mathbb{I}[\frac{-1}{P^t(k)}(\ww^t.\xx^t-\theta_k^t)\leq 0]=-\sum_{k=1}^K \mathbb{I}[\frac{-1}{P^t(k)}(\ww^t.\xx^t-\theta_k^t)\leq 0]\leq 0$. Thus, $\vert\Ex_{P^t}[\taut_{\yt^t}^t]\vert\leq K$. Using this, we get
       \begin{align*}
        \Ex_{P^{[t]}}[\|\mathbf{g}(\uu^{t};\xx^t,d^t)\|^2]
       &\leq 2(R^2 + 1)K^2\left(\frac{2(1+\ln K)}{\gamma} + 1\right) + 2K^2(R^2 + 1)\\
       &= 2(R^2 + 1)K^2\left(\frac{2(1+\ln K)}{\gamma} + 2\right)\leq 8(R^2 + 1)K^2\frac{(1+\ln K)}{\gamma}.
    \end{align*}
    
\end{proof}

\paragraph{\bf Proof of Theorem 2}

\begin{proof}
    \begin{align*}
        \|\uu^{t} - \uu \|^2 - \|\uu^{t+1} - \uu \|^2
        &= \|\uu^t - \uu \|^2 - \| \uu^t - \eta_t \mathbf{g}(\uu^{t},\xx^t,d^t) - \uu\|^2 \\
        &= 2\eta_t\langle \uu^{t} - \uu, \mathbf{g}(\uu^{t},\xx^t,d^t) \rangle - \eta_t^2\| \mathbf{g}(\uu^{t},\xx^t,d^t)\|^2 
    \end{align*}
    Let us use the notation $P^{[t]}$ for $P^1P^2 \ldots P^t$ which is joint distribution of $\tilde{y}^1,\ldots,\tilde{y}^t$. Taking expectation with respect to $P^{[t]}$ on both sides, we get
    \begin{align*}
       \Ex_{P^{[t]}}[ \|\uu^{t} - \uu \|^2 - \|\uu^{t+1} - \uu \|^2] 
       &= 2\eta_t \Ex_{P^{[t]}}[\langle \uu^{t} - \uu, \mathbf{g}(\uu^{t},\xx^t,d^t) \rangle] - \eta_t^2\Ex_{P^{[t]}}[\|\mathbf{g}(\uu^{t},\xx^t,d^t)\|^2] \\ 
       & = 2\eta_t \Ex_{P^{[t-1]}}[\langle \uu^{t} - \uu, \Ex_{P^t}[\mathbf{g}(\uu^{t},\xx^t,d^t)] \rangle] - \eta_t^2\Ex_{P^{[t]}}[\| \mathbf{g}(\uu^{t},\xx^t,d^t)\|^2] \\
       & = 2\eta_t \Ex_{P^{[t-1]}}\langle \uu^{t} - \uu, \nabla L_{reg}^{IMC}(\uu^t,\xx^t,y^t) \rangle - \eta_t^2\Ex_{P^{[t]}}[\|\mathbf{g}(\uu^{t},\xx^t,d^t)\|^2]. 
    \end{align*}
In the above, we used the fact that $\Ex_{P^t}[\mathbf{g}(\uu^{t},\xx^t,d^t)]=L_{reg}^{IMC}(\uu^t,\xx^t,y^t)$. On rearranging the terms we get,
    \begin{align}
        \Ex_{P^{[t-1]}}\left[\langle \uu^{t} - \uu, \nabla L_{reg}^{IMC}(\uu^t,\xx^t,y^t) \rangle\right] = \frac{1}{2\eta_t}\Ex_{P^{[t]}}[ \|\uu^{t} - \uu \|^2 - \|\uu^{t+1} - \uu \|^2] + \frac{\eta_t}{2}\Ex_{P^{[t]}}[\|\mathbf{g}(\uu^{t},\xx^t,d^t)\|^2]
        \label{eqn:regret1}
    \end{align}
Using the fact that $L_{reg}^{IMC}(\uu,\xx^t,y^t)$ is a $\lambda$-strongly convex function of $\uu$, we have
\begin{align}
\label{eq:strong-convexity-Lreg}
  L_{reg}^{IMC}(\uu,\xx^t,y^t)-L_{reg}^{IMC}(\uu^t,\xx^t,y^t)  \geq (\uu-\uu^t)^\top\nabla L_{reg}^{IMC}(\uu^t,\xx^t,y^t) + \frac{\lambda}{2}\Vert \uu-\uu^t\Vert^2.
\end{align}
    Using eq.(\ref{eqn:regret1}) and (\ref{eq:strong-convexity-Lreg}), we get the following inequality. 
    \begin{align*}
        \Ex_{P^{[t]}}[L_{reg}^{IMC}(\uu^t,\xx^t,y^t) - L_{reg}^{IMC}(\uu,\xx^t,y^t)] 
         &\leq \Ex_{P^{[t]}}[ \frac{\|\uu^{t} - \uu \|^2 - \|\uu^{t+1} - \uu \|^2}{2\eta_t} - \frac{\lambda}{2}\|\uu^t-\uu\|^2] \\
         &\;\;+ \frac{\eta_t}{2}\Ex_{P^{[t]}}[\|\mathbf{g}(\uu^{t},\xx^t,d^t)\|^2]
    \end{align*}
From Proposition \ref{lemma6} we get
    \begin{align*}
        &\Ex_{P^{[t]}}[L_{reg}^{IMC}(\uu^t,\xx^t,y^t) - L_{reg}^{IMC}(\uu,\xx^t,y^t)] \\
        &\leq \Ex_{P^{[t]}}[ \frac{\|\uu^{t} - \uu \|^2 - \|\uu^{t+1} - \uu \|^2}{2\eta_t} - \frac{\lambda}{2}\|\uu^t-\uu\|^2] + 4\eta_t(R^2 + 1)K^2\frac{(1+\ln K)}{\gamma}.
    \end{align*}
    On summing up from $t = 1$ to $ T$, we get
    \begin{align*}
        \sum_{t =1}^{T} \Ex_{P^{[t]}}[L_{reg}^{IMC}(\uu^t,\xx^t,y^t) - L_{reg}^{IMC}(\uu,\xx^t,y^t)]&\leq \sum_{t=1}^T\Ex_{P^{[t]}}[(\frac{\|\uu^{t} - \uu \|^2 - \|\uu^{t+1} - \uu \|^2}{2\eta_t}\\
        &- \frac{\lambda}{2}\|\uu^t-\uu\|^2)] 
        + 4(R^2 + 1)K^2\frac{(1+\ln K)}{\gamma}\sum_{t =1}^{T}\eta_t\\
       \Rightarrow \Ex_{P^{[T]}}\left[\sum_{t =1}^{T} \left(L_{reg}^{IMC}(\uu^t,\xx^t,y^t) - L_{reg}^{IMC}(\uu,\xx^t,y^t)\right)\right]&\leq \Ex_{P^{[T]}}\left[\sum_{t=1}^T(\frac{\|\uu^{t} - \uu \|^2 - \|\uu^{t+1} - \uu \|^2}{2\eta_t} - \frac{\lambda}{2}\|\uu^t-\uu\|^2)\right] 
        \\ 
       & \;\;+4(R^2 + 1)K^2\frac{(1+\ln K)}{\gamma}\sum_{t =1}^{T}\eta_t
    \end{align*}
    On substituting $\eta_t = \frac{1}{\lambda t}$, we get
    \begin{align*}
       & \Ex_{P^{[T]}}\left[\sum_{t =1}^{T} \left(L_{reg}^{IMC}(\uu^t,\xx^t,y^t) - L_{reg}^{IMC}(\uu,\xx^t,y^t)\right)\right]\\
       &\leq -\lambda (T + 1)\Ex_{P^{[T]}}[\|\uu^{T+1} - \uu\|^2] + \frac{4}{\lambda}(R^2 + 1)K^2\frac{(1+\ln K)}{\gamma}\sum_{t =1}^{T}\frac{1}{t}
        \\
        & \leq \frac{4}{\lambda\gamma}(R^2 + 1)K^2(1+\ln K) (1 + \ln{T} )\\
        &\leq \frac{16}{\lambda\gamma}(R^2 + 1)K^2\ln K \ln{T} 
    \end{align*}
    \end{proof}

\subsection{Proof  of Lemma 2}

\begin{proof}
        \begin{align*}
        f^{t} &= (1 - \eta_{t-1} \lambda)f^{t-1} + \eta_{t-1} \taut^{t-1}_{\yt^{t-1}} k(\xx^{t-1}, .) 
        \\
        &= (1 - \eta_{t-1} \lambda)\left((1 - \eta_{t-2} \lambda)f^{t-2} + \eta_{t-2} \taut^{t-2}_{\yt^{t-2}} k(\xx^{t-2}, .)\right)
        + \eta_{t-1} \taut^{t-1}_{\yt^{t-1}} k(\xx^{t-1}, .)
        \\
        &\;\;\vdots
        \\
        f^t &= f_1\prod_{j=1}^{t-1} (1 - \eta_{t-j}\lambda) + \sum_{p=1}^{t-2} \left(\prod_{j=1}^{p} (1 - \eta_{t-j} \lambda)\right)\eta_{t-p-1} \taut^{t-p-1}_{\yt^{t-p-1}} k(\xx^{t-p-1}, .)
        + \eta_{t-1} \taut^{t-1}_{\yt^{t-1}} k(\xx^{t-1}, .)
        \\
        &= \sum_{p=1}^{t-2} \left(\prod_{j=1}^{p} (1 - \eta_{t-j} \lambda)\right)\eta_{t-p-1} \taut^{t-p-1}_{\yt^{t-p-1}} k(\xx^{t-p-1}, .)
        + \eta_{t-1} \taut^{t-1}_{\yt^{t-1}} k(\xx^{t-1}, .)
    \end{align*}
But, we see that
\begin{align*}
        \left(\prod_{j=1}^{p} (1 - \eta_{t-j} \lambda)\right)\eta_{t-p-1}= (1 - \frac{\lambda}{\lambda (t-1)})\times(1 - \frac{\lambda}{\lambda (t-2)}) \hdots (1 - \frac{\lambda}{\lambda (p+2)})\times(1 - \frac{\lambda}{\lambda (p+1)}) \frac{1}{\lambda p
        }
        \\
        =(\frac{t-2}{t-1}) \times (\frac{t-3}{t-2}) \hdots (\frac{p-1}{p+2}) \times (\frac{p}{p+1}) \times \frac{1}{\lambda p} = \frac{1}{\lambda (t-1)}.
    \end{align*}
Substituting this back, we get
    \begin{align*}
        f^t &=\sum_{p=1}^{t-2} \frac{1}{\lambda(t-1)} \taut^{t-p-1}_{\yt^{t-p-1}} k(\xx^{t-p-1}, .)
        + \eta_{t-1} \taut^{t-1}_{\yt^{t-1}} k(\xx^{t-1}, .)
        = \sum_{p=0}^{t-2}\frac{1}{\lambda (t-1)} \taut^{t-p-1}_{\yt^{t-p-1}} k(\xx^{t-p-1}, .)\\
        &= \frac{1}{\lambda (t-1)}\sum_{s=1}^{t-1} \taut^{s}_{\yt^{s}} k(\xx^{s}, .)
    \end{align*}
\end{proof}

\subsection{Proof of Theorem 3}
\begin{proposition} Let $k(\xx, \xx) \leq R_1^2$, then    
\begin{enumerate}
    \item $ \Ex_{P^{[t-1]}}[\| f^t \|] \leq \frac{KR_1}{\lambda} $
    \item $ \Ex_{P^{[t-1]}}[\| f^t \|^2]\leq  \frac{2R_1^2K^2}{\lambda^2} \left(\frac{1}{\gamma (t-1)}(1+\ln K)+1\right)$.
\end{enumerate}
\label{ker1}
\end{proposition}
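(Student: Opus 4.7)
The plan is to leverage the closed form of $f^t$ proved in Lemma~2 together with the two moment bounds on $\taut^t_{\yt^t}$ established in Proposition~\ref{lemma4} (the linear-case helper). Specifically, by Lemma~2 with index shifted by one,
\[
f^t \;=\; \frac{1}{\lambda(t-1)}\sum_{s=1}^{t-1}\taut^s_{\yt^s}\,k(\xx^s,\cdot).
\]
The reproducing property gives $\|k(\xx,\cdot)\|_{\mathcal H}=\sqrt{k(\xx,\xx)}\le R_1$, and Cauchy--Schwarz in $\mathcal H$ yields $|\langle k(\xx^i,\cdot),k(\xx^j,\cdot)\rangle|\le R_1^2$. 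These are the only RKHS facts needed; everything else is a near-verbatim re-run of the arguments used in Proposition~\ref{lemma5} for the linear case.

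For part~(1), I would apply the triangle inequality to the sum, bound each norm $\|k(\xx^s,\cdot)\|$ by $R_1$, and then take expectation to pass it inside the sum. Part~(2) of Proposition~\ref{lemma4} gives $\Ex_{P^s}[|\taut^s_{\yt^s}|]\le K$ for every round, so the expected norm is bounded by $\tfrac{R_1}{\lambda(t-1)}\cdot(t-1)\cdot K=\tfrac{KR_1}{\lambda}$, as required.

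For part~(2), I would expand the squared norm as
\[
\|f^t\|^2=\frac{1}{\lambda^2(t-1)^2}\Bigl[\sum_{s=1}^{t-1}(\taut^s_{\yt^s})^2\,k(\xx^s,\xx^s)+2\sum_{i<j}\taut^i_{\yt^i}\taut^j_{\yt^j}\,\langle k(\xx^i,\cdot),k(\xx^j,\cdot)\rangle\Bigr],
\]
and bound the kernel factors by $R_1^2$ in absolute value. Taking expectation with respect to $P^{[t-1]}$, the diagonal terms are controlled by part~(1) of Proposition~\ref{lemma4}, which gives $\Ex[(\taut^s_{\yt^s})^2]\le \tfrac{2K^2}{\gamma}(1+\ln K)$, while the off-diagonal terms need a tower-property step: for $i<j$, condition on $\yt^1,\dots,\yt^{j-1}$, apply part~(2) of Proposition~\ref{lemma4} to get $\Ex[|\taut^j_{\yt^j}|\mid\cdot]\le K$, and then do the same on $|\taut^i_{\yt^i}|$, yielding $\Ex[|\taut^i_{\yt^i}\taut^j_{\yt^j}|]\le K^2$. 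Summing gives
\[
\Ex_{P^{[t-1]}}[\|f^t\|^2]\le\frac{R_1^2}{\lambda^2(t-1)^2}\Bigl[(t-1)\tfrac{2K^2}{\gamma}(1+\ln K)+2\cdot\tfrac{(t-1)(t-2)}{2}\cdot 2K^2\Bigr],
\]
and after dividing and using $(t-2)/(t-1)\le 1$, one obtains the claimed $\tfrac{2R_1^2K^2}{\lambda^2}\!\bigl(\tfrac{1}{\gamma(t-1)}(1+\ln K)+1\bigr)$.

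The only non-routine point is the cross-term expectation: the random variables $\taut^i_{\yt^i}$ and $\taut^j_{\yt^j}$ are not independent (the latter depends on $f^j,\thetaa^j$, which in turn depend on $\yt^i$), so one must justify the product bound via iterated conditional expectations rather than a naive factorization. Once that subtlety is handled, the rest is bookkeeping and matches the structure of Proposition~\ref{lemma5} step for step.
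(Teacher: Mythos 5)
Your proposal is correct and follows essentially the same route as the paper: invoke the closed form of $f^t$ from Lemma~2, bound $\|k(\xx^s,\cdot)\|_{\mathcal H}$ by $R_1$, and plug in the moment bounds $\Ex_{P^s}[|\taut^s_{\yt^s}|]\le K$ and $\Ex_{P^s}[(\taut^s_{\yt^s})^2]\le \tfrac{2K^2}{\gamma}(1+\ln K)$ before summing the diagonal and cross terms. Your explicit tower-property justification of the cross-term bound $\Ex[|\taut^i_{\yt^i}\taut^j_{\yt^j}|]\le K^2$ is actually a touch more careful than the paper, which simply factorizes the expectation of the product; otherwise the two arguments coincide step for step.
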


\begin{proof}
    From lemma $4.4$ we know that
    $f^{t}(.) = \frac{1}{\lambda (t-1)}\sum_{s = 1}^{t-1} \taut^s_{\yt^s} k(\xx^s,.)$
   Since we are using the truncation technique, $f^{t+1}$ will look like
    \begin{align*}
    f^{t}(.) &= \frac{1}{\lambda (t-1)}\sum_{s = \max(1, t - \delta - 1)}^{t-1} \taut^s_{\yt^s} k(\xx^s,.)  \text{, 
  where } \delta \text{ is the truncation parameter} 
    \end{align*}
    \begin{align*}
        \| f^t \| &\leq \frac{1}{\lambda (t-1)} \sum_{s= \max(1, t - \delta - 1)}^{t-1} \|\taut^{s}_{\yt^s} k(\xx^s, .)\|
        \leq \frac{1}{\lambda (t-1)} \sum_{s= \max(1, t - \delta - 1)}^{t-1} \vert \taut^{s}_{\yt^s}\vert R_1.
    \end{align*}

    Let $\mu = \max(1, t - \delta - 1)$
    \begin{enumerate}

        \item Taking expectation of $\| f^t \|$ with respect to $P^{[t-1]}$, we get
    \begin{align*}
        \Ex_{P^{[t-1]}}[\| f^t \|]
        &\leq \frac{R_1}{\lambda (t-1)}\sum_{s=\mu}^{t-1}\Ex_{P^{[t-1]}} \left[\vert \taut^{s}_{\yt^s}\vert\right]=\frac{R_1}{\lambda (t-1)}\sum_{s=\mu}^{t-1}\Ex_{P^s} \left[\vert \taut^{s}_{\yt^s}\vert\right]
        \\ &\leq
        \frac{R_1}{\lambda (t-1)}\sum_{s=\mu}^{t-1}K= \frac{(t - 1 -\mu)KR_1}{\lambda(t-1)}
        \leq \frac{KR_1}{\lambda}.
    \end{align*}
    
    \item Taking expectation of $\| f^t \|^2$, we get 
    \begin{align*}
        &\Ex_{P^{[t-1]}}[\| f^t \|^2]
        \leq \Ex_{P^{[t-1]}} \left[ \frac{R_1^2}{\lambda^2 (t-1)^2}(\sum_{s=\mu}^{t-1}\vert \taut^{s}_{\yt^s}\vert)^2\right]
        \\ &=
         \frac{R_1^2}{\lambda^2 (t-1)^2}
         \Ex_{P^{[t-1]}} \left[\sum_{s=\mu}^{t-1}\vert \taut^{s}_{\yt^s}\vert^2
         + \sum_{i,j \in [\mu, t-1],\atop i \neq j}2\vert \taut^{i}_{\yt^i}\vert \vert \taut^{j}_{\yt^j}\vert
         \right]
        \\ 
        &=
         \frac{R_1^2}{\lambda^2 (t-1)^2}
         \left(\sum_{s=\mu}^{t-1}\Ex_{P^{[t-1]}} \left[\vert \taut^{s}_{\yt^s}\vert^2 \right]
         +  \sum_{i,j \in [\mu, t-1],\atop i \neq j}2\Ex_{P^{[t-1]}} \left[\vert \taut^{i}_{\yt^i}\vert\right]\Ex_{P^{[t-1]}} \left[ \vert \taut^{j}_{\yt^j}\vert
         \right]\right) 
         \\ &\leq
        \frac{R_1^2}{\lambda^2 (t-1)^2}
        \left(\sum_{s=\mu}^{t-1}\frac{2K^2}{\gamma}(1+\ln K) + \sum_{i,j \in [\mu, t-1],\atop i \neq j}2K^2\right) \textit{ \hspace{0.8in} (from \ref{lemma4} and Theorem 4.1)}
        \\ &=
        \frac{R_1^2}{\lambda^2 (t-1)^2} \left((t-1 - \mu)\frac{2K^2}{\gamma}(1+\ln K)+2K^2(t-1-\mu)(t-2-\mu)\right)\\
        &\leq  \frac{2R_1^2K^2}{\lambda^2} \left(\frac{1}{\gamma (t-1)}(1+\ln K)+1\right)
    \end{align*}
    \end{enumerate}
\end{proof}

\begin{proposition}
    Let $\thetaa^t$ from DFORD-Kernel, then
    \begin{enumerate}
    \item $\Ex_{P^{[t]}}[\| \thetaa^{t+1} \|] \leq \frac{K}{\lambda}, \forall t \in [T - 1].$
    \item $\Ex_{P^{[t]}}[\| \thetaa^{t+1} \|^2]\leq\frac{2K^2}{\lambda^2}\left(\frac{1}{t\gamma}(1+\ln K)+1\right), \forall t \in [T - 1].$
    \end{enumerate}
\label{ker2}
\end{proposition}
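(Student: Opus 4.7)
The plan is to mirror the structure of Proposition~\ref{lemma5} (the analogous linear result) combined with the unrolling trick of Lemma~2. The key observation is that, since the DFORD-Kernel update for thresholds has no truncation, $\thetaa^{t+1}=(1-\eta_t\lambda)\thetaa^t-\eta_t\taut^t_{\yt^t}\,\ve_{\yt^t}$ has exactly the same shape as the $f^t$ recursion of Lemma~2. So the first step is to invoke that telescoping argument verbatim: with $\eta_t=1/(\lambda t)$ and $\thetaa^2=\zero$, the product $\prod_{j=1}^{p}(1-\eta_{t-j}\lambda)\,\eta_{t-p-1}$ collapses to $1/(\lambda t)$, yielding the closed form
\[
\thetaa^{t+1} \;=\; -\frac{1}{\lambda t}\sum_{s=1}^{t}\taut^s_{\yt^s}\,\ve_{\yt^s}.
\]
Because $\ve_{\yt^s}$ is a standard basis vector, $\|\ve_{\yt^s}\|=1$, which is the analogue of the $\sqrt{R^2+1}$ bound in the linear case but cleaner.

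For part~(1), I would apply the triangle inequality to obtain $\|\thetaa^{t+1}\|\le (\lambda t)^{-1}\sum_{s=1}^{t}|\taut^s_{\yt^s}|$ and then take the expectation term by term. Here the conditioning on the past matters: $\Ex[|\taut^s_{\yt^s}|\mid\yt^1,\ldots,\yt^{s-1}]$ is exactly $\Ex_{P^s}[|\taut^s_{\yt^s}|]$, which Proposition~\ref{lemma4}(2) bounds by $K$. Summing $t$ such terms and dividing by $\lambda t$ gives $K/\lambda$ as desired.

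For part~(2), I would square the closed form and expand
\[
\|\thetaa^{t+1}\|^2 \;\le\; \frac{1}{\lambda^2 t^2}\Big(\sum_{s=1}^{t}(\taut^s_{\yt^s})^2 \;+\; 2\sum_{1\le i<j\le t}|\taut^i_{\yt^i}|\,|\taut^j_{\yt^j}|\Big),
\]
using that $\langle\ve_{\yt^i},\ve_{\yt^j}\rangle\in\{0,1\}$. The diagonal sum is controlled using Proposition~\ref{lemma4}(1), giving $t\cdot(2K^2/\gamma)(1+\ln K)$ in expectation. For each cross term $i<j$, I would use the tower property: conditioning on $\yt^1,\ldots,\yt^{j-1}$ fixes $|\taut^i_{\yt^i}|$, and then the inner expectation over $\yt^j$ applies Proposition~\ref{lemma4}(2) to yield $K$; iterating once more gives $\Ex[|\taut^i_{\yt^i}||\taut^j_{\yt^j}|]\le K^2$. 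Summing over the $t(t-1)$ off-diagonal pairs and combining with the diagonal estimate produces $\frac{2K^2(1+\ln K)}{\lambda^2 t\gamma}+\frac{(t-1)K^2}{\lambda^2 t}$, which is loosened to the stated $\frac{2K^2}{\lambda^2}\bigl(\frac{1+\ln K}{t\gamma}+1\bigr)$.

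The only subtle step is the independence-style factorization of the cross-term expectations: the $\yt^s$ are not independent (each depends on the evolving $(f^t,\thetaa^t)$), so the argument must be phrased as iterated conditional expectations. Once that is set up, the bookkeeping is essentially identical to the linear case in Proposition~\ref{lemma5}(2), with $\|\xx^s\|^2+1$ replaced by $\|\ve_{\yt^s}\|^2=1$; no new estimates are needed beyond Proposition~\ref{lemma4}. I expect this to be the main conceptual point but not a genuine obstacle.
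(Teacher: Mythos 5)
Your proposal is correct and follows essentially the same route as the paper: unroll the recursion to $\thetaa^{t+1}=-\frac{1}{\lambda t}\sum_{s=1}^{t}\taut^{s}_{\yt^{s}}\ve_{\yt^{s}}$, bound the norm by $\frac{1}{\lambda t}\sum_s\vert\taut^{s}_{\yt^{s}}\vert$, and invoke Proposition~\ref{lemma4} for the first and second moments of $\vert\taut^{s}_{\yt^{s}}\vert$, handling the cross terms by factoring the expectations. Your explicit use of the tower property for the cross-term factorization is in fact slightly more careful than the paper, which writes the product of expectations directly; otherwise the two arguments coincide.
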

\begin{proof}
   DFORD-Kernel updates $\thetaa$ as follows.
    \begin{align*}
        \thetaa^{t+1} &= (1-\eta_t\lambda)\thetaa^{t} -\eta_t[\taut_1^t, \hdots, \taut_{K-1}^t]   
    \end{align*}
Let $\mathbf{v}^t=[-\taut_1^t, \hdots, -\taut_{K-1}^t]$. So, $\thetaa^{t+1} =  (1 - \eta_t \lambda) \thetaa^t + \eta_t \mathbf{v}^t$. Since $\thetaa^1 = \mathbf{0}$ and $\eta_t = \frac{1}{\lambda t},\; \forall t \in [T]$, we can easily show that
    \begin{equation*}
        \thetaa^{t+1} = \frac{1}{\lambda t} \sum_{s = 1}^{t}\mathbf{v}^s.
    \end{equation*}
    Taking norm on both sides, we get
    \begin{align*}
        \|\mathbf{\thetaa}^{t+1} \| &= \frac{1}{\lambda t} \| \sum_{s = 1}^{t}\mathbf{v}^s \| 
        \leq \frac{\sum_{s = 1}^{t}\| \mathbf{v}^s \|}{\lambda t}  
        \end{align*}
    By definition of $\taut_{i}^t$, we see that only $\taut_{\yt^t}^t$ will be non zero. Thus, $\mathbf{v}^s  =  [ 0, \hdots, \taut_{\yt^s}^s, \hdots, 0] $.
  Hence, $\| \mathbf{v}^s \| = \| [ 0, \hdots, \taut_{\yt^s}^s, \hdots, 0] \|= \vert \taut_{\yt^s}^s\vert$. Taking expectation of $\|\mathbf{v}^s\|$, we get $\Ex_{P^{[t]}}[[\|\mathbf{v}^s\|]=\Ex_{P^t}[\vert \taut^s_{\yt^s}\vert]\leq K$. Thus, $\Ex_{P^{[t]}}[\|\mathbf{\thetaa}^{t+1}\|] \leq \frac{K}{\lambda}$. Now, taking the expectation of the square of $\Vert \thetaa^{t+1}\Vert$, we get
  \begin{align*}
      &\Ex_{P^{[t]}}[\Vert \thetaa^{t+1}\Vert]\leq \frac{1}{\lambda^2t^2}\Ex_{P^{[t]}}\left[(\sum_{s=1}^t\Vert \mathbf{v}^s\Vert)^2\right]=\frac{1}{\lambda^2t^2}\Ex_{P^{[t]}}\left[(\sum_{s=1}^t\vert \taut^s_{\yt^s})^2\right]\\
      &=\frac{1}{\lambda^2t^2}\Ex_{P^{[t]}}\left[\sum_{s=1}^t (\taut^s_{\yt^s})^2 +2\sum_{i\neq j}\vert \taut_i^{\yt^i}\vert.\vert \taut^j_{\yt^j}\vert\right]=\frac{1}{\lambda^2t^2}\left[\sum_{s=1}^t\Ex_{P^{s}}[(\taut^s_{\yt^s})^2]+2\sum_{i\neq j}\Ex_{P^i}[\vert \taut_i^{\yt^i}\vert].\Ex_{P^j}\vert \taut^j_{\yt^j}\vert]\right]\\
      &\leq \frac{1}{\lambda^2t^2}\left(\frac{2tK^2}{\gamma}(1+\ln K)+2t(t-1)K^2\right)
      \leq\frac{2K^2}{\lambda^2}\left(\frac{1}{t\gamma}(1+\ln K)+1\right)
  \end{align*}
\end{proof}

\begin{proposition}
\label{lemma:grad-norm-bound-kern}
    Let $k(\xx, \xx) \leq R_1^2$. Then $\Ex_{P^{[t]}} [\| \mathbf{g}((f^t, \thetaa^t),\xx^t, d^t) \|^2] \leq \frac{8}{\gamma}(R_1^2+1)K^2(1+\ln K),\;\forall t\in [T].$
\end{proposition}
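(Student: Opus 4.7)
The plan is to mirror the linear-case argument for $\Ex_{P^{[t]}}[\|\mathbf{g}(\uu^t,\xx^t,d^t)\|^2]$ in Proposition~\ref{lemma6}, with ordinary Euclidean inner products replaced by RKHS inner products in the first block. Writing $\mathbf{g}=(\mathbf{g}_f,\mathbf{g}_{\boldsymbol\theta})$ with $\mathbf{g}_f=\lambda f^t-\taut^t_{\yt^t}k(\xx^t,\cdot)$ and $\mathbf{g}_{\boldsymbol\theta}=\lambda\thetaa^t+\taut^t_{\yt^t}\ee_{\yt^t}$, I would first expand
\[
\|\mathbf{g}\|^2=\lambda^2\!\left(\|f^t\|_{\mathcal H}^2+\|\thetaa^t\|^2\right)+(\taut^t_{\yt^t})^2\!\left(k(\xx^t,\xx^t)+1\right)-2\lambda\taut^t_{\yt^t}\!\left(f^t(\xx^t)-\theta^t_{\yt^t}\right),
\]
using $\langle f^t,k(\xx^t,\cdot)\rangle_{\mathcal H}=f^t(\xx^t)$ and $\langle\thetaa^t,\ee_{\yt^t}\rangle=\theta^t_{\yt^t}$.

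Next I take $\Ex_{P^{[t]}}[\cdot]$ termwise. For the $\lambda^2$ block I apply Proposition~\ref{ker1}(2) and Proposition~\ref{ker2}(2) to get
\[
\lambda^2\,\Ex_{P^{[t-1]}}\!\left[\|f^t\|^2+\|\thetaa^t\|^2\right]\leq 2(R_1^2+1)K^2\!\left(\tfrac{1+\ln K}{\gamma(t-1)}+1\right).
\]
For the middle term I use $k(\xx^t,\xx^t)\leq R_1^2$ and Proposition~1(1) (applied in its kernel form, identical derivation) to obtain
\[
(R_1^2+1)\,\Ex_{P^t}[(\taut^t_{\yt^t})^2]\leq\tfrac{2(R_1^2+1)K^2(1+\ln K)}{\gamma}.
\]

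The cross term is the only place requiring care. I would condition on the history: $f^t,\thetaa^t$ are measurable w.r.t.\ $P^{[t-1]}$, so
\[
\Ex_{P^{[t]}}\!\left[\,|\taut^t_{\yt^t}|\cdot|f^t(\xx^t)|\,\right]=\Ex_{P^{[t-1]}}\!\left[|f^t(\xx^t)|\,\Ex_{P^t}[|\taut^t_{\yt^t}|]\right]\leq K\,R_1\,\Ex_{P^{[t-1]}}[\|f^t\|_{\mathcal H}]
\]
by the reproducing property and Cauchy--Schwarz, and similarly $\Ex[|\taut^t_{\yt^t}|\cdot|\theta^t_{\yt^t}|]\leq K\,\Ex[\|\thetaa^t\|]$. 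Applying Proposition~1(2), Proposition~\ref{ker1}(1), and Proposition~\ref{ker2}(1) then gives
\[
2\lambda\bigl|\Ex[\taut^t_{\yt^t}(f^t(\xx^t)-\theta^t_{\yt^t})]\bigr|\leq 2\lambda\!\left(\tfrac{K^2R_1^2}{\lambda}+\tfrac{K^2}{\lambda}\right)=2K^2(R_1^2+1).
\]

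Summing the three contributions and using $\tfrac{1}{t-1}\leq 1$ for $t\geq 2$ (the algorithm starts at $t=2$) together with $1\leq \tfrac{1+\ln K}{\gamma}$ to absorb the additive constants, I obtain
\[
\Ex_{P^{[t]}}[\|\mathbf{g}\|^2]\leq 2(R_1^2+1)K^2\!\left(\tfrac{2(1+\ln K)}{\gamma}+2\right)\leq\tfrac{8}{\gamma}(R_1^2+1)K^2(1+\ln K),
\]
which is the claimed bound. The only mildly subtle step is the cross-term decoupling; everything else is a direct transplant of the linear proof into the RKHS, with the reproducing property standing in for $\langle\ww^t,\xx^t\rangle=\ww^t\!\cdot\!\xx^t$ and $\sqrt{k(\xx^t,\xx^t)}\leq R_1$ in place of $\|\xx^t\|\leq R$.
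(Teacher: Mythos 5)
Your proposal is correct and follows essentially the same route as the paper's proof: expand $\|\mathbf{g}\|^2$ into the $\lambda^2$ block, the $(\taut^t_{\yt^t})^2(k(\xx^t,\xx^t)+1)$ block, and the cross term; bound the first two via Propositions~\ref{ker1}, \ref{ker2} and the moment bounds $\Ex_{P^t}[(\taut^t_{\yt^t})^2]\leq\tfrac{2K^2}{\gamma}(1+\ln K)$, $\Ex_{P^t}[|\taut^t_{\yt^t}|]\leq K$; and handle the cross term by conditioning on the history and Cauchy--Schwarz (the paper takes absolute values before the expectation, you after --- same estimate). Your final absorption of the additive constants into $\tfrac{1+\ln K}{\gamma}$ matches the paper's last step exactly.
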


\begin{proof}
    \begin{align*}
        &\| \mathbf{g}((f^t,\thetaa^t),\xx^t,d^t)\|^2 
        = \left\Vert \begin{bmatrix}
        \lambda f^t-\sum_{i=1}^{K}\taut_i^t k(\xx^t,.)\\
        \lambda \theta_1^t+\taut_1^t\\
        \vdots\\
        \lambda \theta_{K-1}^t+\taut_{K-1}^t
        \end{bmatrix}
        \right\Vert^2=\left\Vert \begin{bmatrix}
        \lambda f^t-\taut_{\yt^t}^t k(\xx^t,.)\\
        \lambda \theta_1^t\\
        \vdots\\
        \lambda \theta_{\yt^t}^t+\taut^t_{\yt^t}\\
        \vdots\\
        \lambda \theta_{K-1}^t
        \end{bmatrix}\right\Vert^2\\
        &= \lambda^2 \| f^t \|^2 + \| \taut_{\yt^t}^tk(\xx^t, . ) \|^2 -2 \langle \lambda f^t,\taut_{\yt^t}^tk(\xx^t, .) \rangle 
        + \lambda^2 \| \thetaa^t \|^2 + (\taut^t_{\yt^t} )^2 
        + 2 \lambda \theta^t_{\yt^t}\taut^t_{\yt^t}\\
        &\leq \lambda^2 \| f^t \|^2 + \vert \taut_{\yt^t}^t\vert^2\| k(\xx^t, . ) \|^2 +2  \lambda\Vert f^t\Vert.\Vert k(\xx^t, .)\Vert.\vert \taut_{\yt^t}^t\vert 
        + \lambda^2 \| \thetaa^t \|^2 + (\taut^t_{\yt^t} )^2 
        + 2 \lambda \Vert \thetaa^t\Vert.\vert \taut^t_{\yt^t}\vert 
    \end{align*}
    Taking expectation with respect to $P^{[t]}$ on both sides, we get
    \begin{align*}
    &\Ex_{P^{[t]}}\left[\Vert\mathbf{g}((f^t,\thetaa^t),\xx^t,d^t)\Vert^2\right]\leq \lambda^2 \Ex_{P^{[t]}}\left[\| f^t \|^2\right] + \Ex_{P^{[t]}}\left[\vert \taut_{\yt^t}^t\vert^2\| k(\xx^t, . ) \|^2\right] \\
    &+2  \lambda\Ex_{P^{[t]}}\left[\Vert f^t\Vert.\Vert k(\xx^t, .)\Vert.\vert \taut_{\yt^t}^t\vert \right]
        + \lambda^2 \Ex_{P^{[t]}}\left[\| \thetaa^t \|^2 \right]+ \Ex_{P^{[t]}}\left[(\taut^t_{\yt^t} )^2\right] 
        + 2 \lambda \Ex_{P^{[t]}}\left[\Vert \thetaa^t\Vert.\vert\taut^t_{\yt^t}\vert\right]\\
        &\leq \lambda^2 \Ex_{P^{[t-1]}}\left[\| f^t \|^2\right] + R_1^2\Ex_{P^{t}}\left[\vert \taut_{\yt^t}^t\vert^2\ \right] +2  \lambda R_1\Ex_{P^{[t-1]}}\left[\Vert f^t\Vert\right].\Ex_{P^t}\left[\vert \taut_{\yt^t}^t\vert \right]
        + \lambda^2 \Ex_{P^{[t-1]}}\left[\| \thetaa^t \|^2 \right]\\
        &\;\;\;\;+ \Ex_{P^{t}}\left[(\taut^t_{\yt^t} )^2\right] 
        + 2 \lambda \Ex_{P^{[t-1]}}\left[\Vert\thetaa^t\Vert\right]\Ex_{P^t}\left[\vert\taut^t_{\yt^t}\vert\right].
    \end{align*}
    Using Propositions~\ref{ker1} and \ref{ker2}, we get 

    \begin{align*}   
    \Ex_{P^{[t]}}\left[\Vert \mathbf{g}((f^t,\thetaa^t),\xx^t,d^t)\Vert^2\right]&\leq 2R_1^2K^2 \left(\frac{1}{\gamma (t-1)}(1+\ln K)+1\right) + 2K^2 R_1^2\frac{(1+\ln K)}{\gamma} \\
    &+ 2K^2R_1^2 
    + 2K^2\left(\frac{1}{(t-1)\gamma}(1+\ln K)+1\right)+ \frac{2K^2}{\gamma}(1+\ln K) + 2K^2\\
    &\leq 4R_1^2K^2\left(\frac{1}{\gamma}(1+\ln K)+1\right)+4K^2\left(\frac{1}{\gamma}(1+\ln K)+1\right)\\
    &= 4(R_1^2+1)K^2\left(\frac{1}{\gamma}(1+\ln K)+1\right)\\
    &\leq \frac{8}{\gamma}(R_1^2+1)K^2(1+\ln K)
    \end{align*}
    
\end{proof}

\paragraph{\bf Proof of Theorem 3}

\begin{proof}
    \begin{align*}
       & \|(f^t, \thetaa^t)- (f, \thetaa)\|^2 - \| (f^{t+1}, \thetaa^{t+1}) - (f, \thetaa) \|^2 \\
       &= \| (f^{t+1}, \thetaa^{t+1}) - (f^t, \thetaa^t) \|^2 
        - 2\langle (f^{t+1}, \thetaa^{t+1}) - (f, \thetaa) , (f^t, \thetaa^t)- (f, \thetaa) \rangle\\
        &= -\eta_t^2 \| \mathbf{g}((f^t,\thetaa^t),\xx^t, d^t) \|^2
        + 2\eta_t \langle \mathbf{g}((f^t,\thetaa^t),\xx^t, d^t) , (f^t, \thetaa^t)- (f, \thetaa) \rangle
    \end{align*}     
    Taking Expectation on both sides w.r.t the random variable $P^{[t]}$.
    \begin{align*}
        \Ex_{P^{[t]}}[ \|(f^t, \thetaa^t)&- (f, \thetaa)\|^2 - \| (f^{t+1}, \thetaa^{t+1}) - (f, \thetaa) \|^2 ]
        \\
        &= -\eta_t^2 \Ex_{P^{[t]}}[ \| \mathbf{g}((f^t,\thetaa^t),\xx^t, d^t) \|^2 ]
        + 2\eta_t \Ex_{P^{[t]}}[ \langle \mathbf{g}((f^t,\thetaa^t),\xx^t, d^t) , (f^t, \thetaa^t)- (f, \thetaa) \rangle ]\\
        &= -\eta_t^2 \Ex_{P^{[t]}}[ \| \mathbf{g}((f^t,\thetaa^t),\xx^t, d^t) \|^2 ]
        + 2\eta_t \Ex_{P^{[t-1]}}[ \langle \Ex_{P^t}[\mathbf{g}((f^t,\thetaa^t),\xx^t, d^t)] , (f^t, \thetaa^t)- (f, \thetaa) \rangle ]\\
        &= -\eta_t^2 \Ex_{P^{[t]}}[ \| \mathbf{g}((f^t,\thetaa^t),\xx^t, d^t) \|^2 ]
        + 2\eta_t \Ex_{P^{[t-1]}}[ \langle \nabla L_{reg}^{IMC}((f^t,\thetaa^t),\xx^t, y^t), (f^t, \thetaa^t)- (f, \thetaa) \rangle ]
    \end{align*}
    where we have used the fact that $\Ex_{P^t}[\mathbf{g}(f^t, \thetaa^t,\xx^t, d^t)]=\nabla L_{reg}^{IMC}(f^t, \thetaa^t,\xx^t, y^t)]$. On rearranging this equation, we get
    \begin{multline}
        \Ex_{P^{[t-1]}}[\langle  \nabla L_{reg}^{IMC}((f^t, \thetaa^t),\xx^t, y^t)) , (f^t, \thetaa^t)- (f, \thetaa) \rangle] 
        \\
        = \frac{1}{2 \eta_t}\Ex_{P^{[t]}}[ \|(f^t, \thetaa^t)- (f, \thetaa)\|^2 - \| (f^{t+1}, \thetaa^{t+1}) - (f, \thetaa) \|^2] + \frac{\eta_t}{2} \Ex_{P^{[t]}}[\| \mathbf{g}(f^t, \thetaa^t,\xx^t, d^t) \|^2]
        \label{eq:kern-grad-bound}
    \end{multline}
    Using Strong convexity of $L_{reg}^{IMC}$, we know that
    \begin{align}
         \nonumber &L_{reg}^{IMC}((f, \thetaa),\xx^t, y^t)-L_{reg}^{IMC}((f^t, \thetaa^t),\xx^t, y^t) \geq  \frac{\lambda}{2} \|(f^t, \thetaa^t)- (f, \thetaa)\|^2 \\
         &\quad + \langle  \nabla L_{reg}^{IMC}((f^t, \thetaa^t),\xx^t, y^t) ,  (f, \thetaa)-(f^t, \thetaa^t) \rangle \label{eq:kern-strong-conv}
         \end{align}
   Using eq.(\ref{eq:kern-grad-bound}) and (\ref{eq:kern-strong-conv}), we get the following.
   \begin{align*}
        &\Ex_{P^{[t-1]}}[L_{reg}^{IMC}((f^t, \thetaa^t),\xx^t, y^t) -L_{reg}^{IMC}((f, \thetaa),\xx^t, y^t)+ \frac{\lambda}{2} \|(f^t, \thetaa^t)- (f, \thetaa)\|^2]\\
        &\leq \frac{1}{2 \eta_t}\Ex_{P^{[t]}}[\|(f^t, \thetaa^t)- (f, \thetaa)\|^2 - \| (f^{t+1}, \thetaa^{t+1}) - (f, \thetaa) \|^2] + \frac{\eta_t}{2} \Ex_{P^{[t]}}[\| \mathbf{g}((f^t, \thetaa^t),\xx^t, d^t) \|^2]
    \end{align*}
    Using Proposition~\ref{lemma:grad-norm-bound-kern}, we get the following.
    \begin{align*}
        &\Ex_{P^{[t-1]}}[L_{reg}^{IMC}((f^t, \thetaa^t),\xx^t, y^t) -L_{reg}^{IMC}((f, \thetaa),\xx^t, y^t)+ \frac{\lambda}{2} \|(f^t, \thetaa^t)- (f, \thetaa)\|^2]\\
        &\leq \frac{1}{2 \eta_t}\Ex_{P^{[t]}}[\|(f^t, \thetaa^t)- (f, \thetaa)\|^2 - \| (f^{t+1}, \thetaa^{t+1}) - (f, \thetaa) \|^2] +  \frac{4\eta_t}{\gamma}(R_1^2+1)K^2(1+\ln K)
    \end{align*}
    Summing from $t=1$ to $T$, we get
    \begin{align*}
        &\sum_{t=1}^T\Ex_{P^{[t-1]}}[L_{reg}^{IMC}((f^t, \thetaa^t),\xx^t, y^t) -L_{reg}^{IMC}((f, \thetaa),\xx^t, y^t)] \\
        &\leq \sum_{t=1}^T\Ex_{P^{[t]}}\left[\frac{\|(f^t, \thetaa^t)- (f, \thetaa)\|^2 - \| (f^{t+1}, \thetaa^{t+1}) - (f, \thetaa) \|^2}{2\eta_t}-\frac{\lambda}{2} \|(f^t, \thetaa^t)- (f, \thetaa)\|^2\right] \\
        &+  \frac{4}{\gamma}(R_1^2+1)K^2(1+\ln K)\sum_{t=1}^T\eta_t
    \end{align*}
    We can rewrite it as follows.
    \begin{align*}
&\Ex_{P^{[T]}}\left[\sum_{t=1}^T\left[L_{reg}^{IMC}((f^t, \thetaa^t),\xx^t, y^t) -L_{reg}^{IMC}((f, \thetaa),\xx^t, y^t)\right]\right] \\
        &\leq \Ex_{P^{[T]}}\left[\sum_{t=1}^T\left[\frac{\|(f^t, \thetaa^t)- (f, \thetaa)\|^2 - \| (f^{t+1}, \thetaa^{t+1}) - (f, \thetaa) \|^2}{2\eta_t}-\frac{\lambda}{2} \|(f^t, \thetaa^t)- (f, \thetaa)\|^2\right]\right] \\
        &+  \frac{4}{\gamma\lambda}(R_1^2+1)K^2(1+\ln K)\sum_{t=1}^T\frac{1}{t}\\
        &\leq -\lambda (T+1)\Ex_{P^{[T]}}\left[\|(f^{T+1}, \thetaa^{T+1})- (f, \thetaa)\|^2 \right]+\frac{4}{\gamma\lambda}(R_1^2+1)K^2(1+\ln K)\\
        &\leq \frac{4}{\gamma\lambda}(R_1^2+1)K^2(1+\ln K)(1+\ln T)\leq \frac{16}{\gamma\lambda}(R_1^2+1)K^2\ln K\ln T
    \end{align*}
  
\end{proof}

\section{Baseline Algorithms Used}
\subsection{PRank (Full Information Based Online Ordinal Regression)}
\label{sec:PRank}
We discuss the algorithm for linear classifiers. Which means, $f(\xx)=\ww\cdot\xx$. Thus, the parameters to be estimated are $\ww$ and $\thetaa$. We initialize with $\ww^0=\mathbf{0}$ and $\thetaa^0=\mathbf{0}$. Let $\ww^t,\thetaa^t$ be the estimates of the parameters at the beginning of trial $t$. Let at trial $t$, $\xx^t$ be the example observed, and $y^t$ be its label.
At round $t$, the algorithm performs stochastic gradient descent on $L^{H}_{reg}(f,\thetaa,\xx^t,y^t)$ to update the parameters. 
$\ww^{t+1}$ and $\thetaa^{t+1}$ are found as follows.
\begin{align*}
\ww^{t+1} &= \ww^t - \eta_t\nabla_{\ww} L^{H}_{reg}(\ww^t.\xx^t,\thetaa^t,\xx^t,y^t)=(1-\eta_t\lambda)\ww^t +\eta_t  \sum_{i\in [K-1]}\tau_i^t\xx^t \\
\theta^{t+1}_i &= \theta^t_i - \eta_t\frac{\partial  L^{H}_{reg}(\ww^t.\xx^t,\thetaa^t,\xx^t,y^t)}{\partial \theta_i}= (1-\eta_t\lambda)\theta_i^t - \eta_t \tau_i^t,\; \forall i \in [K-1]
\end{align*}
where $\tau_i^t=z_i^t \I[z_i^t(\ww^t.\xx^t - \theta_i^t)\leq 0]$. 
\begin{algorithm}[h]
\caption{Regularized-PRank (Full Information Based Online Ordinal Regression)}
\label{algo:PRank}
\begin{algorithmic}
\STATE {\bf Input: } Training Dataset $\mathcal{S}$, $\eta_t$\;
\STATE {\bf Initialize} Set $t=1$, $\ww^1=\zero$, $\theta_1^{1}=\ldots=\theta_{K-1}^1=0$, $\theta_K^1=\infty$\;
\FOR{$i\leftarrow 1$ to $T$}
\STATE Get example $\xx^t$ and its label $y^t$
\STATE Set $z_i^{t}=+1,\;i\in\{1,\ldots,y^t-1\}$
\STATE Set $z_i^{t}=-1,\;i\in\{y^t,\ldots,K-1\}$
\STATE Initialize $\tau_i^t=0,\;i\in [K]$
\FOR{$i\in [K]$}
\IF{$z_i^t(\ww^t.\xx^t -\theta_i^t)\leq 0$}
\STATE $\tau_i^t = z_i^t$
\ENDIF
\ENDFOR
\STATE $\ww^{t+1} = (1-\eta_t\lambda)\ww^t + \eta_t\sum_{i=1}^{K-1}\tau_i^t \xx^t$\;
\STATE $\theta_i^{t+1}=(1-\eta_t\lambda)\theta_i^t -\eta_t\tau_i^t,\; i=1\ldots K-1$\;
\ENDFOR
\STATE {\bf Output}: $h(\xx)=\min_{i\in [K]}\big{\{}i\;:\;\ww^{T+1}.\xx-\theta_i^{T+1} <0 \big{\}}$
\end{algorithmic}
\end{algorithm} 

\subsection{PRIL (Interval Label Based Online Ordinal Regression)}
\label{sec:PRIL}
We discuss the algorithm for linear classifiers. Which means, $f(\xx)=\ww\cdot\xx$. We initialize with $\ww^0=\mathbf{0}$ and $\thetaa^0=\mathbf{0}$. Let $\ww^t,\thetaa^t$ be the estimates of the parameters at the beginning of trial $t$. Let at trial $t$, $\xx^t$ be the example observed, and $\hat{y}^t = \min_{i\in [K]} \{i: \ww^t\cdot\xx^t - \theta_i^t \leq 0\}$ be the label predicted. Now, we get the directional feedback $d_t=\mathbb{I}[\hat{y}^t<y^t]$. If $d_t=1$, then we get to know that $\hat{y}^t<y^t$ or $y^t\in \{\hat{y}^t+1,\ldots,K\}$. Thus, in case $d_t=1$, we set $y_l^t=\hat{y}^t+1$ and $y_r^t=K$. On the other hand, if $d_t=0$, then we find that $\hat{y}^t\geq y^t$ or $y^t\in \{1,\ldots, \hat{y}^t\}$. Thus, in case $d_t=0$, we set $y_l^t=1$ and $y_r^t=\hat{y}^t$. We define $z_i^t,\;i\in [K]$ as follows:
\begin{align*}
    z_i^{t}&=\begin{cases}
        \mathbb{I}\left[i\in\{1,\ldots,\hat{y}^t\}\right]+0\times \mathbb{I}\left[i\in \{\hat{y}^t+1,\ldots,K-1\}\right], & \text{if }\hat{y}^t<y^t\\
0\times \mathbb{I}[i\in\{1,\ldots,\hat{y}^t-1\}]-\mathbb{I}[i\in \{\hat{y}^t,\ldots,K-1\}], & \text{if }\hat{y}^t\geq y^t
\end{cases}
\end{align*}
At round $t$, the algorithm performs stochastic gradient descent on $L^{IH}_{reg}(f,\thetaa,\xx^t,y_l^t,y_r^t)$ to update the parameters. 
$\ww^{t+1}$ and $\thetaa^{t+1}$ are found as follows.
\begin{align*}
\ww^{t+1} &= \ww^t - \eta_t\nabla_{\ww} L^{IH}_{reg}(\ww^t.\xx^t,\thetaa^t,\hat{y}^t+1,K)\mathbb{I}[\hat{y}^t<y^t]- \eta_t\nabla_{\ww} L^{IH}_{reg}(\ww^t.\xx^t,\thetaa^t,1,\hat{y}^t)\mathbb{I}[\hat{y}^t\geq y^t]\\
&= (1-\eta_t\lambda)\ww^t + \eta_t\mathbb{I}[\hat{y}^t<y^t]\sum_{i=1}^{\hat{y}^t}\tau_i^t \xx^t - \eta_t\mathbb{I}[\hat{y}^t\geq y^t]\sum_{i=\hat{y}^t}^{K-1}\tau_i^t \xx^t\\
\theta^{t+1}_i &= \theta^t_i - \eta_t\frac{\partial  L^{H}_{reg}(\ww^t.\xx^t,\thetaa^t,\xx^t,y^t)}{\partial \theta_i}\\
&=\begin{cases}
    (1-\eta_t\lambda)\theta_i^t - \eta_t \tau_i^t\mathbb{I}[i\in \{1,\ldots,\hat{y}^t\}],\; \text{if }\hat{y}^t<y^t\\
    (1-\eta_t\lambda)\theta_i^t + \eta_t \tau_i^t\mathbb{I}[i\in \{\hat{y}^t,\ldots,K-1\}],\; \text{if }\hat{y}^t\geq y^t
    \end{cases}
\end{align*}
where $\tau_i^t=z_i^t \I[z_i^t(\ww^t.\xx^t - \theta_i^t)\leq 0].$ Complete details of the PRIL approach are given in Algorithm~\ref{algo:PRIL}.
\begin{algorithm}[h]
\caption{Regularized-PRIL (Interval Label Based Online Ordinal Regression)}
\label{algo:PRIL}
\begin{algorithmic}
\STATE {\bf Input: } Training Dataset $\mathcal{S}$, $\eta$\;
\STATE {\bf Initialize} Set $t=1$, $\ww^1=\zero$, $\theta_1^{1}=\ldots=\theta_{K-1}^1=0$, $\theta_K^1=\infty$\;
\FOR{$i\leftarrow 1$ to $T$}
\STATE Get example $\xx^t$
\STATE Find $\hat{y}^t$ as $\hat{y}^t = \min_{i\in [K]} \{i: \ww^t\cdot\xx^t - \theta_i^t \leq 0\}$
\STATE Observe the directional feedback $d_t=\mathbb{I}[\hat{y}^t<y^t]$
\IF{$d_t=1$}
\STATE Set $z_i^{t}=+1,\;i\in\{1,\ldots,\hat{y}^t\}$
\STATE Set $z_i^t=0,\;i\in \{\hat{y}^t+1,\ldots,K-1\}$
\ELSE
\STATE Set $z_i^{t}=0,\;i\in\{1,\ldots,\hat{y}^t-1\}$
\STATE Set $z_i^t=-1,\;i\in \{\hat{y}^t,\ldots,K-1\}$
\ENDIF
\STATE $\ww^{t+1} = (1-\eta_t\lambda)\ww^t + \eta_t\mathbb{I}[\hat{y}^t<y^t]\sum_{i=1}^{\hat{y}^t}\mathbb{I}[w^t.x^t < \theta^t_i] \xx^t -\eta_t\mathbb{I}[\hat{y}^t\geq y^t]\sum_{i=\hat{y}^t}^{K-1}\mathbb{I}[w^t.x^t > \theta^t_i] \xx^t$
\STATE $\theta^{t+1}_i =\begin{cases}
    (1-\eta_t\lambda)\theta_i^t - \eta_t \mathbb{I}[w^t.x^t < \theta^t_i]\mathbb{I}[i\in \{1,\ldots,\hat{y}^t\}],\; \text{if }\hat{y}^t<y^t\\
    (1-\eta_t\lambda)\theta_i^t + \eta_t \mathbb{I}[w^t.x^t >\theta^t_i]\mathbb{I}[i\in \{\hat{y}^t,\ldots,K-1\}],\; \text{if }\hat{y}^t\geq y^t
    \end{cases}$
\ENDFOR
\STATE {\bf Output}: $h(\xx)=\min_{i\in [K]}\big{\{}i\;:\;\ww^{T+1}.\xx-\theta_i^{T+1} <0 \big{\}}$
\end{algorithmic}
\end{algorithm}

\end{document}